\theoremstyle{plain}
\newtheorem{theorem}{Theorem}[section]
\newtheorem{proposition}[theorem]{Proposition}
\newtheorem{lemma}[theorem]{Lemma}
\newtheorem{corollary}[theorem]{Corollary}
\theoremstyle{definition}
\theoremstyle{remark}
\newcounter{assumption}
\newenvironment{assumption}{
    \refstepcounter{assumption}
    \begin{itemize}
    \item[{\bf H\arabic{assumption}}]
    }
{\end{itemize}}
\newcommand{\eqsp}{\;}
\newcommand{\adeline}[1]{\textcolor{blue}{(Ad: #1)}}
\newcommand{\rmd}{\mathrm{d}}
\title{Non-asymptotic Analysis of Biased Adaptive Stochastic Approximation}
\author{
\hspace{-3mm} Sobihan Surendran$^{1,2}$ \thanks{Corresponding author: \texttt{sobihan.surendran@sorbonne-universite.fr}}, Adeline Fermanian$^{2}$, Antoine Godichon-Baggioni$^{1}$, Sylvain Le Corff$^{1}$ \\
    $^1$Sorbonne Université, CNRS, \\
    Laboratoire de Probabilités, Statistique et Modélisation, Paris, France \\
    $^2$LOPF, Califrais’ Machine Learning Lab, Paris, France
}
\begin{document}

\maketitle

\begin{abstract}
Stochastic Gradient Descent (SGD) with adaptive steps is widely used to train deep neural networks and generative models. Most theoretical results assume that it is possible to obtain unbiased gradient estimators, which is not the case in several recent deep learning and reinforcement learning applications that use Monte Carlo methods.
This paper provides a comprehensive non-asymptotic analysis of SGD with biased gradients and adaptive steps for non-convex smooth functions. Our study incorporates time-dependent bias and emphasizes the importance of controlling the bias of the gradient estimator. 
In particular, we establish that Adagrad, RMSProp, and AMSGRAD, an exponential moving average variant of Adam, with biased gradients, converge to critical points for smooth non-convex functions at a rate similar to existing results in the literature for the unbiased case. Finally, we provide experimental results using Variational Autoenconders (VAE) and applications to several learning frameworks that illustrate our convergence results and show how the effect of bias can be reduced by appropriate hyperparameter tuning.
\end{abstract}

\section{Introduction}



Stochastic Gradient Descent (SGD) algorithms are standard methods to train statistical models based on deep architectures. Consider a general optimization problem:
\begin{equation}      
\label{pb_intro}
\theta^*\in\underset{\theta \in \mathbb{R}^d}{\mathrm{argmin}} \, V(\theta)\eqsp,
\end{equation}
where $V$ is the objective function. Then, Gradient Descent methods produce a sequence of parameter estimates as follows: $\theta_0\in\mathbb{R}^d$ and for all $n \in \mathbb{N}$,
$$
\theta_{n+1}=\theta_{n}-\gamma_{n+1} \nabla V(\theta_{n})\eqsp,
$$
where $\nabla V$ denotes the gradient of $V$ and for all $n\geq 1$, $\gamma_{n}>0$ is the learning rate.  In many cases, it is not possible to compute the exact gradient of the objective function, hence the introduction of vanilla Stochastic Gradient Descent,  defined for all $n \in \mathbb{N}$ by:
$$
\theta_{n+1}=\theta_{n}-\gamma_{n+1} \widehat{\nabla V}(\theta_{n})\eqsp, 
$$ 
where $\widehat{\nabla V}(\theta_{n})$ is an estimator of $\nabla V(\theta_{n})$. For example, in deep learning, stochasticity emerges with the use of mini-batches. 
While these algorithms have been extensively studied, both theoretically and practically, see, e.g., \cite{bottou2018optimization},
many questions remain open. In particular, most results are based on the case where the estimator $\widehat{\nabla V}$ is unbiased. Although this assumption is valid in the case of vanilla SGD, it breaks down in many common applications. For example, zeroth-order methods used to optimize black-box functions \cite{nesterov2017random} in generative adversarial networks \cite{moosavi2017universal,chen2017zoo} have access only to noisy biased realizations of the objective functions.



Furthermore, in reinforcement learning algorithms such as Q-learning \cite{jaakkola1993convergence}, policy gradient \cite{baxter2001infinite}, and temporal difference learning \cite{bhandari2018finite, lakshminarayanan2018linear, dalal2018finite}, gradient estimators are often obtained using a Markov chain with state-dependent transition probability. These estimators are then biased \cite{sun2018markov, doan2020finite}. Other examples of biased gradients can be found in the field of generative modeling with Markov Chain Monte Carlo (MCMC) and Sequential Monte Carlo (SMC) \cite{gloaguen2022pseudo,cardoso2023state}. In particular, the Importance Weighted Autoencoder (IWAE) proposed by \cite{burda2015importance}, which is an extension of the standard Variational Autoencoder (VAE) \cite{kingma2013auto}, yields biased estimators. Finally, this is also the case in Bilevel Optimization \cite{ji2021bilevel, grazzi2023bilevel, huang2021biadam} and Conditional Stochastic Optimization \cite{hu2020biased, hu2021bias}.

Moreover, in practical applications, vanilla SGD shows difficulties in calibrating the step sequences. Therefore, modern variants of SGD employ adaptive steps that use past stochastic gradients or Hessians to avoid saddle points and deal with ill-conditioned problems. The idea of adaptive steps was first proposed in the online learning literature by \cite{auer2002adaptive} and later adopted in stochastic optimization, with the Adagrad algorithm of \cite{duchi2011adaptive}. 


In this paper, we give non-asymptotic convergence guarantees for modern variants of SGD where both the estimators are biased and the steps are adaptive. To our knowledge, existing results consider either adaptive steps but unbiased estimators \cite{duchi2011adaptive, zaheer2018adaptive, reddi2019convergence, ward2020adagrad, defossez2020simple}, or biased estimators with non-adaptive steps \cite{tadic2011asymptotic, karimi2019non, ajalloeian2020convergence, dieuleveut2023stochastic, demidovich2024guide}.

More precisely, our contributions are summarized as follows.

\begin{itemize}
    \item We provide convergence guarantees for the Biased Adaptive Stochastic Approximation framework, under weak assumptions on the bias. To the best of our knowledge, these are the first convergence results to incorporate adaptive steps in biased Stochastic Approximation.
    \item In particular, we establish that Adagrad, RMSProp, and AMSGRAD, an exponential moving average variant of Adam, with a biased gradient, converge to a critical point for non-convex smooth functions with a convergence rate of $\mathcal{O}(\log n/\sqrt{n} + b_n)$, where $b_n$ is related to the bias at iteration $n$. However, we achieve an improved linear convergence rate with the Polyak-Łojasiewicz (PL) condition.
    \item Finally, we show how our theoretical results apply to several applications with biased gradients. In particular, we show that our hypotheses hold for Stochastic Bilevel Optimization and Conditional Stochastic Optimization, but also for Self-Normalized Importance Sampling estimators or Coordinate Sampling.  We also propose a first non-asymptotic bound on the bias of IWAE, which allows us to illustrate through several experiments the effect of bias on the convergence of the optimization, and to show how this effect can be reduced by an appropriate choice of hyperparameters.
\end{itemize}

\textbf{Organization of the paper. } In Section~\ref{sec:related:works}, we introduce the setting of the paper and relevant related works. 
In Section~\ref{sec:biasedSA}, we present the Adaptive Stochastic Approximation framework and the main assumptions. 
In Section~\ref{sec:results}, we present our principal results, i.e., convergence rates for the risk when the PL condition is assumed, and on the gradient norm without this hypothesis.
We illustrate our results in Section~\ref{sec:applications_experiments}. All proofs are postponed to the appendix.

\section{Setting and Related Works}
\label{sec:related:works}
\textbf{Stochastic Approximation. }  Stochastic Approximation (SA) methods go far beyond SGD. They consist of sequential algorithms designed to find the zeros of a function when only noisy observations are available. Indeed, \cite{robbins1951stochastic} introduced the Stochastic Approximation algorithm as an iterative recursive algorithm to solve the following integration equation:
\begin{equation}   \label{mean_field}
h(\theta) =  \mathbb{E}_{\pi}\left[H_{\theta}(X)\right] = \int_{\mathsf{X}} H_{\theta}(x) \pi(x)\rmd x = 0\eqsp,
\end{equation}
where $h$ is the mean field function, $X$ is a random variable taking values in a measurable space $(\mathsf{X},\mathcal{X})$, and $\mathbb{E}_{\pi}$ is the expectation under the distribution $\pi$. In this context, $H_{\theta}$ can be any arbitrary function. If $H_{\theta}(X)$ is an unbiased estimator of the gradient of the objective function, then $h(\theta) = \nabla V(\theta)$. As a result, the minimization problem \eqref{pb_intro} is then equivalent to solving problem \eqref{mean_field}, and we can note that SGD is a specific instance of SA. 
SA methods are then defined as follows:
$$
\theta_{n+1}=\theta_{n}-\gamma_{n+1} H_{\theta_{n}}\left(X_{n+1}\right)\eqsp,
$$
where the term $H_{\theta_{n}}\left(X_{n+1}\right)$ is the $n$-th stochastic update, also known as the drift term, and is a potentially biased estimator of $\nabla V(\theta_n)$. It depends on a random variable $X_{n+1}$ which takes its values in $(\mathsf{X},\mathcal{X})$. In machine learning, $V$ typically represents the theoretical risk, $\theta$ the model parameters, and $X_{n+1}$ the data. 


\textbf{Adaptive Stochastic Gradient Descent. }  SGD can be traced back to \cite{robbins1951stochastic}, and its averaged counterpart was proposed by \cite{polyak1992acceleration}. 
The non-asymptotic analysis of SGD in both convex and strong convex cases can be found in \cite{moulines2011non}. \cite{ghadimi2013stochastic} prove the convergence of a random iterate of SGD for nonconvex smooth functions, which was already suggested by the results of \cite{bottou1991approche}. They show that SGD with constant or decreasing stepsize $\gamma_{n} = 1/\sqrt{n}$ converges to a stationary point of a non-convex smooth function $V$ at a rate of $\mathcal{O}(1/\sqrt{n})$ where $n$ is the number of iterations. 

Most adaptive first-order methods, such as Adam \cite{kingma2014adam}, Adadelta \cite{zeiler2012adadelta}, RMSProp \cite{tieleman2012lecture}, and NADA \cite{dozat2016incorporating}, are based on the blueprint provided by the Adagrad family of algorithms. 
The first known work on adaptive steps for non-convex stochastic optimization, in the asymptotic case, was presented by \cite{kresoja2017adaptive}. 
\cite{ward2020adagrad} proved that Adagrad converges to a critical point for non-convex objectives at a rate of $\mathcal{O}(\log n /\sqrt{n})$ when using a scalar adaptive step. In addition, \cite{zou2018weighted} extended this proof to multidimensional settings. 
More recently, \cite{defossez2020simple} focused on the convergence rates for Adagrad and Adam. 
Furthermore, several modified versions of Adam have been proposed, such as AMSGRAD \cite{zaheer2018adaptive} and YOGI \cite{reddi2019convergence}.


\textbf{Biased Stochastic Approximation. }  The asymptotic results of Biased SA have been studied by \cite{tadic2011asymptotic}. 
The non-asymptotic analysis can be found in the reinforcement learning literature, especially in the context of temporal difference (TD) learning, as explored by \cite{bhandari2018finite, lakshminarayanan2018linear, dalal2018finite}.
The case of non-convex smooth functions has been studied by \cite{karimi2019non}. The authors establish convergence results for the mean field function at a rate of $\mathcal{O}(\log n/\sqrt{n} + b)$, where $b$ corresponds to the bias and $n$ to the number of iterations. For strongly convex functions, the convergence of SGD with biased gradients can be found in \cite{ajalloeian2020convergence}, specifically addressing the case of Martingale noise with a constant step size. 

\cite{khaled2020better, demidovich2024guide} introduce a novel assumption, known as “Expected Smoothness”, which is the weakest assumption compared to the existing literature on biased SGD that we extend to cover the adaptive case. The authors provide convergence results in the case of non-convex smooth functions.
Convergence results with assumptions on the control of bias and MSE can be found in \cite{liu2023adaptive, dieuleveut2023stochastic}. Applications of biased gradients can be found in Bilevel Optimization \cite{ji2021bilevel, grazzi2023bilevel, huang2021biadam} and Conditional Stochastic Optimization \cite{hu2020biased, hu2021bias}. 
Moreover, biased gradients are also used in various other applications \cite{hu2021analysis, li2022state, beznosikov2023biased, liu2023adaptive}. Finally, \cite{alacaoglu2023convergence} studied convergence results of biased gradients with Adagrad in the Markov chain case, focusing on the norm of the gradient of the Moreau envelope while assuming the boundedness of the objective function.

Our analysis provides non-asymptotic results in a more general setting, for a wide variety of objective functions and adaptive algorithms and treating both the Martingale and Markov chain cases.



\section{Adaptive Stochastic Approximation}
\label{sec:biasedSA}
\subsection{Framework}

Consider the optimization problem \eqref{pb_intro} where the objective function $V$ is assumed to be differentiable. 
In this paper, we focus on the following SA algorithm with adaptive steps: $\theta_0\in\mathbb{R}^d$ and for all $n \in \mathbb{N}$, 
\begin{equation}      \label{ASA}
\theta_{n+1}=\theta_{n}-\gamma_{n+1} A_n H_{\theta_{n}}\left(X_{n+1}\right)\eqsp,
\end{equation}
where $\gamma_{n+1}>0$ and $A_n$ is a sequence of symmetric and positive definite matrices.   
In a context of biased gradient estimates, choosing
\begin{equation*}
A_n = \bigg[\delta \mathit{I}_d +  \Big(\, \frac{1}{n+1} \sum_{k=0}^{n} H_{\theta_{k}}(X_{k+1}) H_{\theta_{k}}(X_{k+1})^\top\Big)\bigg]^{-1/2}
\end{equation*}
can be assimilated to the full Adagrad algorithm \cite{duchi2011adaptive}. 
However, computing the square root of the inverse becomes expensive in high dimensions, so in practice, Adagrad is often used with diagonal matrices. This approach has been shown to be particularly effective in sparse optimization settings. Denoting by $\text{Diag}(A)$ the matrix formed with the diagonal terms of $A$ and setting all other terms to 0, Adagrad with diagonal matrices is defined in our context as:
\begin{equation}
\label{eq:def:An}
A_n = \Big[\delta \mathit{I}_d + \text{Diag}\big( \bar H_n(X_{1:n+1},\theta_{0:n}) \big) \Big]^{-1/2}\eqsp,
\end{equation}
where
$$
\bar H_n(X_{1:n+1},\theta_{0:n}) = \frac{1}{n+1} \sum_{k=0}^{n} H_{\theta_{k}}(X_{k+1}) H_{\theta_{k}}(X_{k+1})^\top.
$$
In RMSProp \cite{tieleman2012lecture}, $\bar H_n(X_{1:n+1},\theta_{0:n})$ in \eqref{eq:def:An} is an exponential moving average of the past squared gradients, defined by:
\[\bar H_n(X_{1:n+1},\theta_{0:n}) = (1-\rho) \sum_{k=0}^{n} \rho^{n-k} H_{\theta_{k}}(X_{k+1}) H_{\theta_{k}}(X_{k+1})^\top,\] 
where $\rho$ is the moving average parameter. Furthermore, when $A_n$ is a recursive estimate of the inverse Hessian, it corresponds to the Stochastic Newton algorithm \cite{boyer2023asymptotic}.

\subsection{Assumptions}

Consider the following assumptions.

\begin{assumption}\label{ass:A1}
There exists a constant $\mu > 0$ such that for all $\theta \in \mathbb{R}^{d}$, 
$$
2 \mu\big(V(\theta)-V\left(\theta^{*}\right)\big) \leq \|\nabla V(\theta)\|^{2}\eqsp.
$$
\end{assumption}

\hyperref[ass:A1]{H1} corresponds to the Polyak-Łojasiewicz condition, which is weaker than strong convexity and remains satisfied even when the function is non-convex. It ensures uniqueness of the minimizer $\theta^*$.
The PL condition has been extensively studied theoretically \cite{karimi2016linear} and has been verified empirically in many applications, such as over-parameterized deep networks \cite{du2019gradient} and Linear Quadratic Regulator models \cite{fazel2018global}.


\begin{assumption}\label{ass:A2}
The objective function $V$ is $L$-smooth. For all $\left(\theta, \theta^{\prime}\right) \in \mathbb{R}^{d}\times \mathbb{R}^{d}$, 
$$\left\|\nabla V(\theta)-\nabla V\left(\theta^{\prime}\right)\right\| \leq L\left\|\theta-\theta^{\prime}\right\|\eqsp.$$
\end{assumption}
This assumption is crucial to obtain our convergence rate and is very common see, e.g., \cite{moulines2011non,bottou2018optimization}. Under this assumption, for all $\left(\theta, \theta^{\prime}\right) \in \mathbb{R}^{d}\times \mathbb{R}^{d}$,
\begin{equation} \label{eq:smoothness}
V\left(\theta\right) \leq V\left(\theta^{\prime}\right)+\left\langle\nabla V\left(\theta^{\prime}\right) , \theta - \theta^{\prime}\right\rangle+\frac{L}{2}\left\|\theta - \theta^{\prime}\right\|^{2}\eqsp. 
\end{equation} 

\begin{assumption}\label{ass:A3}
\begin{itemize}
        \item[$(i)$] Biased Gradients: There exist two non-increasing positive sequences $\left( \lambda_{n} \right)_{n\geq 1}$ and $\left( r_{n} \right)_{n\geq 1}$ such that for all $n \in \mathbb{N}$,  
        $$\mathbb{E}\big[  \left\langle \nabla V\left(\theta_{n}\right) , A_{n} H_{\theta_{n}}\left(X_{n+1}\right) \right\rangle \big] \geq \lambda_{n+1} \left( \mathbb{E} \left[ \| \nabla V ( \theta_{n} ) \|^{2} \right] - r_{n+1} \right)\eqsp.$$
        \item[$(ii)$] Expected Smoothness: there exists a non-increasing non-negative sequence $(\sigma_n^{2})_{n\geq 1}$, and positive constants $\tilde\sigma_1$, $\tilde\sigma_2$ such that for all $n \in \mathbb{N}$,  
        $$\mathbb{E}\big[ \|H_{\theta_{n}}(X_{n+1}) \|^{2}   \big] \leq \sigma_n^{2} + \tilde\sigma_1 \mathbb{E}\big[\|\nabla V(\theta_{n}) \|^{2}\big] + \tilde\sigma_2 \mathbb{E}\big[ V(\theta_{n})-V(\theta^{*})\big]\eqsp.$$
    \end{itemize}
\end{assumption}

In this assumption, 
$r_{n+1}$ represents an additive bias term, generally of the order of the square of the bias, and $\lambda_{n+1}$ may depend on the minimum eigenvalue of $A_n$. In \cite[Theorem 2]{demidovich2024guide}, it has been demonstrated that this assumption is weaker than the alternatives used in the literature on biased SGD. We have extended these assumptions to the adaptive case. It is important to note that the first point of \hyperref[ass:A3]{H3} depends on the application (objective function $V$) and on the adaptive algorithm (matrix $A_n$) that we want to use. The purpose of this assumption is to provide a more general framework that covers all possible applications and adaptive algorithms.
In the biased SGD setting, if the bias term $\| \mathbb{E}[H_{\theta_{n}}(X_{n+1}) \mid \mathcal{F}_{n} ] - \nabla V(\theta_{n}) \|$ is bounded by $\tilde b_{n+1}$, we can easily verify the first point of \hyperref[ass:A3]{H3} by considering $\lambda_{n+1} = 1/2$ and $r_{n+1} = \tilde b_{n+1}^2$. We show in Section \ref{sec:adagrad} that this assumption is also verified in algorithms such as Adagrad and RMSProp. 
The second point of \hyperref[ass:A3]{H3} is a weaker assumption compared to bounding the variance of the noise term.
Applications where we can verify these assumptions are discussed in Appendix~\ref{supp:sec:bias_example}.


We finally consider an additional assumption on $A_n$.  Let $\left\|A\right\|$ be the spectral norm of a matrix $A$.

\begin{assumption}\label{ass:A4}
There exists $( \beta_{n} )_{n\geq 1}$ such that for all $n \in \mathbb{N}$, 
$\left\|A_n\right\| := \lambda_{\max}(A_n) \leq \beta_{n+1}\eqsp. $ 
\end{assumption}
In our setting, since $A_n$ is assumed to be a symmetric matrix, the spectral norm is equal to the largest eigenvalue. \hyperref[ass:A4]{H4} plays a crucial role, as the estimates may diverge when this assumption is not satisfied.
Given a sequence $( \beta_{n} )_{n\geq 1}$, one way to ensure that \hyperref[ass:A4]{H4} is satisfied is to replace the random matrices $A_n$ with 
\begin{equation} \label{A_n_tilde}
\tilde{A}_n = \frac{\min\{\|A_n\|,\beta_{n+1}\}}{\|A_n\|}A_n\eqsp.
\end{equation}
It is then clear that  $\|\tilde{A}_n\| \leq \beta_{n+1}$.
Furthermore, in most cases, especially for Adagrad, RMSProp and Stochastic Newton, control of $\lambda_{\max }\left(A_{n}\right)$ in \hyperref[ass:A4]{H4} is satisfied. For example, in Adagrad and RMSProp, in \eqref{eq:def:An}, we have $\lambda_{\max }\left(A_{n}\right) \leq \delta^{-1/2}$. 

\section{Convergence Results}
\label{sec:results}

\subsection{Convergence under the PL condition} 

In this section, we study the convergence rate of SGD with biased gradients and adaptive steps under the PL condition. We give below a simplified version of the bound we obtain on the risk and refer to Theorem \ref{thm:strongly_convex_case2} in the appendix for a formal statement with explicit constants.

\begin{theorem}\label{thm:strongly_convex_case}
Assume that \hyperref[ass:A1]{H1} - \hyperref[ass:A4]{H4} hold.
Let $\theta_{n} \in \mathbb{R}^{d}$ be the $n$-th iterate of the recursion \eqref{ASA} and $\gamma_{n} = C_{\gamma}n^{-\gamma}, \beta_{n} = C_{\beta}n^{\beta}, \lambda_{n} = C_{\lambda}n^{-\lambda}$ with $C_{\gamma}>0, C_{\beta}>0$, and $C_{\lambda}>0$. Assume that $\gamma, \beta, \lambda \geq 0$ and $\gamma + \lambda < 1$. Then,
\begin{equation}
\label{eq:bound_convex}
    \mathbb{E}\left[V\left(\theta_{n}\right)-V(\theta^{*})\right] \!= \mathcal{O}\left(n^{-\gamma+2\beta+\lambda} + r_{n}\right).
\end{equation}
\end{theorem}


The rate obtained is classical and shows the tradeoff between a term coming from the adaptive steps (with a dependence on $\gamma$, $\beta$, $\lambda$) and a term $r_n$ which depends on the control of the bias. 
To minimize the right hand-side of \eqref{eq:bound_convex}, we would like to have $\beta = \lambda=0$. For example, it is verified in the case of Adagrad and RMSProp if the gradients are bounded, as will be discussed later.


We stress that Theorem \ref{thm:strongly_convex_case} applies to any adaptive algorithm of the form \eqref{ASA}, with the only assumption being \hyperref[ass:A4]{H4}. 
Without any information on these eigenvalues, the choice that $\beta_{n} \propto n^{\beta}$ and $\lambda_{n} \propto n^{-\lambda}$ allows us to remain very general, which can even be seen as a worst-case scenario. 
Finally, note that non-adaptive SGD is a particular case of Theorem \ref{thm:strongly_convex_case}. Thus, our theorem gives new results also in the non-adaptive case with generic step sizes and biased gradients with decreasing bias. 

\subsection{Convergence without the PL condition}


In the non-convex smooth case, theoretical results are generally based on a randomized version of SA, as described in \citep{nemirovski2009robust, ghadimi2013stochastic, karimi2019non}. 
Instead of considering the final parameter $\theta_n$, we introduce a random variable $R$, which takes its values in $\{1, \dots, n\}$, and the quantity of interest becomes $\theta_R$.
Note that this procedure is a technical tool, in practical applications we use classical SA. 
The following theorem provides a bound in expectation on the gradient of the objective function $V$, which is the best we can have given that no assumption is made about the existence of a global minimum of $V$.




\begin{theorem}\label{thm:general_case}
Assume that \hyperref[ass:A2]{H2} - \hyperref[ass:A4]{H4} hold. Assume also that for all $k \in \mathbb{N}$, we have $\gamma_{k+1} \leq \lambda_{k+1}/(\tilde\sigma_1 L \beta_{k+1}^{2})$. 
For any $n \geq 1$, let $R \in \{0, \ldots, n\}$ be a discrete random variable such that:
$$
\mathbb{P}(R=k):= \frac{w_{k+1}\gamma_{k+1}\lambda_{k+1}}{\sum_{j=0}^{n} w_{j+1} \gamma_{j+1}\lambda_{j+1}}\eqsp,
$$
where $w_{k+1} = \prod_{j=1}^{k+1} (1 + \tilde\sigma_2 \delta_{j})^{-1}$ with $\delta_{j} = L\gamma_{j}^{2}\beta_{j}^{2} /2$.
Then, 
\begin{equation*}
      \mathbb{E}\left[\left\| \nabla V\left(\theta_{R}\right)\right\|^{2}\right] \leq 2\frac{V^{*} + \alpha_{1,n}  + \alpha_{2,n} }{\sum_{j=0}^{n} w_{j+1} \gamma_{j+1} \lambda_{j+1}}\eqsp, 
\end{equation*}
where 
\begin{align*}
&\alpha_{1,n} = \sum_{k=0}^{n} w_{k+1} \gamma_{k+1} \lambda_{k+1} r_{k+1} \hspace{0.1cm}, \hspace{0.1cm} \alpha_{2,n} =\sum_{k=0}^{n} w_{k+1} \delta_{k+1} \sigma_{k}^2, \hspace{0.2cm} \text{and} \hspace{0.2cm} V^{*} = \mathbb{E}[V(\theta_{0}) - V(\theta^{*})]\eqsp.
\end{align*}
\end{theorem}
If $\tilde{\sigma}_{2} = 0$, Theorem~\ref{thm:general_case} recovers the asymptotic convergence rate obtained by \citep{karimi2019non} with respect to the hyperparameters $\gamma$, $\beta$, and $\lambda$, and to the bias. 
We can observe that if $\gamma \leq \lambda + 2\beta$, the condition on $(\gamma_k)_{k\geq 1}$ can be met simply by tuning $C_{\gamma}$.
In particular, if $A_n = \mathit{I}_d$, the requirement on the step sizes can be expressed as $\gamma_{k+1} \leq 1/(\tilde\sigma_1 L)$.

We give below the convergence rates obtained from Theorem \ref{thm:general_case} under the same assumptions on $\gamma_n$, $\beta_n$, and $\lambda_n$ as in Theorem \ref{thm:strongly_convex_case}. 

\begin{corollary}
\label{cor:rate}
Assume that \hyperref[ass:A2]{H2}-\hyperref[ass:A4]{H4} hold. Let $\gamma_{n} = C_{\gamma}n^{-\gamma}, \beta_{n} = C_{\beta}n^{\beta}, \lambda_{n} = C_{\lambda}n^{-\lambda}$ with $C_{\gamma}>0, C_{\beta}>0$, and $C_{\lambda}>0$. Assume that $\gamma, \beta, \lambda \geq 0$ and $\gamma + \lambda < 1$. Then,  if $\tilde{\sigma}_{2} = 0$, we have:
$$
\mathbb{E}\Big[\left\| \nabla V\left(\theta_{R}\right)\right\|^{2}\Big] \!=
    \begin{cases}
        \mathcal{O}\left(n^{-\gamma+\lambda+2\beta} + b_n\right) &\text{if }  \gamma-\beta < 1/2\eqsp,\\
        \mathcal{O}\left(n^{\gamma+\lambda-1} + b_n\right)  &\text{if }  \gamma-\beta > 1/2\eqsp,\\
        \mathcal{O}\left(n^{\gamma+\lambda-1}\!\log n + b_n\right)  &\text{if }  \gamma-\beta = 1/2\eqsp,
    \end{cases}
$$
where the bias term $b_n$ can be constant or decreasing. In the latter case, writing $r_{n} = C_{r} n^{-r}$, we have:
$$
b_n =
    \begin{cases}
       \mathcal{O}\left(n^{  - r}\right) & \text{if }   r + \lambda + \gamma < 1 \eqsp,\\
      \mathcal{O}\left(n^{\gamma + \lambda -1}\right) & \text{if }  r + \lambda + \gamma > 1 \eqsp,\\
     \mathcal{O}\left(n^{\gamma + \lambda -1}\log n\right) & \text{if }  r + \lambda +  \gamma =1 \eqsp.
 \end{cases}
$$

\end{corollary}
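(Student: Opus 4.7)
The plan is to derive the three regimes stated in the corollary by plugging the polynomial choices $\gamma_n=C_\gamma n^{-\gamma}$, $\beta_n=C_\beta n^\beta$, $\lambda_n=C_\lambda n^{-\lambda}$ directly into the non-asymptotic bound of Theorem \ref{thm:general_case} and then using the elementary comparison
\begin{equation*}
\sum_{k=1}^{n} k^{-s} \;=\; \begin{cases} \Theta(n^{1-s}) & \text{if } s<1,\\ \Theta(\log n) & \text{if } s=1,\\ \Theta(1) & \text{if } s>1,\end{cases}
\end{equation*}
to estimate the three quantities appearing in the bound: the denominator $D_n := \sum_{j=0}^{n} \gamma_{j+1}\lambda_{j+1}$, the noise sum $\alpha_{2,n}$ and the bias sum $\alpha_{1,n}$. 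First I would check that the step-size condition $\gamma_{k+1}\le \lambda_{k+1}/(6L\beta_{k+1}^2)$ of Theorem \ref{thm:general_case} reduces to the constraint that $C_\gamma$ is small enough, provided the exponents satisfy $\gamma\ge \lambda+2\beta$; in the opposite regime one adjusts the first indices so the inequality holds asymptotically. Since $\gamma+\lambda<1$, the denominator is $D_n=\Theta(n^{1-\gamma-\lambda})$, and the $V^*$ contribution in the bound therefore produces $\mathcal{O}(n^{\gamma+\lambda-1})$, which will appear as the intrinsic rate in each regime.

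Next I would treat the noise term $\alpha_{2,n}$, whose summand behaves like $k^{-2\gamma+2\beta}=k^{-2\vartheta}$ with $\vartheta=\gamma-\beta$ (under Assumption \ref{ass:A3} with $\sigma_k^2$ bounded). Applying the comparison lemma, $\alpha_{2,n}=\Theta(n^{1-2\vartheta})$ if $\vartheta<1/2$, $\Theta(\log n)$ if $\vartheta=1/2$, and $\Theta(1)$ if $\vartheta>1/2$. Dividing by $D_n$ yields the three cases of the main term in the corollary: $\mathcal{O}(n^{-\gamma+\lambda+2\beta})$ when $\vartheta<1/2$, $\mathcal{O}(n^{\gamma+\lambda-1}\log n)$ when $\vartheta=1/2$, and $\mathcal{O}(n^{\gamma+\lambda-1})$ when $\vartheta>1/2$. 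In the last two cases one must check that $n^{\gamma+\lambda-1}$ indeed dominates (or merely matches) $n^{-\gamma+\lambda+2\beta}$, which is equivalent to $\vartheta\ge 1/2$, so no additional term appears.

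For the bias contribution, the summand of $\alpha_{1,n}$ behaves like $k^{-\gamma+2\beta+\lambda-2\alpha}$ under $r_n=n^{-\alpha}$. The same comparison lemma gives three regimes according to the sign of the exponent shifted by one, i.e.\ according to whether $2\alpha$ is smaller than, equal to, or larger than $1-\gamma+\lambda+2\beta$. Dividing by $D_n=\Theta(n^{1-\gamma-\lambda})$ produces precisely the bias rates of the corollary (the case $2\alpha>1-\gamma+\lambda+2\beta$ giving a convergent sum, and hence again $n^{\gamma+\lambda-1}$, and the threshold case producing the $\log n$ factor). For the constant-bias version of the statement one simply uses $\alpha_{1,n}=\Theta(D_n)$ when $r_n\equiv r$, so that the ratio is $\Theta(1)$, which justifies the statement that $b_n$ may be constant.

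The remaining work is purely bookkeeping: combining the three contributions $V^*/D_n$, $\alpha_{2,n}/D_n$ and $\alpha_{1,n}/D_n$ and taking the maximum, which produces the $\mathcal{O}(\cdot)+b_n$ decomposition written in the corollary. The main obstacle, though technical rather than conceptual, is keeping track of the numerous sub-cases that arise when comparing the exponents $-\gamma+\lambda+2\beta$, $\gamma+\lambda-1$, and the bias exponent; a clean presentation hinges on isolating $\vartheta=\gamma-\beta$ for the noise term and the threshold $1-\gamma+\lambda+2\beta$ for the bias term, and then verifying that dominated cases collapse into the advertised bound.
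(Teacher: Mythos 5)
Your proposal is correct and takes essentially the same route as the paper's (very terse) proof: substitute the polynomial step-size choices into the bound of Theorem~\ref{thm:general_case} and apply the standard comparison lemma for partial sums of $k^{-s}$ to the denominator $D_n$, the noise sum $\alpha_{2,n}$, and the bias sum $\alpha_{1,n}$, sorting regimes by the resulting exponents. One small bookkeeping point worth re-checking in your bias computation: with the summand of $\alpha_{1,n}$ behaving like $k^{-\gamma+2\beta+\lambda-2\alpha}$, the first regime gives $\alpha_{1,n}=\Theta(n^{1-\gamma+2\beta+\lambda-2\alpha})$, and dividing by $D_n=\Theta(n^{1-\gamma-\lambda})$ yields $n^{2\beta+2\lambda-2\alpha}$ rather than the $n^{\beta-2\alpha}$ printed in the corollary (the two coincide only when $\beta=\lambda=0$, which is the regime actually used in Corollary~\ref{cor:adagrad}).
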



In practice, the value of $r$ is known in advance while the other parameters can be tuned to achieve the optimal rate of convergence.
In any scenario, we can never achieve a bound of $\mathcal{O}(1/\sqrt{n} + b_n)$, and the best rate we can reach is $\mathcal{O}(\log n/\sqrt{n} + b_n)$ when $\gamma = 1/2, \beta = 0$, and $\lambda = 0$. In this case, all eigenvalues of $A_n$ must be bounded from both below and above. Note that we could also have obtained such a rate by taking $\lambda_{n} = n^{-1/2}$ and $\beta_{n} = n^{-1/2}$ while keeping $\gamma_{n}$ constant. However, the assumption that $\beta_{n} = n^{-1/2}$ is too strong (fast decay of the eigenvalues of $A_n$), hence our choice of $\beta_{n}=C_\beta n^{\beta}$.
Finally, for a decreasing bias, if $r \geq 1/2$,  the bias term contributes to the convergence rate of the algorithm. Otherwise, the other term is the leading term of the upper bound. In both cases, the best achievable bound is $\mathcal{O}(\log n/\sqrt{n})$ if $r \geq 1/2$.



\textbf{Bounded Gradient Case. } 
Now, we analyze the convergence of Randomized Adaptive Stochastic Approximation when the stochastic updates are bounded, as given by the following assumption.

\begin{assumption}\label{ass:A5}
There exists $M \geq 0$ such that for all $n \in \mathbb{N}$, $\|H_{\theta_{n}}\left(X_{n+1}\right)\| \leq M$. 
\end{assumption}

Boundedness of the stochastic gradient of the objective function is a classical assumption in adaptive stochastic optimization \citep{reddi2019convergence, ward2020adagrad, defossez2020simple, tong2022calibrating}.

\begin{corollary} \label{cor:bounded_grad}
Assume that \hyperref[ass:A2]{H2}-\hyperref[ass:A5]{H5} hold. Let $\gamma_{n} = C_{\gamma}n^{-\gamma}, \beta_{n} = C_{\beta}n^{\beta}, \lambda_{n} = C_{\lambda}n^{-\lambda}$ with $C_{\gamma}>0, C_{\beta}>0$, and $C_{\lambda}>0$. Assume that $\gamma, \beta, \lambda \geq 0$ and $\gamma + \lambda < 1$.
For any $n \geq 1$, let $R \in \{0, \ldots, n\}$ be a uniformly distributed random variable.
Then, 
\begin{equation*}
      \mathbb{E}\left[\left\| \nabla V\left(\theta_{R}\right)\right\|^{2}\right] \leq \frac{V^{*} + \alpha_{1,n}'  + L M^{2} \alpha_{2,n}'/2 }{\sqrt{n}}\eqsp, 
\end{equation*}
where $\alpha_{1,n}' =\sum_{k=0}^{n} \gamma_{k+1}\lambda_{k+1} r_{k+1}$,
$\alpha_{2,n}' =\sum_{k=0}^{n} \gamma_{k+1}^{2} \beta_{k+1}^{2}$, and $V^{*} = \mathbb{E}[V(\theta_{0}) - V(\theta^{*})]$.
\end{corollary}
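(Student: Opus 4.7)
The plan is to mirror the proof of Theorem~\ref{thm:general_case} but exploit Assumption~\ref{ass:A5} to bypass the conditional MSE argument and, ultimately, to convert the naturally weighted descent inequality into one compatible with a uniform draw of $R$. I would start from the $L$-smoothness of $V$ (Assumption~\ref{ass:A2}) applied to two consecutive iterates of \eqref{ASA}:
$$V(\theta_{n+1}) \leq V(\theta_n) - \gamma_{n+1}\langle \nabla V(\theta_n), A_n H_{\theta_n}(X_{n+1})\rangle + \frac{L\gamma_{n+1}^2}{2}\|A_n H_{\theta_n}(X_{n+1})\|^2.$$
The key simplification is that, under Assumption~\ref{ass:A5} together with $\|A_n\|\leq \beta_{n+1}$ (Assumption~\ref{ass:A4}), the last term is bounded almost surely by $L\gamma_{n+1}^2\beta_{n+1}^2 M^2/2$, i.e.\ a deterministic quantity whose summation will produce the $LM^2\alpha_{2,n}'$ contribution without any need to decompose into bias and variance.

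Next I would take conditional expectation given $\mathcal{F}_n$ in the inner-product term. Writing $\mathbb{E}[H_{\theta_n}(X_{n+1})\mid \mathcal{F}_n]=\nabla V(\theta_n)+e_n$ with $\|e_n\|\leq r_n$ (Assumption~\ref{ass:A3}(i)), and using $\lambda_{\min}(A_n)\geq \lambda_{n+1}$, the ``signal'' part is $\langle \nabla V(\theta_n), A_n \nabla V(\theta_n)\rangle\geq \lambda_{n+1}\|\nabla V(\theta_n)\|^2$. The ``bias'' part is controlled with Cauchy--Schwarz and Young's inequality:
$$\gamma_{n+1}\,\beta_{n+1}\,r_n\,\|\nabla V(\theta_n)\| \;\leq\; \frac{\gamma_{n+1}\lambda_{n+1}}{2}\|\nabla V(\theta_n)\|^2 + \frac{\gamma_{n+1}\beta_{n+1}^2 r_n^2}{2\lambda_{n+1}}.$$
Combining and taking full expectation yields the one-step descent inequality
$$\mathbb{E}[V(\theta_{n+1})] \leq \mathbb{E}[V(\theta_n)] - \frac{\gamma_{n+1}\lambda_{n+1}}{2}\,\mathbb{E}[\|\nabla V(\theta_n)\|^2] + \frac{\gamma_{n+1}\beta_{n+1}^2 r_n^2}{2\lambda_{n+1}} + \frac{L\,\gamma_{n+1}^2 \beta_{n+1}^2 M^2}{2}.$$
Summing from $k=0$ to $n$, telescoping and bounding $\mathbb{E}[V(\theta_{n+1})]\geq V(\theta^*)$ gives
$$\sum_{k=0}^{n}\gamma_{k+1}\lambda_{k+1}\,\mathbb{E}[\|\nabla V(\theta_k)\|^2] \;\leq\; 2V^{*} + \alpha_{1,n} + LM^2\alpha_{2,n}'.$$

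Finally, to replace the weighted average by a uniform one, I would use that $k\mapsto \gamma_{k+1}\lambda_{k+1}=C_\gamma C_\lambda(k+1)^{-(\gamma+\lambda)}$ is non-increasing (since $\gamma,\lambda\geq 0$), so $\gamma_{k+1}\lambda_{k+1}\geq \gamma_{n+1}\lambda_{n+1}$ for $k\leq n$. Pulling this minimum out and using the uniform law of $R$ on $\{0,\dots,n\}$:
$$(n+1)\,\gamma_{n+1}\lambda_{n+1}\,\mathbb{E}[\|\nabla V(\theta_R)\|^2] \;\leq\; \sum_{k=0}^n \gamma_{k+1}\lambda_{k+1}\,\mathbb{E}[\|\nabla V(\theta_k)\|^2] \;\leq\; 2V^{*}+\alpha_{1,n}+LM^2\alpha_{2,n}'.$$
Under the tuning implicit in the announced bound, namely $\gamma+\lambda=1/2$, one has $(n+1)\,\gamma_{n+1}\lambda_{n+1}\asymp \sqrt{n}$, which yields the stated inequality (with the constant $C_\gamma C_\lambda$ absorbed in the numerator).

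The main obstacle is precisely this last step: the descent inequality naturally produces a $\gamma_{k+1}\lambda_{k+1}$-weighted average, whereas the statement samples $R$ uniformly. The monotonicity argument above is the cleanest bridge, but it forces the scale $\sqrt{n}$ in the denominator to be read off the smallest weight $\gamma_{n+1}\lambda_{n+1}$; ensuring the remaining ingredients (the controls $\alpha_{1,n}$ and $\alpha_{2,n}'$) do not grow fast enough to spoil the rate amounts to carefully tracking the exponents in $\gamma,\beta,\lambda$, which is direct once $\gamma+\lambda<1$ is used to guarantee finiteness of the summations.
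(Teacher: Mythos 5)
Your proposal follows essentially the same route as the paper's proof: you use $L$-smoothness, bound $\|A_k H_{\theta_k}(X_{k+1})\|^2 \leq \beta_{k+1}^2 M^2$ via Assumptions~\ref{ass:A4} and~\ref{ass:A5} (thereby avoiding the conditional MSE decomposition), control the bias term with Cauchy--Schwarz and Young's inequality, telescope, and then convert the $\gamma_{k+1}\lambda_{k+1}$-weighted sum into the uniform average. The final conversion step in your argument ($(n+1)\gamma_{n+1}\lambda_{n+1}\,\mathbb{E}[\|\nabla V(\theta_R)\|^2] \leq \sum_k\gamma_{k+1}\lambda_{k+1}\mathbb{E}[\|\nabla V(\theta_k)\|^2]$ by monotonicity) is mathematically the same manoeuvre as the paper's line $\tfrac{1}{n}\sum_k\mathbb{E}[\|\nabla V(\theta_k)\|^2] \leq \sum_k\tfrac{\gamma_{k+1}\lambda_{k+1}}{\sqrt{n}}\mathbb{E}[\|\nabla V(\theta_k)\|^2]$, which rests on $\gamma_{k+1}\lambda_{k+1}\geq n^{-1/2}$ for all $k\leq n$. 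You correctly flag that this step implicitly requires $\gamma+\lambda\leq 1/2$ and $C_\gamma C_\lambda$ not too small, a restriction that is not visible in the corollary's stated hypotheses ($\gamma+\lambda<1$); the paper's own proof silently makes the same assumption, so your reading is faithful to the intended argument rather than a gap in your proposal.
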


Importantly, in Corollary~\ref{cor:bounded_grad}, there are no assumptions on the step sizes, and we obtain a better bound than in Theorem~\ref{thm:general_case}.

\subsection{Application to Adagrad and RMSProp}
\label{sec:adagrad}
We give a convergence analysis of Adagrad and RMSProp with a biased gradient estimator. First, note that, under \hyperref[ass:A5]{H5}, for all eigenvalues $\lambda$ of $A_{n}$, the adaptive matrix in Adagrad or RMSProp, it holds that $(M^{2} + \delta)^{-1/2} \leq \lambda \leq \delta^{-1/2}$, i.e., \hyperref[ass:A4]{H4} is satisfied with $\lambda=0$ and $\beta=0$.

\begin{corollary}\label{cor:adagrad}
Assume that \hyperref[ass:A2]{H2} and \hyperref[ass:A5]{H5} hold. Let $\gamma_{n} = c_{\gamma}n^{-1/2}$ and $A_{n}$ denote the adaptive matrix in Adagrad or RMSProp. For any $n \geq 1$, let $R \in \{0, \ldots, n\}$ be a uniformly distributed random variable. Suppose that for any $n \geq 1$, there exist positive constants $\alpha$ and $C_{\alpha}$ such that:
\begin{equation} \label{eq:bias}
 \left\|   \mathbb{E} \left[  H_{\theta_{n}}\left( X_{n+1} \right)|\mathcal{F}_{n} \right] - \nabla V \left( \theta_{n} \right) \right\| \leq C_{\alpha}n^{-\alpha}\eqsp.
\end{equation}
Then, 
$$
\mathbb{E}\left[\left\| \nabla V\left(\theta_{R}\right)\right\|^{2}\right] = \mathcal{O}\left(\frac{\log n}{\sqrt{n}} + b_n\right)\eqsp, 
$$
where the bias $b_n$ is explicitly given in Appendix~\ref{proof:adagrad}.
\end{corollary}

In the case of an unbiased gradient, we obtain the same bound of $\mathcal{O}(\log n / \sqrt{n})$ as in \citep{ward2020adagrad, zou2018weighted, defossez2020simple} under the same assumptions.
If the bias is of the order $\mathcal{O}(n^{-1/4})$, the algorithm achieves the same convergence rate as in the case of an unbiased gradient.

\begin{wrapfigure}{r}{0.55\textwidth} 
  \vskip -0.2in
  \centering
  \begin{minipage}{0.55\textwidth} 
    \begin{algorithm}[H]
      \caption{\textbf{AMSGRAD with Biased Gradients}}
      \label{alg:adam}
      \begin{algorithmic}
        \STATE {\bfseries Input:} Initial point $\theta_{0}$, maximum number of iterations $n$, step sizes $\{\gamma_{k}\}_{k \geq 1}$, momentum parameters $\rho_{1}, \rho_{2} \in [0,1)$ and regularization parameter $\delta \geq 0$.
        \STATE Set $m_{0} = 0, V_{0} = 0$ and $\hat{V}_{0} = 0$
        \FOR{$k=0$ to $n-1$}
        \STATE Compute the stochastic update $H_{\theta_{k}}\left(X_{k+1}\right)$
        \STATE $m_{k} = \rho_{1}m_{k-1} + (1-\rho_{1})H_{\theta_{k}}(X_{k+1})$
        \STATE $V_{k} = \rho_{2}V_{k-1} + (1-\rho_{2}) H_{\theta_{k}}(X_{k+1}) H_{\theta_{k}}(X_{k+1})^\top $
        \STATE $\hat{V}_k = \max \big(\hat{V}_{k-1}, \text{Diag}(V_{k}) \big)$
        \STATE $A_{k} = \big[\delta \mathit{I}_d + \hat{V}_k \big]^{-1/2} $
        \STATE $\theta_{k+1}=\theta_{k}-\gamma_{k+1} A_k m_{k}$
        \ENDFOR
        \STATE {\bfseries Output:} $\left(\theta_{k}\right)_{0 \leq k \leq n}$
      \end{algorithmic}
    \end{algorithm}
  \end{minipage}
  \vskip -0.3in
\end{wrapfigure}

\subsection{AMSGRAD with Biased Gradients}

Finally, we show the convergence of AMSGRAD \citep{reddi2019convergence} with a biased gradient estimator. At each iteration, AMSGRAD uses an exponential moving average of past gradients instead of the current gradient as in Equation \eqref{ASA}, which is detailed in Algorithm \ref{alg:adam}. 
The key difference between Adam and AMSGRAD lies in their handling of the second moment estimate. Specifically, AMSGRAD uses the updated term $\hat{V}_k = \max(\hat{V}_{k-1}, \text{Diag}(V_{k}))$ instead of directly using $V_{k}$, with the maximum taken coordinate-wise.
This approach is crucial, as it ensures that the eigenvalues of $A_n$ decrease at each iteration.
The following theorem provides a bound in expectation on the gradient of the objective function $V$ using randomized iterations with AMSGRAD.

\begin{theorem}\label{th:adam}
Assume that \hyperref[ass:A2]{H2}, \hyperref[ass:A3]{H3} (i), and \hyperref[ass:A5]{H5} hold.
Let $\gamma_{n} = c_{\gamma}n^{-1/2}$, $A_{n}$ denote the adaptive matrix of AMSGRAD in Algorithm~\ref{alg:adam}, and $\rho_{1},\rho_{2} \in [0,1)$. For any $n \geq 1$, let $R \in \{0, \ldots, n\}$ be a uniformly distributed random variable.
Then,
$$
\mathbb{E}\left[\left\| \nabla V\left(\theta_{R}\right)\right\|^{2}\right] = \mathcal{O}\left(\frac{\log n}{\sqrt{n}} + b_n\right)\eqsp,
$$
where $b_n$ corresponds to the bias which comes from $r_{n}$ in \hyperref[ass:A3]{H3}(i). Choosing $r_{n} = C_{r} n^{-r}$, we get:
$$
b_n =
\begin{cases}
       \mathcal{O}\left(n^{ - r}\right) & \text{if }   r < 1/2 \eqsp,\\
      \mathcal{O}\left(n^{-1/2}\right) & \text{if }  r > 1/2 \eqsp,\\
     \mathcal{O}\left(n^{-1/2}\log n\right) & \text{if }  r = 1/2 \eqsp.
\end{cases}
$$
\end{theorem}
If the bias is of the order $\mathcal{O}(n^{-1/4})$, we achieve a convergence rate of $\mathcal{O}(\log n / \sqrt{n})$, which is the same as that of an unbiased gradient \citep{defossez2020simple} and similar to that of Adagrad and RMSProp. It is worth noting that our results are also applicable to SGD momentum by taking $A_n = \mathit{I}_d$ in Algorithm \ref{alg:adam}.


\subsection{Convergence Results in i.i.d. and Markov Chain cases}

For illustrative purposes, in this subsection we give the form of the bias of the gradient estimator, denoted by $\tilde b_n$, in two simple scenarios, i.e., when $\{X_{n}, n \in \mathbb{N}\}$ is either an i.i.d. sequence or a Markov chain.
For Adagrad, RMSProp, and AMSGRAD, bounding the bias of the gradient estimator is a sufficient condition for verifying \hyperref[ass:A3]{H3}$(i)$, which in turn enables us to derive convergence results in each scenario.

\paragraph{I.i.d. case.} Assume that $\left\{X_{n}, n \in \mathbb{N}\right\}$ are i.i.d. random variables. If the mean field function $h\left(\theta_{n}\right) = \mathbb{E}\left[H_{\theta_{n}}\left(X_{n+1}\right) \mid \mathcal{F}_{n}\right]$ aligns with the true gradient, then the estimator is unbiased. Otherwise, the bias of the gradient estimator is 
$$
\tilde b_{n+1} = \left\|h(\theta_n) - \nabla V(\theta_n)\right\|\eqsp.
$$
    
\paragraph{Markov Chain case.} Assume now that $\left\{X_{n}, n \in \mathbb{N}\right\}$ is a Markov Chain. The bias consists of two parts: the difference between the mean field function and the true gradient, and a term due to the Markov chain dynamics. For all $T \geq 0$, we define the stochastic update as follows:
$$H_{\theta_{k}}\left(X_{k+1}\right) = \frac{1}{T} \sum_{i=1}^{T} H_{\theta_{k}}\left(X_{k+1}^{(i)}\right), $$
where $X_{k+1}^{(i)}$ represents the i-th sample generated at iteration $k+1$. This multi-sample estimator is commonly used in applications such as Reinforcement Learning, Markov Chain Monte Carlo, and Sequential Monte Carlo methods, effectively reducing the variance of the gradient estimator.
The mixing time $\tau_{\text{mix}}$ of a Markov chain with stationary distribution $\pi$ and transition kernel $P$ is characterized as:
$$
\tau_{\text{mix}} := \inf \left\{ t\;;\; \sup_{x} D_{\text{TV}}(P^t(x, \cdot), \pi) \leq \frac{1}{4} \right\},
$$
where $D_{\text{TV}}$ denotes the total variation distance. 
For an ergodic Markov chain with stationary distribution $\pi$, the bias of this gradient estimator when using $T$ samples per step is 
$$
\tilde b_{n+1} = \left\|h(\theta_n) - \nabla V(\theta_n)\right\| + M \sqrt{\tau_{\text{mix}}/T}\eqsp,
$$
where $h(\theta) = \int H_{\theta}(x) \pi(dx)$.
If the general optimization problem reduces to the following stochastic optimization problem with Markov noise, as considered in most of the literature \citep{duchi2012ergodic, dorfman2022adapting, beznosikov2024first}:
$$
\min_{\theta \in \mathbb{R}^d} V(\theta) := \mathbb{E}_{x \sim \pi} [f(\theta; x)], 
$$
where $\theta \mapsto f(\theta; x)$ is a loss function, and $\pi$ is some stationary data distribution of the Markov Chain and $H_{\theta_{k}}(X_{k+1}^{(i)}) = \nabla f(\theta_{k}; X_{k+1}^{(i)})$, then  $\tilde b_{n+1} = M\sqrt{\tau_{\text{mix}}/T}$, similar to SGD with Markov Noise \citep{dorfman2022adapting}.

\section{Applications and Experiments}
\label{sec:applications_experiments}

\subsection{Bilevel and Conditional Stochastic Optimization}


We can now apply our theoretical results in various settings where biased gradients are involved. In particular, they apply to the fields of Stochastic Bilevel Optimization and Conditional Stochastic Optimization. Stochastic Bilevel Optimization consists of minimizing an objective function $V$ with respect to $\theta$, where $V$ is itself a function of $\phi^*(\theta)$ and $\phi^*(\theta)$ is obtained by solving another minimization problem. Conditional Stochastic Optimization focuses on optimizing the expected value of a function that contains a nested conditional expectation on a random variable $\eta$. We provide in Table \ref{tab:bilevel_CSO} a summary of the assumptions satisfied in these settings, which allow to apply the results of Section \ref{sec:results} and to obtain a $\mathcal{O}(\log n / \sqrt{n} + b_n)$ convergence rate in both cases, and explicit forms for $b_n$. To our knowledge, these are the first convergence rates obtained in these settings.


\begin{table}[htbp]
\centering
\caption{Bilevel and Conditional Stochastic Optimization with our Biased Adaptive SA framework.}
\label{tab:bilevel_CSO}
\begin{tabular}{@{}ccc@{}}
\toprule
Applications & Stochastic Bilevel Optimization  & Conditional Stochastic Optimization \\ \midrule
\multirow{2}{*}{Problem} & $  \underset{\theta \in \mathbb{R}^d}{\min} \, \mathbb{E}_\xi \left[f(\theta, \phi^*(\theta); \xi)\right]$ & \multirow{2}{*}{$  \underset{\theta \in \mathbb{R}^d}{\min} \, \mathbb{E}_{\xi} \left[ f_{\xi} \left( \mathbb{E}_{\eta | \xi} \left[ g_{\eta} (\theta, \xi) \right] \right) \right] $}\\
& s.t. $ \phi^*(\theta) \in \underset{\phi \in \mathbb{R}^q}{\mathrm{argmin }} \, \mathbb{E}_\zeta \left[g(\theta, \phi; \zeta)\right] $ & \\
Lipchitz Constant \hyperref[ass:A2]{H2} & Lemma \ref{lemma:bilevel_lipchitz} & Lemma \ref{lemma:CSO_grad} \\
Bias Control \hyperref[ass:A3]{H3} & Lemma \ref{lemma:bilevel_bias} & Lemma \ref{lemma:CSO_bias} \\
Gradient Bound \hyperref[ass:A5]{H5} & Lemma \ref{lemma:bilevel_bias} & Lemma \ref{lemma:CSO_grad} \\
Convergence & Theorem \ref{thm:bilevel} & Theorem \ref{thm:CSO} \\ \bottomrule
\end{tabular}
\end{table}

We refer to Appendix \ref{supp:sec:bias_example} for other examples in which the bias of the estimator can be controlled, in particular Self-Normalized Importance Sampling (Appendix~\ref{sec:snis}), Sequential Monte Carlo Methods (Appendix~\ref{sec:smc}), Policy Gradient (Appendix~\ref{sec:policy:gradient}), Zeroth-Order Gradient (Appendix~\ref{sec:zeroth:order}), and Coordinate Sampling (Appendix~\ref{sec:coordinate:samp}).

\subsection{Experiments with IWAE and BR-IWAE}
\label{sec:exp}

In this section, we illustrate our theoretical results in the context of deep VAE. The experiments were conducted using PyTorch \citep{paszke2017automatic}, and the source code can be found here\footnote{https://github.com/SobihanSurendran/Adaptive-SA}.
In generative models, the objective is to maximize the marginal likelihood $\log p_{\theta}(x)$, 
which is the marginalization of $(x,z)\mapsto p_{\theta}(x, z)$, where $x$ represents the observations and $z$ is the latent variable.
Under some simple technical assumptions, by Fisher's identity, we have:
\begin{equation} \label{grad_log}
\nabla_{\theta} \log p_{\theta}(x)=\int \nabla_{\theta} \log p_{\theta}(x, z) p_{\theta}(z \mid x) \mathrm{d} z\eqsp.
\end{equation}
However, in most cases, the conditional density $z\mapsto p_{\theta}(z \mid x)$ is intractable and can only be sampled.  Variational Autoencoders introduce an additional parameter $\phi$ and a family of variational distributions $z\mapsto q_{\phi}(z \mid x)$ to approximate the true posterior distribution.
Parameters are estimated by maximizing the Evidence Lower Bound (ELBO):
$$
\log p_{\theta}(x) 
\geq \mathbb{E}_{q_{\phi}(\cdot|x)} \left[ \log \frac{p_{\theta}(x, Z)}{q_{\phi}(Z \mid x)} \right] =: \mathcal{L}_{\text{ELBO}}(\theta, \phi; x)\eqsp.
$$
The Importance Weighted Autoencoder (IWAE) \citep{burda2015importance} is a variant of the VAE that incorporates importance weighting to obtain a tighter ELBO.
The IWAE objective can be written as follows:
\begin{equation*}
\mathcal{L}^{\text{IWAE}}_k(\theta, \phi; x) = \mathbb{E}_{q^{\otimes k}_{\phi}(\cdot|x)} \left[ \log \frac{1}{k} \sum_{\ell=1}^{k} \frac{p_{\theta}(x, Z^{(\ell)})}{q_{\phi}(Z^{(\ell)} \mid x)} \right],
\end{equation*}
where $k$ corresponds to the number of samples drawn from the variational posterior distribution. 
The estimator of the gradient of ELBO in IWAE is a biased estimator of $\nabla_{\theta} \log p_{\theta}(x)$. In Theorem~\ref{thm:bias_IWAE}, we establish that the bias of this estimator is of order $\mathcal{O}(1/k)$, thereby allowing us to derive a convergence rate for IWAE. Since bias has an impact on convergence rates, we propose to use one of the bias reduction techniques, the Biased Reduced Importance Weighted Autoencoder (BR-IWAE) \citep{cardoso2022br}, which is detailed in Appendix~\ref{supp:sec:iwae}.



\textbf{Dataset and Model. } We conduct our experiments on the CIFAR-10 dataset \citep{krizhevsky2009learning} and use a Convolutional Neural Network (CNN) architecture with the Rectified Linear Unit (ReLU) activation function for both the encoder and the decoder. The latent space dimension is set to 100. We estimate the log-likelihood using VAE, IWAE, and BR-IWAE models, all of which are trained for 100 epochs. 

\begin{wrapfigure}{r}{0.55\textwidth} 
  \vskip -0.2in
  \centering
  \includegraphics[width=0.55\textwidth]{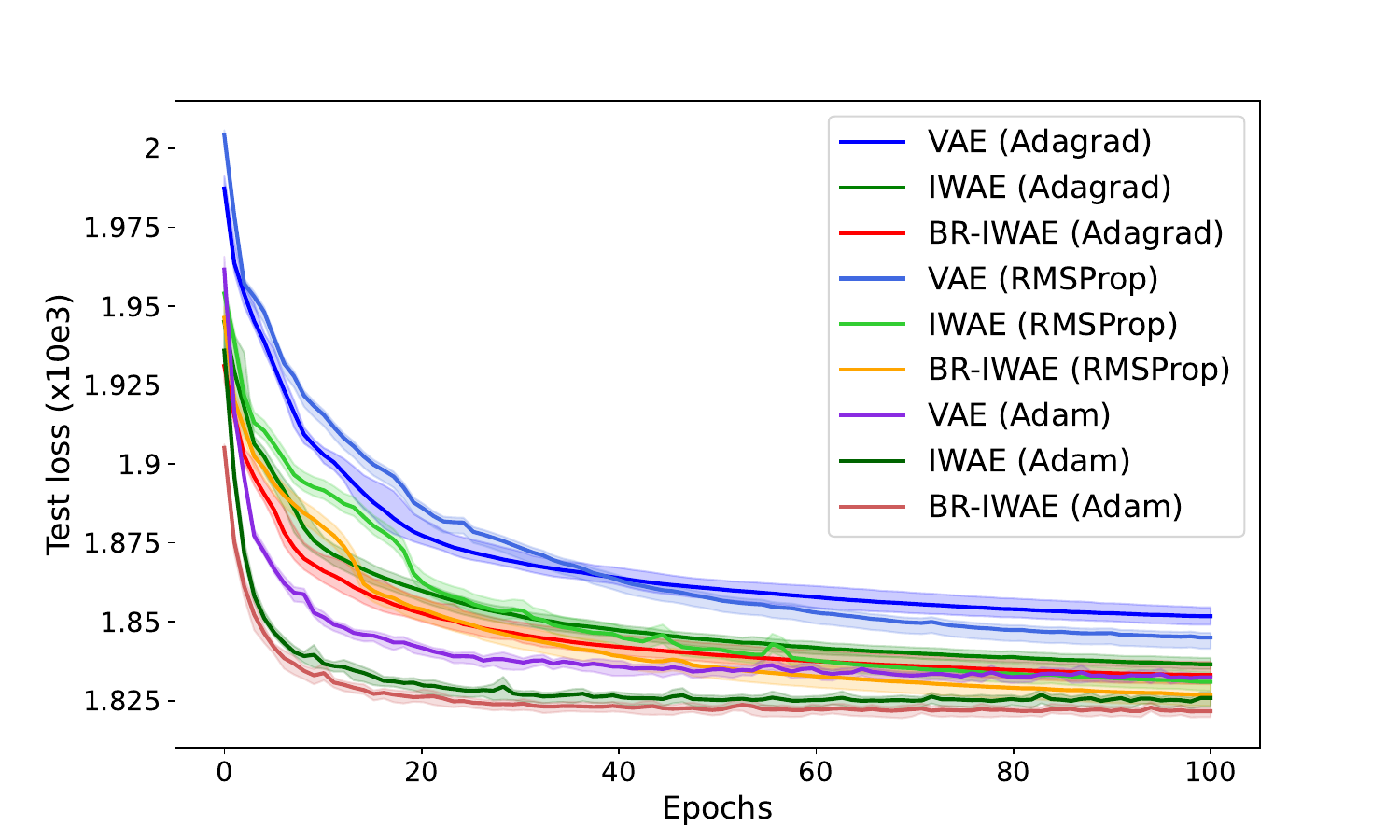} 
  \caption{Negative Log-Likelihood on the test set for Different Generative Models with Adagrad, RMSProp, and Adam on CIFAR-10. Bold lines represent the mean over 5 independent runs.}
  \label{loss}
  \vskip -0.3in
\end{wrapfigure}

Training is conducted using Adagrad, RMSProp, and Adam with a decaying learning rate. Although AMSGRAD is analyzed in our theoretical results, we use Adam for the experiments due to its widespread use in practice. Additional details are provided in Appendix~\ref{supp:sec:exp}.



First, we set $k = 5$ samples in both IWAE and BR-IWAE. 
The test losses are presented in Figure \ref{loss}. We show the negative log-likelihood on the test dataset for VAE, IWAE, and BR-IWAE with Adagrad, RMSProp, and Adam. As expected, we observe that IWAE outperforms VAE, while BR-IWAE outperforms IWAE by reducing bias in all cases. 

Then, we illustrate empirically the convergence rates obtained in Corollary \ref{cor:adagrad} and Theorem \ref{th:adam} for IWAE. 
Since the bias of the estimator of the gradient in IWAE is of the order $\mathcal{O}(1/k)$, choosing a bias of order $\mathcal{O}(n^{-\alpha})$ is equivalent to using $n^{\alpha}$ samples at iteration $n$ to estimate the gradient. 
We plot in Figure \ref{grad_CIFAR} the gradient squared norm $\| \nabla V(\theta_n) \|^{2}$ 
and the Negative Log-Likelihood is given in Appendix \ref{supp:sec:add_exp_iwae}. 
Note that all figures are with respect to epochs, whereas here, $n$ represents the number of updates of the gradient. The dashed curves correspond to the expected convergence rate  $\mathcal{O}(n^{-1/4})$ for $\alpha = 1/8$ and $\mathcal{O}(\log n/\sqrt{n})$ for $\alpha = 1/4$ and $\alpha = 1/2$. 


\begin{figure}[ht!]
\begin{center}
\centerline{\includegraphics[width=0.5\textwidth]{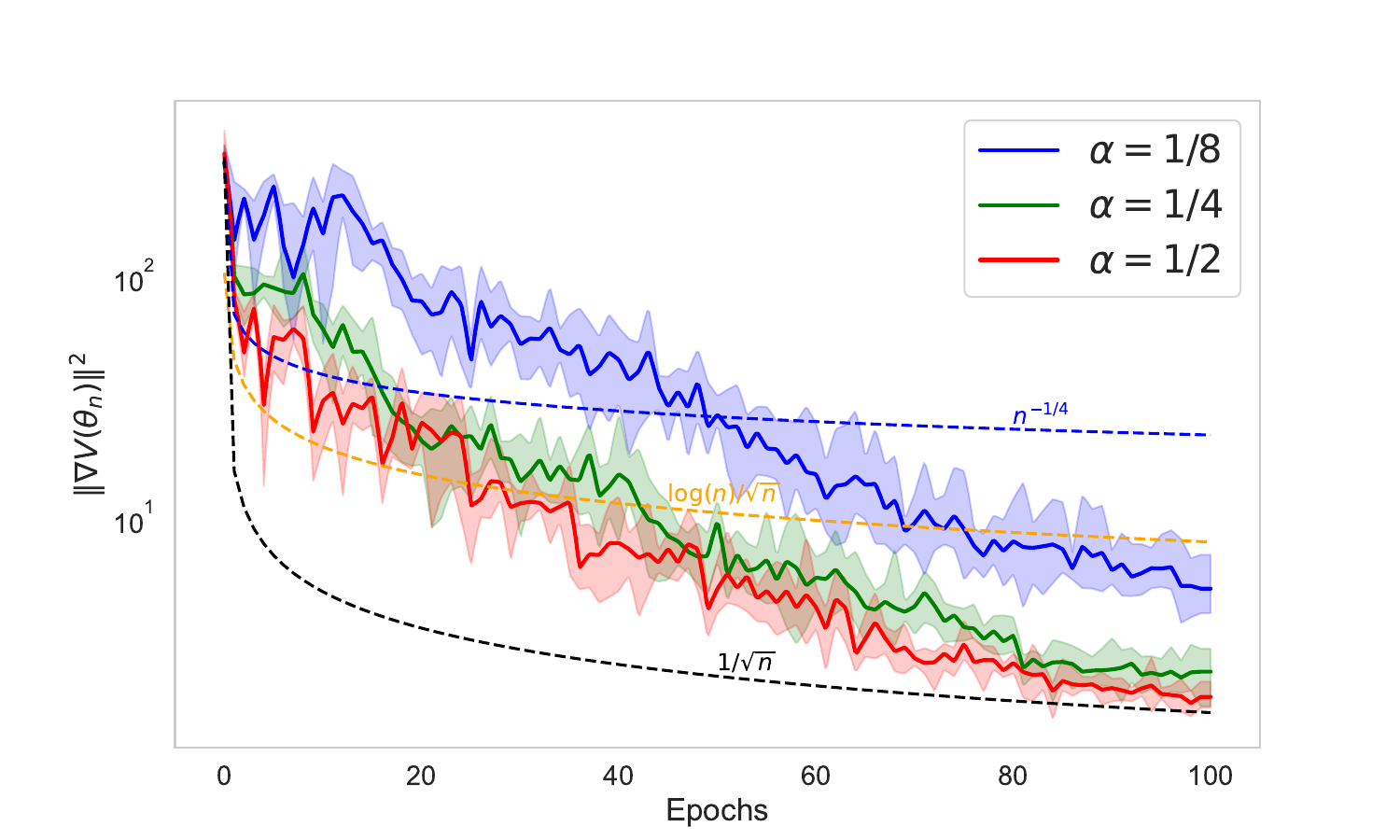}
\includegraphics[width=0.48\textwidth]{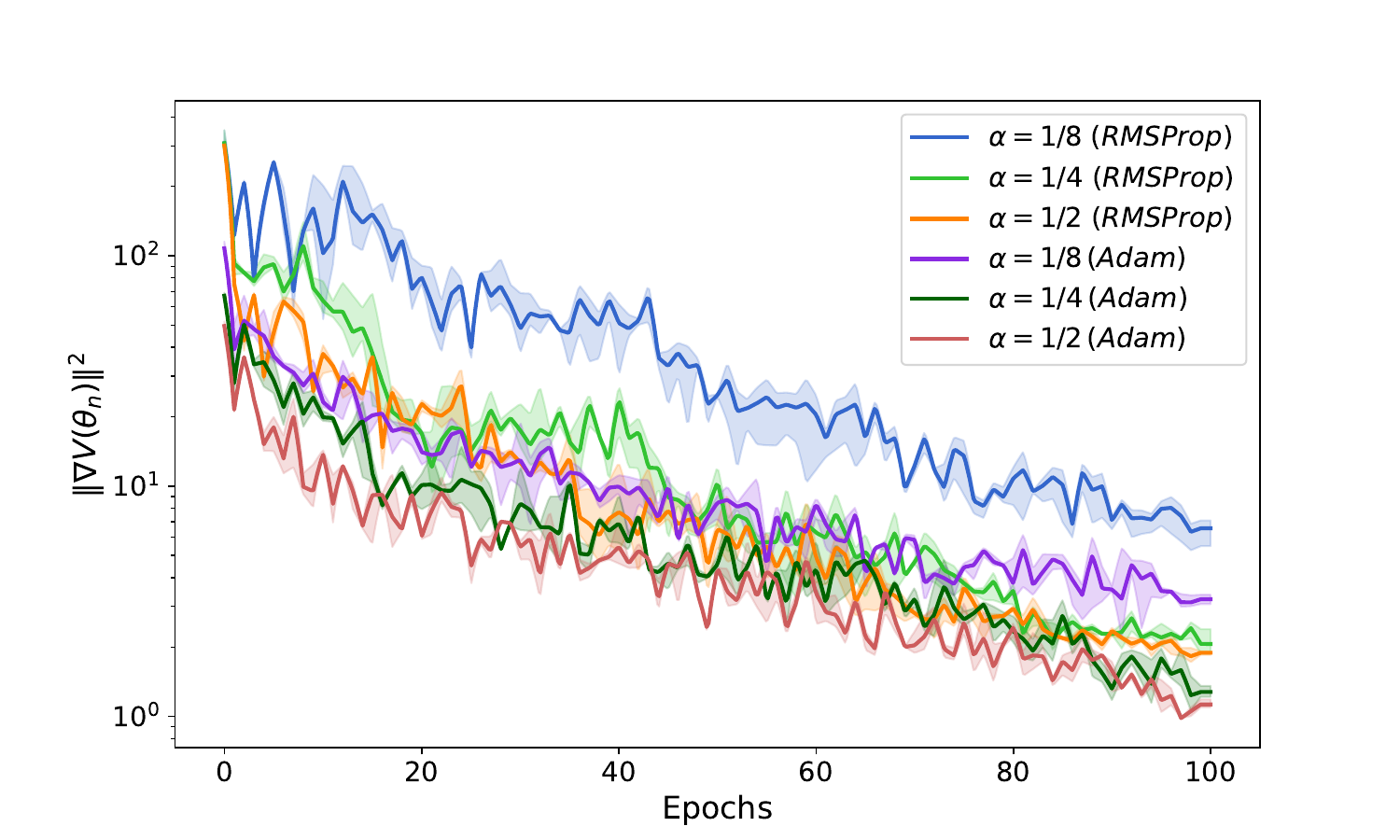}
}
\caption{Value of $\| \nabla V(\theta_n) \|^{2}$ in IWAE with Adagrad (on the left), RMSProp, and Adam (on the right). Bold lines represent the mean over 5 independent runs. Figures are plotted on a logarithmic scale for better visualization. Both figures have the same scale, so we have not shown the dashed theoretical curves on the right for better clarity.}
\label{grad_CIFAR}
\end{center}
\vskip -0.2in
\end{figure}

We observe that the algorithms converge at the expected theoretical rates, and even faster. In Appendix \ref{supp:sec:add_exp_iwae}, we have included an additional experiment on the FashionMNIST dataset \citep{xiao2017fashion}, which shows similar behavior, but the convergence is closer to the expected rates, suggesting that our upper bounds may be tight. We see similar convergence rates for Adagrad, RMSProp, and Adam, although, as expected, Adam performs slightly better. 
Moreover, it is clear that convergence is faster with a larger $\alpha$ but beyond a certain threshold for $\alpha$ the rate of convergence does not change significantly. Since choosing a larger $\alpha$ induces an additional computational cost, it is crucial to choose an appropriate value that achieves fast convergence without being too computationally expensive. Choosing an optimal number of samples at each iteration remains an open problem depending on the chosen generative model.



\section{Discussion}

This paper provides a non-asymptotic analysis of Biased Adaptive Stochastic Approximation with and without the PL condition in the non-convex smooth setting. We derive a convergence rate of $\mathcal{O}(\log n/\sqrt{n} + b_n)$ for non-convex smooth functions, where $b_n$ corresponds to the time-dependent decreasing bias, and an improved linear convergence rate with the Polyak-Łojasiewicz (PL) condition.
We also establish that Adagrad, RMSProp, and AMSGRAD with biased gradients converge to critical points for non-convex smooth functions. 
Our results provide insights on hyper-parameters tuning to achieve fast convergence and reduce computational time.
A natural extension of this work is the analysis of the assumptions, the bias and convergence rates for specific deep learning architectures. A theoretical analysis of the Monte Carlo effort required at each iteration to obtain an optimal convergence rate is another interesting perspective. 

\section*{Acknowledgements}
The Ph.D. of Sobihan Surendran  was funded by the Paris Region PhD Fellowship Program 
of Région Ile-de-France. We would like to thank SCAI (Sorbonne Center for Artificial Intelligence) for providing the computing clusters. We also express our gratitude to the reviewers for their insightful comments and suggestions, which have helped improve this paper. 






\doparttoc 
\faketableofcontents 

\part{} 
\parttoc 

\newpage
\appendix
\part{Supplementary Material for “Non-asymptotic Analysis of Biased Adaptive
Stochastic Approximation”} 
\parttoc 
\newpage



\section{Convergence Proofs}
\label{supp:sec:proofs}


\subsection{Proof of Theorem \ref{thm:strongly_convex_case}}

We first establish a technical lemma which is essential for the proof.

\begin{lemma}\label{lemma:lemma_con}
Let $\left(\delta_{n}\right)_{n \geq 0},\left(\gamma_{n}\right)_{n \geq 1},\left(\eta_{n}\right)_{n \geq 1}$, and $\left(v_{n}\right)_{n \geq 1}$ be some positive sequences satisfying the following assumptions.

\begin{itemize}
  \item The sequence $\delta_{n}$ follows the recursive relation:
  $$
\delta_{n} \leq\left(1-2 \omega \gamma_{n}+\eta_{n} \gamma_{n}\right) \delta_{n-1}+v_{n} \gamma_{n}\eqsp,
$$
  with $\delta_{0} \geq 0$ and $\omega>0$.

  \item Let $n_{0}=\inf \left\{n \geq 1: \eta_{n} \leq \omega\right\}$, then for all $n \geq n_{0}+1$, we assume that $\omega \gamma_{n} \leq 1$.

\end{itemize}

Then, for all $n \in \mathbb{N}$,

$$
\delta_{n} \leq \exp \bigg(-\omega \sum_{k=n / 2}^{n} \gamma_{k}\bigg) \exp \bigg(2 \sum_{k=1}^{n} \eta_{k} \gamma_{k}\bigg)\left(\delta_{0}+2 \max _{1 \leq k \leq n} \frac{v_{k}}{\eta_{k}}\right)+\frac{1}{\omega} \max _{n / 2 \leq k \leq n} v_{k}.
$$
\end{lemma}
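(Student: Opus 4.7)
The plan is to iterate the one-step recursion and bound the resulting sum by splitting it at the midpoint $m=\lfloor n/2 \rfloor$, so that the short-time and long-time regimes can be controlled with different estimates. Unrolling gives
\[
\delta_n \;\leq\; \delta_0\, P_{0,n} + \sum_{k=1}^n v_k \gamma_k\, P_{k,n}, \qquad P_{k,n} := \prod_{j=k+1}^n (1-2\omega\gamma_j+\eta_j\gamma_j),
\]
and the inequality $1+x\leq e^x$ yields $P_{k,n}\leq \exp\bigl(-2\omega\!\sum_{j=k+1}^n\gamma_j + \sum_{j=k+1}^n\eta_j\gamma_j\bigr)$. The difficulty is that $\sum\eta_k\gamma_k$ may be large, so one cannot simply exponentiate globally; the split at $m$ is what allows us to keep only half of the decay over the late window $[n/2,n]$, while spending the other half to absorb the perturbation.

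For the initial condition term and the short-time part ($k\leq m$), I bound $P_{k,n}\leq \exp(-2\omega\sum_{j=n/2}^n\gamma_j)\exp(\sum_{j=1}^n\eta_j\gamma_j)$ by enlarging the sum in the perturbation and by using that $\sum_{j=k+1}^n\gamma_j\geq\sum_{j=n/2}^n\gamma_j$. In the short-time sum I then write $v_k\gamma_k=(v_k/\eta_k)\cdot \eta_k\gamma_k$, pull out $\max_{1\leq k\leq n} v_k/\eta_k$, and use the elementary inequality $x\leq e^x$ (for $x\geq 0$) with $x=\sum_k\eta_k\gamma_k$ to absorb the residual $\sum\eta_k\gamma_k$ into a second exponential factor. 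Weakening $e^{-2\omega\sum_{n/2}^n\gamma_j}$ to $e^{-\omega\sum_{n/2}^n\gamma_j}$ (a free factor $\leq 1$) and combining with the initial-condition contribution produces the first summand of the claimed bound, namely $\exp(-\omega\sum_{n/2}^n\gamma_k)\exp(2\sum\eta_k\gamma_k)(\delta_0+2\max_k v_k/\eta_k)$.

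For the long-time portion ($k>m$), once $m\geq n_0$ we have $\eta_j\leq\omega$ and $\omega\gamma_j\leq 1$ throughout the range, so $1-2\omega\gamma_j+\eta_j\gamma_j\leq 1-\omega\gamma_j\in[0,1]$. The telescoping identity
\[
\omega\gamma_k\prod_{j=k+1}^n(1-\omega\gamma_j) \;=\; \prod_{j=k+1}^n(1-\omega\gamma_j) - \prod_{j=k}^n(1-\omega\gamma_j)
\]
sums over $k=m+1,\dots,n$ to at most $1$, whence $\sum_{k>m}\gamma_k\prod_{j>k}(1-\omega\gamma_j)\leq 1/\omega$, and pulling out $\max_{n/2\leq k\leq n} v_k$ produces the additive term $\omega^{-1}\max_{n/2\leq k\leq n} v_k$. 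The small-$n$ regime where $m<n_0$ is handled trivially, since the exponential prefactor is then $\geq 1$ and $\delta_0+2\max_k v_k/\eta_k$ already majorises $\delta_n$. The main obstacle is precisely the interplay between the contractive $-2\omega\gamma_k$ and the perturbative $\eta_k\gamma_k$ terms; the splitting at $n/2$ is the idea that resolves it, by trading half of the decay budget for the blow-up coming from the perturbation.
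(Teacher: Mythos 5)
The paper itself gives no proof of this lemma---it simply cites \citet{godichon2023non}---so there is nothing in the text to compare your argument against. Your strategy (unroll the one-step contraction, split the resulting convolution sum at $m\approx n/2$, telescope $\omega\gamma_k\prod_{j>k}(1-\omega\gamma_j)=\prod_{j>k}(1-\omega\gamma_j)-\prod_{j\geq k}(1-\omega\gamma_j)$ on the tail, and exponentiate with $1+x\leq e^x$ and $x\leq e^x$ on the head) is the standard Chung-type argument and is almost certainly what the cited reference does; the slack from $-2\omega$ to $-\omega$ and from $\max$ to $2\max$ falls out exactly as in your computation.

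Two points nonetheless need to be made explicit. First, iterating $\delta_n\leq a_n\delta_{n-1}+v_n\gamma_n$ with $a_n:=1-2\omega\gamma_n+\eta_n\gamma_n$ requires $a_n\geq 0$, which the hypotheses do not guarantee; replace $a_n$ by $a_n^+:=\max(a_n,0)$, noting that the one-step bound survives since $\delta_{n-1}\geq 0$, that $a_n^+\leq\exp(-2\omega\gamma_n+\eta_n\gamma_n)$ always, and that $a_n^+\leq 1-\omega\gamma_n$ whenever $\eta_n\leq\omega$ and $\omega\gamma_n\leq 1$. Second, and more substantively, your tail estimate uses $\eta_j\leq\omega$ for \emph{every} $j\in\{m+1,\dots,n\}$ once $m\geq n_0$; but $n_0=\inf\{n:\eta_n\leq\omega\}$ only guarantees $\eta_{n_0}\leq\omega$ at that single index. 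Both the lemma statement and your proof tacitly assume that $(\eta_n)$ stays below $\omega$ after $n_0$ (for instance because $(\eta_n)$ is eventually decreasing, which is how the lemma is actually used in the paper), and without that the bound $a_j\leq 1-\omega\gamma_j$ can fail somewhere in the tail and the telescoping collapses---this hypothesis should be stated. Finally, the regime $m<n_0$ is dismissed too quickly: $\delta_0+2\max_k v_k/\eta_k$ does not by itself majorise $\delta_n$, and one must compare $\exp\bigl(2\sum_{k\leq n}\eta_k\gamma_k\bigr)$ against $\exp\bigl(-\omega\sum_{k\geq n/2}\gamma_k\bigr)$ using $\eta_k>\omega$ for $k<n_0$; it is a short argument, but an argument nonetheless.
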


The proof is given in \citep[Proposition 6.1]{godichon2023non} 
\begin{theorem}\label{thm:strongly_convex_case2}
Assume that \hyperref[ass:A1]{H1} - \hyperref[ass:A4]{H4} hold.
Let $\theta_{n} \in \mathbb{R}^{d}$ be the $n$-th iterate of the recursion \eqref{ASA}. Then,
$$ 
\begin{aligned}
\mathbb{E}\left[V\left(\theta_{n}\right)-V(\theta^{*})\right] &\leq \bigg( \mathbb{E}\left[V\left(\theta_{0}\right)-V(\theta^{*})\right] + \frac{2}{\tilde\sigma} \max _{1 \leq k \leq n} \frac{\lambda_{k+1} v_k}{\beta_{k+1}^{2} \gamma_{k+1}} \bigg) \exp{\bigg(- \frac{\mu}{2} \sum_{k=n/2}^{n} \lambda_{k+1}\gamma_{k+1}\bigg)} \\ & \quad \times\exp{\bigg(2  \sum_{k=1}^{n} C_{k} \beta_{k+1}^{2} \gamma_{k+1}^2\bigg)} + \frac{2}{\mu}\max _{n/2 \leq k \leq n}  v_k \eqsp,
\end{aligned}
$$
where 
$$
\tilde\sigma = \frac{\tilde\sigma_2 L}{2} +  {\tilde\sigma_1 L^{2}}, \quad C_{k} = \max\left\{\tilde\sigma, \frac{\mu^{2} \lambda_{k+1}^{2}}{4 \beta_{k+1}^{2}}\right\} \quad  
\textnormal{and}  \quad v_{k} =   r_{k+1}  + \frac{L\sigma_{k}^2}{2}  \frac{\beta_{k+1}^{2}}{\lambda_{k+1}} \gamma_{k+1}\eqsp.
$$
with the convention $C_{k} = 1$ if $\tilde\sigma_1=\tilde\sigma_2=0$.
\end{theorem}

\begin{proof}
Since $V$ is $L$-smooth (Assumption \hyperref[ass:A2]{H2}) and using the recursion $\eqref{ASA}$ of Adaptive SA, we obtain:
$$
\begin{aligned}
V\left(\theta_{n+1}\right) & \leq V\left(\theta_{n}\right) + \left\langle\nabla V\left(\theta_{n}\right) , \theta_{n+1} - \theta_{n} \right\rangle+\frac{L}{2}\left\|\theta_{n+1} - \theta_{n} \right\|^{2} \\
& \leq V\left(\theta_{n}\right)-\gamma_{n+1}\left\langle\nabla V\left(\theta_{n}\right) , A_n H_{\theta_{n}}\left(X_{n+1}\right)\right\rangle+\frac{L \gamma_{n+1}^{2}}{2}\left\|A_n\right\|^{2}\left\|H_{\theta_{n}}\left(X_{n+1}\right)\right\|^{2} .\\
\end{aligned}
$$
Writing $V_{n} = V\left(\theta_{n}\right)-V(\theta^{*})$, we get
$$ V_{n+1} \leq V_{n} -\gamma_{n+1}\left\langle\nabla V\left(\theta_{n}\right) , A_n H_{\theta_{n}}\left(X_{n+1}\right)\right\rangle + \frac{L}{2} \gamma_{n+1}^{2}\beta_{n+1}^{2} \left\|H_{\theta_{n}}\left(X_{n+1}\right) \right\|^{2} .
$$
Then, using \hyperref[ass:A3]{H3}, 
\begin{align*}
\mathbb{E}\left[V_{n+1} \right] &\leq \mathbb{E}\left[  V_{n} \right] - \gamma_{n+1} \mathbb{E}\big[\left\langle\nabla V\left(\theta_{n}\right) , A_n  H_{\theta_{n}}\left(X_{n+1}\right) \right\rangle \big] + \frac{L}{2}  \gamma_{n+1}^{2} \beta_{n+1}^{2} \sigma_{n}^2 \\ & \quad + \frac{L}{2}  \gamma_{n+1}^{2} \beta_{n+1}^{2}  \Big( \tilde\sigma_1 \mathbb{E}[\left\| \nabla V \left( \theta_{n} \right) \right\|^{2}] + \tilde\sigma_2 \mathbb{E}[ V_{n}]\Big)    \\
&\leq \left(1 + \frac{\tilde\sigma_2 L}{2}  \beta_{n+1}^{2} \gamma_{n+1}^{2}\right) \mathbb{E}[V_{n}] - \gamma_{n+1} \left( \lambda_{n+1} - \frac{\tilde\sigma_1 L}{2} \gamma_{n+1}\beta_{n+1}^{2} \right) \mathbb{E}[\left\| \nabla V \left( \theta_{n} \right) \right\|^{2}] \\
& \quad + \gamma_{n+1} \lambda_{n+1} r_{n+1} + \frac{L\sigma_{n}^2}{2}\gamma_{n+1}^{2}\beta_{n+1}^{2}\eqsp.  \\
\end{align*}
Given the smoothness condition \eqref{eq:smoothness}, with $\theta^{\prime} = \theta_{n}$ and $\theta = \theta_{n} - \frac{1}{L}\nabla V\left(\theta_{n}\right)$, we derive:
$$ V\left(\theta^{*}\right) \leq V\left(\theta_{n}\right) - \frac{1}{L}\left\| \nabla V\left(\theta_{n}\right) \right\|^{2} + \frac{1}{2L}\left\|\nabla V\left(\theta_{n}\right)\right\|^{2}\eqsp,
$$
$$
\left\|\nabla V\left(\theta_{n}\right)\right\|^{2} \leq 2L V_{n}\eqsp. 
$$  
Using the above inequality and the Polyak-Łojasiewicz condition (\hyperref[ass:A1]{H1}), we obtain:
$$
\begin{aligned}
\mathbb{E}\left[V_{n+1}\right] &\leq \Big( 1 - \mu \lambda_{n+1}\gamma_{n+1} +  \Big( \frac{\tilde\sigma_2 L}{2}+ \tilde\sigma_1 L^{2} \Big) \beta_{n+1}^{2} \gamma_{n+1}^{2}\Big) \mathbb{E}\left[V_{n}\right] \\
& \quad + \gamma_{n+1}\lambda_{n+1}r_{n+1} + \frac{L\sigma_{n}^2}{2} \gamma_{n+1}^{2}\beta_{n+1}^{2}\eqsp.
\end{aligned}
$$
By choosing  $\bar{\gamma}_{n+1} = \lambda_{n+1} \gamma_{n+1}$, we get:
$$
\begin{aligned}
\mathbb{E}\left[V_{n+1}\right] &\leq \Big( 1 - \mu \bar{\gamma}_{n+1} +  \Big( \frac{\tilde\sigma_2 L}{2}+  {\tilde\sigma_1 L^{2}}  \Big) \frac{\beta_{n+1}^{2}}{\lambda_{n+1}^{2}}  \bar{\gamma}_{n+1}^{2}\Big) \mathbb{E}\left[V_{n}\right] \\
& \quad + \bar{\gamma}_{n+1}r_{n+1} + \frac{L\sigma_{n}^2}{2}\frac{\beta_{n+1}^{2}}{\lambda_{n+1}} \bar{\gamma}_{n+1}\gamma_{n+1}\eqsp. 
\end{aligned}
$$
In order to satisfy the assumptions of Lemma \ref{lemma:lemma_con}, consider 
\[C_{n} = \max\left\{\tilde\sigma, \frac{\mu^{2} \lambda_{n+1}^{2}}{4 \beta_{n+1}^{2}}\right\} \quad \text{with} \quad \tilde\sigma = \frac{\tilde\sigma_2 L}{2} +  {\tilde\sigma_1 L^{2}} \eqsp.\] 
Then, since $C_{n} \geq \tilde\sigma$, we have: 
$$
\begin{aligned}
\mathbb{E}\left[V_{n+1}\right] \leq \Big( 1 - \mu \bar{\gamma}_{n+1} + \frac{C_{n} \beta_{n+1}^{2}}{\lambda_{n+1}^{2}}  \bar{\gamma}_{n+1}^{2}\Big) \mathbb{E}\left[V_{n}\right] &+ \bar{\gamma}_{n+1}r_{n+1} + \frac{L\sigma_{n}^2}{2}\frac{\beta_{n+1}^{2}}{\lambda_{n+1}} \bar{\gamma}_{n+1}\gamma_{n+1}\eqsp.  
\end{aligned}
$$
Now, using lemma \ref{lemma:lemma_con} by choosing:
$$ 
\delta_{n} = \mathbb{E}\left[V_{n}\right], \quad
\eta_{n} =  \frac{C_{n} \beta_{n+1}^{2}}{\lambda_{n+1}^{2}}\bar{\gamma}_{n+1}\eqsp,
\quad
\omega = \frac{\mu}{2},  \quad
v_{n} =  r_{n+1}  + \frac{L\sigma_{n}^2}{2}\frac{\beta_{n+1}^{2}}{\lambda_{n+1} }  {\gamma}_{n+1}\eqsp,
$$

we have:
\begin{align*}
\mathbb{E}\left[V\left(\theta_{n}\right)-V(\theta^{*})\right] &\leq \Big( \mathbb{E}\left[V\left(\theta_{0}\right)-V(\theta^{*})\right] + \frac{2}{\tilde\sigma} \max _{1 \leq k \leq n} \frac{v_k \lambda_{k+1}^{2}}{ \beta_{k+1}^{2} \bar{\gamma}_{k+1}} \Big)\exp \Big(- \frac{\mu}{2} \sum_{k=n/2}^{n} \bar{\gamma}_{k+1} \Big)  \\
& \quad \times \exp \Big(2 \sum_{k=1}^{n} C_{k} \beta_{k+1}^{2} \bar{\gamma}_{k+1}^2/\lambda_{k+1}^{2}\Big) + \frac{2}{\mu}\max _{n/2 \leq k \leq n} \left\{ v_k \right\},
\end{align*}
which concludes the proof by taking $\bar{\gamma}_{n+1} = \lambda_{n+1} \gamma_{n+1}$.
\end{proof}

\subsection{Proof of Theorem \ref{thm:general_case}}

By \hyperref[ass:A2]{H2}, using \eqref{eq:smoothness}, we obtain:
\begin{align*}
    V \left(\theta_{k+1}\right) \leq \, &V\left(\theta_{k}\right) + \left\langle\nabla V\left(\theta_{k}\right) , \theta_{k+1}- \theta_{k} \right\rangle +\frac{L}{2}\left\|\theta_{k+1} - \theta_{k} \right\|^{2},
\end{align*}
which, using the recursion $\eqref{ASA}$ of Adaptive SA and \hyperref[ass:A4]{H4}, yields:
 {
\begin{align*}
V\left(\theta_{k+1}\right) &\leq  V\left(\theta_{k}\right)   -\gamma_{k+1}\left\langle\nabla V\left(\theta_{k}\right) , A_k   H_{\theta_{k}}\left(X_{k+1}\right)   \right\rangle  +  \delta_{k+1}  \left\|H_{\theta_{k}}\left(X_{k+1}\right)  \right\|^{2}  ,
\end{align*}
}
with $\delta_{k+1} = L  \gamma_{k+1}^{2}\beta_{k+1}^{2} /2$. 
{Using Assumption \hyperref[ass:A3]{H3}, we have:
\begin{align*}
\mathbb{E} [V(\theta_{k+1})  ]  &\leq \mathbb{E}[ V(\theta_{k})]  - \gamma_{k+1}\lambda_{k+1}\mathbb{E}\big[\|\nabla V(\theta_{k})\|^{2} \big] + \gamma_{k+1}\lambda_{k+1}r_{k+1} + \delta_{k+1} \sigma_{k}^2 \\
& \quad + \delta_{k+1} \big(\tilde\sigma_1 \mathbb{E}[\|\nabla V(\theta_{k})\|^{2}]  + \tilde\sigma_2 \mathbb{E}\left[  V \left( \theta_{k} \right) - V \left( \theta^{*} \right)\right] \big).
\end{align*}
}
Therefore, {
\begin{align*}
&\gamma_{k+1}\Big(\lambda_{k+1}  - \frac{L\tilde\sigma_1}{2} \gamma_{k+1}\beta_{k+1}^{2}\Big) \mathbb{E}\big[\left\|\nabla V\left(\theta_{k}\right)\right\|^{2} \big]  \\
& \quad \leq \left( 1+ \tilde\sigma_2 \delta_{k+1} \right) \big(\mathbb{E}\left[ V\left(\theta_{k}\right) \right]  - V \left( \theta^{*} \right) \big) 
- \big( \mathbb{E} \left[ V\left(\theta_{k+1}\right) \right]- V \left( \theta^{*} \right) \big) \\
& \qquad + \gamma_{k+1}\lambda_{k+1}r_{k+1} + \delta_{k+1}\sigma_{k}^2\eqsp. 
\end{align*}
Let us now consider the sequence of weights $w_{k}$ defined by $w_{0}=1$ and $w_{k} = \prod_{j=1}^{k}\left( 1+ \tilde\sigma_2 \delta_{j} \right)^{-1}$.  Then, 
\begin{align*}
&w_{k+1} \gamma_{k+1} \Big(\lambda_{k+1} - \frac{L\tilde\sigma_1}{2} \gamma_{k+1}\beta_{k+1}^{2}\Big) \mathbb{E}\big[\left\|\nabla V\left(\theta_{k}\right)\right\|^{2} \big] \\
& \quad \leq  w_{k} \big( \mathbb{E}[V\left(\theta_{k}\right)]  - V \left( \theta^{*} \right) \big) 
- w_{k+1}\big( \mathbb{E} \left[ V\left(\theta_{k+1}\right)  \right]- V \left( \theta^{*} \right) \big) \\
& \qquad + w_{k+1} \gamma_{k+1}\lambda_{k+1}r_{k+1} + w_{k+1}\delta_{k+1}\sigma_{k}^2 .
\end{align*}

In the sequel, let us denote $V_{n} = V \left( \theta_{n} \right) - V \left( \theta^{*} \right)$, so that
\begin{align*}
&\sum_{k=0}^{n}w_{k+1} \gamma_{k+1}\lambda_{k+1}\left(1 - \frac{L\tilde\sigma_1}{2\lambda_{k+1}}\gamma_{k+1}\beta_{k+1}^{2}\right)  \mathbb{E} \left[ \left\|\nabla V\left(\theta_{k}\right)\right\|^{2} \right] \\ &
\leq w_{0} \mathbb{E} \left[  V_{0}  \right] - w_{n+1} \mathbb{E} \left[ V_{n+1} \right]
+ \frac{1}{2} \sum_{k=0}^{n}w_{k+1} \gamma_{k+1}\lambda_{k+1}r_{k+1} + \sum_{k=0}^{n} w_{k+1}\delta_{k+1}\sigma_{k}^2 . 
\end{align*}
Then, given that $\gamma_{k+1} \leq \lambda_{k+1}/(L\tilde\sigma_1 \beta_{k+1}^{2})$, we have
\begin{align*}
\frac{1}{2}\mathbb{E}\left[ \sum_{k=0}^{n}w_{k+1} \gamma_{k+1} \lambda_{k+1} \left\|\nabla V\left(\theta_{k}\right)\right\|^{2} \right] &\leq w_{0}\mathbb{E} \left[ V_{0} \right] - w_{n+1} \mathbb{E} \left[ V_{n+1} \right]  \\
& \quad + \frac{1}{2} \sum_{k=0}^{n} w_{k+1}\gamma_{k+1}\lambda_{k+1}r_{k+1} + \sum_{k=0}^{n}w_{k+1} \delta_{k+1} \sigma_{k}^2 \eqsp.  
\end{align*}
Consequently, by definition of the discrete random variable $R$,
\begin{align*}
\mathbb{E}\left[\left\| \nabla V\left(\theta_{R}\right)\right\|^{2}\right] &= \sum_{k=0}^{n} \frac{w_{k+1} \gamma_{k+1} \lambda_{k+1}}{\sum_{j=0}^{n} w_{j+1} \gamma_{j+1} \lambda_{j+1}} \mathbb{E}\left[\left\| \nabla V\left(\theta_{k}\right)\right\|^{2}\right] \\
& \, \leq 2\frac{\mathbb{E} \left[ V_{0} \right]-w_{n+1} \mathbb{E}\left[ V_{n+1} \right] +  \sum_{k=0}^{n} w_{k+1}\gamma_{k+1}r_{k+1}+   \sum_{k=0}^{n} w_{k+1}\delta_{k+1} \sigma_{k}^2 }{\sum_{j=0}^{n} w_{j+1} \gamma_{j+1} \lambda_{j+1}}, 
\end{align*}
which concludes the proof by noting that $ V(\theta_{n+1}) \geq   V \left( \theta^{*} \right)$.
}

\subsection{Proof of Corollary \ref{cor:rate}}
The proof is a direct consequence of the fact that for a sufficiently large $n$:
$$
\sum_{k=1}^{n} \frac{1}{k^s} =
    \begin{cases}
        \mathcal{O}\left(n^{-s+1}\right) & \text{if }  0\leq s < 1 \eqsp,\\
        \mathcal{O}\left(1\right) & \text{if }  s > 1 \eqsp,\\
        \mathcal{O}\left(\log n\right) & \text{if }  s = 1 \eqsp.
    \end{cases}
$$

\subsection{Proof of Corollary \ref{cor:bounded_grad}}

By \hyperref[ass:A2]{H2}, using \eqref{eq:smoothness}, we obtain:
\begin{align*}
V\left(\theta_{k+1}\right) & \leq V\left(\theta_{k}\right) + \left\langle\nabla V\left(\theta_{k}\right) , \theta_{k+1} - \theta_{k} \right\rangle+\frac{L}{2}\left\|\theta_{k+1} - \theta_{k} \right\|^{2} \\
& \leq V\left(\theta_{k}\right)-\gamma_{k+1}\left\langle\nabla V\left(\theta_{k}\right) , A_k H_{\theta_{k}}\left(X_{k+1}\right)\right\rangle+\frac{L \gamma_{k+1}^{2}}{2}\left\|A_k\right\|^{2}\left\|H_{\theta_{k}}\left(X_{k+1}\right)\right\|^{2},
\end{align*}
which, using \hyperref[ass:A4]{H4} and \hyperref[ass:A5]{H5} yields:
\begin{align*}
V\left(\theta_{k+1}\right)  &\leq  V\left(\theta_{k}\right) - \gamma_{k+1}\left\langle\nabla V\left(\theta_{k}\right) , A_k   H_{\theta_{k}}\left(X_{k+1}\right)   \right\rangle +   \frac{L}{2} \gamma_{k+1}^{2}\beta_{k+1}^{2} M^{2} .
\end{align*}
Using \hyperref[ass:A3]{H3},
\begin{align*}
\mathbb{E} [V(\theta_{k+1}) |\mathcal{F}_{k}]  &\leq V(\theta_{k}) - \gamma_{k+1}\lambda_{n+1}\|\nabla V(\theta_{k})\|^{2} + \gamma_{k+1} \lambda_{k+1} r_{k+1} +  \frac{LM^{2}}{2} \gamma_{k+1}^{2}\beta_{k+1}^{2} \eqsp.
\end{align*}
Therefore,
\begin{align*}
\gamma_{k+1}\lambda_{k+1} &  \left\|\nabla V\left(\theta_{k}\right)\right\|^{2} \leq V\left(\theta_{k}\right) - \mathbb{E} \left[ V\left(\theta_{k+1}\right)|\mathcal{F}_{k} \right] + \gamma_{k+1}\lambda_{k+1}r_{k+1} + \frac{LM^{2}}{2} \gamma_{k+1}^{2}\beta_{k+1}^{2}\eqsp,
\end{align*}
and
\begin{align*}
  \sum_{k=0}^{n} \gamma_{k+1}\lambda_{k+1} \mathbb{E} \left[ \left\|\nabla V\left(\theta_{k}\right)\right\|^{2} \right] &\leq \mathbb{E}\left[V\left(\theta_{0}\right)-V\left(\theta_{n+1}\right) \right] +  \sum_{k=0}^{n}\gamma_{k+1}\lambda_{k+1}r_{k+1} \\
  & \quad + \frac{LM^{2}}{2} \sum_{k=0}^{n} \gamma_{k+1}^{2}\beta_{k+1}^{2}\eqsp. 
\end{align*}
Consequently, by definition of the discrete random variable $R$, 
\begin{align*}
\mathbb{E}\left[\left\| \nabla V\left(\theta_{R}\right)\right\|^{2}\right] &= \frac{1}{n}\sum_{k=0}^{n} \mathbb{E}\left[\left\| \nabla V\left(\theta_{k}\right)\right\|^{2}\right] \\ &\leq \sum_{k=0}^{n} \frac{\gamma_{k+1} \lambda_{k+1}}{\sqrt{n}}\mathbb{E}\left[\left\| \nabla V\left(\theta_{k}\right)\right\|^{2}\right] \\  &\leq \frac{ V_{0, n} + \sum_{k=0}^{n} \gamma_{k+1} \lambda_{k+1}r_{k+1} +  LM^{2} \sum_{k=0}^{n} \gamma_{k+1}^{2}\beta_{k+1}^{2}/2}{\sqrt{n}}\eqsp, 
\end{align*}
where $V_{0, n} = \mathbb{E}[ V(\theta_{0}) -V(\theta_{n+1}) ]$, which conclude the proof by noting that $V(\theta_{n+1}) \geq V(\theta^{*})$.

\subsection{Proof of Corollary \ref{cor:adagrad}}
\label{proof:adagrad}
Here, we consider the case where the regularization is non-increasing, i.e., where $\delta = \beta_{n+1}^{-2}$. The constant case is strictly analogous.
To verify \hyperref[ass:A4]{H4}, we demonstrate that the control of the maximum and minimum eigenvalues is satisfied for Adagrad and RMSProp.

\textbf{Adagrad}
\begin{itemize}
    \item \textbf{Lower bound for the smallest eigenvalue of $\mathbf{A_n}$. } 
By assumption \hyperref[ass:A5]{H5}, we have:
$$ \left\|\frac{1}{n+1} \sum_{k=0}^{n} H_{\theta_{k}}(X_{k+1}) H_{\theta_{k}}(X_{k+1})^\top\right\| \leq M^2\eqsp.$$
This implies that:
\begin{align*}
\lambda_{\text{min}}(A_n) &= \lambda_{\text{max}}\left(\beta_{n+1}^{-2} \mathit{I}_d + \text{Diag}\left(\frac{1}{n+1} \sum_{k=0}^{n} H_{\theta_{k}}(X_{k+1}) H_{\theta_{k}}(X_{k+1})^\top\right)\right)^{-1/2} \\
&\geq (\beta_{1}^{-2} + M^{2})^{-1/2}.
\end{align*} 

    \item \textbf{Upper bound for the largest eigenvalue of $\mathbf{A_n}$. }\\

$$ \lambda_{\text{max}}(A_n) = \lambda_{\text{min}}\left(\beta_{n+1}^{-2} \mathit{I}_d + \text{Diag}\left(\frac{1}{n+1} \sum_{k=0}^{n} H_{\theta_{k}}(X_{k+1}) H_{\theta_{k}}(X_{k+1})^\top\right)\right)^{-1/2} \leq \beta_{n+1}\eqsp.
$$
\end{itemize}

Therefore, by setting $\lambda_{n+1} = (\beta_{1}^{-2} + M^{2})^{-1/2}$ and $\beta_{n} = C_{\beta}n^{\beta}$, we have $\lambda = 0$ and one can arbitrarily choose $\beta$ (one can take $\beta = 0$ for the constant regularization case).  

\textbf{RMSProp}

\begin{itemize}
    \item \textbf{Lower bound for the smallest eigenvalue of $\mathbf{A_n}$. } 
By assumption \hyperref[ass:A5]{H5}, we have:
$$ \left\|V_n\right\| \leq (1 - \rho) \sum_{k=1}^{n} \rho^{n-k}\left\|H_{\theta_{k}}(X_{k+1})\right\|^2 \leq M^2(1 - \rho) \sum_{k=1}^{n} \rho^{n-k} \leq M^2\eqsp,$$
where we used the fact that $\sum_{k=1}^{n} \rho^{n-k} \leq (1 - \rho)^{-1}$.
This implies that:
$$ \lambda_{\text{min}}(A_n) = \lambda_{\text{max}}\left(\beta_{n+1}^{-2} \mathit{I}_d + \text{Diag}\left(V_n\right)\right)^{-1/2}\geq (\beta_{1}^{-2} + M^{2})^{-1/2}\eqsp.
$$
\item \textbf{Upper bound for the largest eigenvalue of $\mathbf{A_n}$. } Note that
$$ 
\lambda_{\text{max}}(A_n) = \lambda_{\text{min}}\left(\beta_{n+1}^{-2} \mathit{I}_d + \text{Diag}\left(V_n\right)\right)^{-1/2} \leq \beta_{n+1}\eqsp.
$$
\end{itemize}
Therefore, under \hyperref[ass:A2]{H2}, \hyperref[ass:A3]{H3}(i), and \hyperref[ass:A5]{H5}, we can conclude that
$$
\mathbb{E}\left[\left\| \nabla V\left(\theta_{R}\right)\right\|^{2}\right] = \mathcal{O}\left(\frac{\log n}{\sqrt{n}} + b_n\right)\eqsp,
$$
where $b_n$ corresponds to the bias which comes from $r_{n}$ in \hyperref[ass:A3]{H3}(i). Choosing $r_{n} = C_{r} n^{-r}$, we get:
$$
b_n =
\begin{cases}
       \mathcal{O}\left(n^{ - r}\right) & \text{if }   r < 1/2 \eqsp,\\
      \mathcal{O}\left(n^{-1/2}\right) & \text{if }  r > 1/2 \eqsp,\\
     \mathcal{O}\left(n^{-1/2}\log n\right) & \text{if }  r = 1/2 \eqsp.
\end{cases}
$$

Now, we show that under the control of bias, i.e.,
$$
 \left\|   \mathbb{E} \left[  H_{\theta_{n}}\left( X_{n+1} \right)|\mathcal{F}_{n} \right] - \nabla V \left( \theta_{n} \right) \right\| \leq C_{\alpha}n^{-\alpha}\eqsp,
$$
we can verify \hyperref[ass:A3]{H3}(i) with a similar bound on the bias, where $r = 2\alpha$. This yields the bias term $b_n$ as follows:
$$
b_n =
    \begin{cases}
        \mathcal{O}\left(n^{-2\alpha}\right) & \text{if }  \alpha < 1/4 \eqsp,\\
        \mathcal{O}\left(n^{-1/2}\right) & \text{if }  \alpha > 1/4 \eqsp,\\
        \mathcal{O}\left(n^{-1/2}\log n\right) & \text{if }  \alpha = 1/4 \eqsp.
    \end{cases}
$$

\textbf{Verifying Assumption \hyperref[ass:A3]{H3} (i) for Adagrad. }
 Using the tower property, we have:
$$\mathbb{E} \left[ \left\langle \nabla V \left( \theta_{n} \right) , A_{n}H_{\theta_{n}} \left( X_{n+1} \right) \right\rangle \right] = \mathbb{E} \left[ \mathbb{E} \left[ \left\langle \nabla V \left( \theta_{n} \right) , A_{n}H_{\theta_{n}} \left( X_{n+1} \right) \right\rangle |\mathcal{F}_{n} \right] \right],$$
where $(\mathcal{F}_{n})_{n\geq 0}$ represents the filtration generated by the random variables $(\theta_{0},\{X_{k}\}_{k \leq n})$. Let $\tilde{A}_{n}$ be an adaptive $\mathcal{F}_{n}$-measurable matrix. Then,
\begin{align*}
\mathbb{E} \left[ \left\langle \nabla V \left( \theta_{n} \right) , A_{n}H_{\theta_{n}} \left( X_{n+1} \right) \right\rangle |\mathcal{F}_{n} \right] &= \underbrace{\left\langle \nabla V \left( \theta_{n} \right) , \tilde{A}_{n} \mathbb{E}\left[H_{\theta_{n}} \left( X_{n+1} \right)  |\mathcal{F}_{n} \right]  \right\rangle}_{\text{Treated as in SGD but with $\lambda_{\min} ( \tilde{A}_{n} )$}} \\
& \quad+ \underbrace{\mathbb{E}\left[ \left\langle \nabla V \left( \theta_{n} \right) , (A_{n} - \tilde{A}_{n}) H_{\theta_{n}} \left( X_{n+1} \right) \right\rangle |\mathcal{F}_{n} \right]}_{\text{Control error between $A_{n}$ and $\tilde{A}_{n}$}}.
\end{align*}
We only verify Assumption \hyperref[ass:A3]{H3}(i) for Adagrad algorithm since it is analogous to RMSProp.
Consider $A_n$ given by:
\[
A_{n} = \left( \text{diag} \left( \beta_{n+1}^{-2}I_{d} + \frac{1}{n+1}\sum_{k=0}^{n} H_{\theta_{k}}\left( X_{k+1} \right)H_{\theta_{k}}\left( X_{k+1} \right)^{\top} \right) \right)^{-1/2}.
\]
First, writing
\[
\tilde{A}_{n} = \left( \text{diag} \left( \beta_{n+1}^{-2}I_{d} + \frac{1}{n+1}\sum_{k=0}^{n-1} H_{\theta_{k}}\left( X_{k+1} \right)H_{\theta_{k}}\left( X_{k+1} \right)^{\top} \right) \right)^{-1/2}
\]
 and  denoting by $A[i]$ the i-th element of the diagonal of a matrix $A$, we have
\[
 A_{n}[i] - \tilde{A}_{n}[i]  = u_{n}^{-1/2} \left( v_{n}^{1/2} - u_{n}^{1/2} \right)v_{n}^{-1/2}\leq 0\eqsp,
\]
where $$u_{n} = \beta_{n+1}^{-2} + \frac{1}{n+1}\sum_{k=0}^{n} \left(  H_{\theta_{k}}\left( X_{k+1} \right) [i] \right)^{2} \quad \text{and} \quad v_{n} = \beta_{n+1}^{-2} + \frac{1}{n+1}\sum_{k=0}^{n-1} \left(  H_{\theta_{k}}\left( X_{k+1} \right) [i] \right)^{2}.$$
Then, since $u_{n} \geq v_{n}$,
\begin{align*}
   A_{n}[i] - \tilde{A}_{n}[i] = \frac{v_{n} - u_{n}}{\sqrt{u_{n}v_{n}} \left( \sqrt{u_{n}}+ \sqrt{v}_{n} \right)} &\geq -   \frac{1}{n+1}\left( H_{\theta_{n}}\left( X_{n+1} \right)[i] \right)^{2} \frac{1}{2 {v_{n}}^{3/2}} \\ &\geq -\frac{\beta_{n+1}^{3}}{n+1}\left( H_{\theta_{n}}\left( X_{n+1} \right)[i] \right)^{2} .
\end{align*}
Since the bias of $H_{\theta_{n}}\left( X_{n+1} \right)$ is bounded by $\tilde{b}_{n} := C_{\alpha}n^{-\alpha}$, 
\begin{align*}
    \mathbb{E}&\left[ \left\langle \nabla V \left( \theta_{n} \right)  , A_{n}H_{\theta_{n}} \left( X_{n+1} \right) \right\rangle |\mathcal{F}_{n} \right] \\
    &= \left\langle \nabla V \left( \theta_{n} \right) , \tilde{A}_{n} \mathbb{E}\left[H_{\theta_{n}} \left( X_{n+1} \right)  |\mathcal{F}_{n} \right]  \right\rangle +  \mathbb{E}\left[ \left\langle \nabla V \left( \theta_{n} \right) , (A_{n} - \tilde{A}_{n}) H_{\theta_{n}} \left( X_{n+1} \right) \right\rangle |\mathcal{F}_{n} \right] \\
    & \geq \lambda_{\min} \left( \tilde{A}_{n} \right) \left\| \nabla V \left( \theta_{n} \right) \right\|^{2} - \lambda_{\max} \left( \tilde{A}_{n} \right) \left\| \nabla V \left( \theta_{n} \right) \right\| \tilde{b}_{n} \\
    & \quad - \left\| \nabla V \left( \theta_{n} \right) \right\| \frac{\beta_{n+1}^{3}}{n+1} \mathbb{E}\left[ \left\| H_{\theta_{n}}\left( X_{n+1} \right) \right\|^{3} |\mathcal{F}_{n} \right].
\end{align*}
As  $H_{\theta_{n}}\left( X_{n+1} \right)$ and the gradient of $V$ are uniformly bounded by $M$, $\lambda_{\min} ( \tilde{A}_{n} ) \geq (\beta_{1}^{-2} + M^2)^{-1/2}$, so that
$$
    \mathbb{E} \left[ \left\langle \nabla V \left( \theta_{n} \right) , A_{n}H_{\theta_{n}} \left( X_{n+1} \right) \right\rangle |\mathcal{F}_{n} \right]   \geq  \frac{1}{\sqrt{\beta_{1}^{-2} + M^2}}\left\| \nabla V \left( \theta_{n} \right) \right\|^{2} -  \beta_{n+1}M \tilde{b}_{n} - M^{4} \frac{\beta_{n+1}^{3}}{n+1}\eqsp, 
$$
and Assumption \hyperref[ass:A3]{H3}(i) is satisfied with $\lambda_{n+1} = (\beta_{1}^{-2} + M^2)^{-1/2}$ and $r_{n+1} = M \beta_{n+1}^{2} \tilde{b}_{n}^2/\lambda_{n+1} + M^{4} \beta_{n+1}^{3}/(n+1)$.

\subsection{Proof of Theorem \ref{th:adam}}

The proof of this theorem is inspired by \citep{reddi2019convergence} and \citep{tong2022calibrating}, considering biased gradient estimators and decreasing step sizes. 
We define the operation $\max(D_1, D_2)$ for diagonal matrices $D_1$ and $D_2$ as the matrix formed by taking the maximum between the diagonal elements of $D_1$ and $D_2$. We say that the sequence $\left( A_{n} \right)_{n\geq 1}$ of diagonal matrices is decreasing if all diagonal terms are decreasing, in other words, if all eigenvalues are decreasing.

Let $\tilde{\theta}_{k+1}=\theta_{k+1}+\kappa\left(\theta_{k+1}-\theta_{k}\right)$, for $k \geq 1, \kappa \in[0,1)$ and $m_{k} = \rho_{1}m_{k-1} + (1-\rho_{1})g_{k}$ with $g_{k} = H_{\theta_{k}}(X_{k+1})$. Using the recursion of AMSGRAD, we have:
$$
\begin{aligned}
\tilde{\theta}_{k+1} - \tilde{\theta}_{k} = (1+\kappa)\theta_{k+1} -(1+2\kappa) \theta_{k} + \kappa \theta_{k-1}&= (1+\kappa)\left(\theta_{k+1}-\theta_{k}\right) - \kappa\left(\theta_{k}-\theta_{k-1}\right) \\
&= -(1+\kappa) \gamma_{k+1} A_{k} m_{k} + \kappa \gamma_{k} A_{k-1} m_{k-1}\eqsp.
\end{aligned}
$$
Choosing $\kappa = \rho_{1}/(1-\rho_{1})$, we can rewrite it as:
$$
\tilde{\theta}_{k+1} - \tilde{\theta}_{k} = \kappa \left( \gamma_{k} A_{k-1} - \gamma_{k+1} A_{k} \right) m_{k-1} - \gamma_{k+1} A_{k} g_{k}\eqsp.
$$

By Assumption \hyperref[ass:A2]{H2}, $V$ is $L$-smooth, using the recursion of AMSGRAD together with a Taylor expansion with $\tilde{\theta}_{k}$, we obtain:
\begin{align*}
V (\tilde{\theta}_{k+1}) & \leq V\left(\tilde{\theta}_{k}\right) + \left\langle\nabla V\left(\tilde{\theta}_{k}\right) , \tilde{\theta}_{k+1}- \tilde{\theta}_{k} \right\rangle +\frac{L}{2}\left\|\tilde{\theta}_{k+1} - \tilde{\theta}_{k} \right\|^{2}\\
& \leq V\left(\tilde{\theta}_{k}\right) - \gamma_{k+1} \left\langle\nabla V\left(\tilde{\theta}_{k}\right) , A_{k} g_{k} \right\rangle + \kappa \left\langle\nabla V\left(\tilde{\theta}_{k}\right) , \left( \gamma_{k} A_{k-1} - \gamma_{k+1} A_{k} \right) m_{k-1} \right\rangle \\
& \quad + L \gamma_{k+1}^2 \left\| A_{k} g_{k} \right\|^{2} + L \kappa^2 \left\| \left( \gamma_{k} A_{k-1} - \gamma_{k+1} A_{k} \right) m_{k-1}\right\|^{2}\\
& \leq V\left(\tilde{\theta}_{k}\right) + T_{1,k} + T_{2,k} + T_{3,k} + T_{4,k}\eqsp,
\end{align*}
where
\begin{align*}
    T_{1,k} &= - \gamma_{k+1} \left\langle\nabla V\left(\theta_{k}\right) , A_{k} g_{k} \right\rangle + L \gamma_{k+1}^2 \left\| A_{k} g_{k} \right\|^{2},\\
    T_{2,k} &= - \gamma_{k+1} \left\langle\nabla V\left(\tilde{\theta}_{k}\right) - \nabla V\left(\theta_{k}\right) , A_{k} g_{k} \right\rangle,\\
    T_{3,k} &= \kappa \left\langle\nabla V\left(\tilde{\theta}_{k}\right) , \left( \gamma_{k} A_{k-1} - \gamma_{k+1} A_{k} \right) m_{k-1} \right\rangle,\\
    T_{4,k} &= L \kappa^2 \left\| \left( \gamma_{k} A_{k-1} - \gamma_{k+1} A_{k} \right) m_{k-1}\right\|^{2}.
\end{align*}
Note first that
\begin{align*}
\sum_{k=1}^{n} \mathbb{E}\left[T_{1,k}\right] &= - \sum_{k=1}^{n} \gamma_{k+1} \mathbb{E}\left[\left\langle\nabla V\left(\theta_{k}\right) , A_{k} g_{k} \right\rangle \right] + L \sum_{k=1}^{n} \gamma_{k+1}^2 \mathbb{E}\left[ \left\| A_{k} g_{k} \right\|^{2} \right] \\
&\leq - C_{\lambda} \sum_{k=1}^{n} \gamma_{k+1} \mathbb{E}\left[ \left\|\nabla V\left(\theta_{k}\right) \right\|^{2} \right] + C_{\lambda} \sum_{k=1}^{n} \gamma_{k+1} r_{k+1} + L \sum_{k=1}^{n} \gamma_{k+1}^2 \mathbb{E}\left[ \left\| A_{k} g_{k} \right\|^{2} \right],
\end{align*}
where $C_{\lambda} = (\delta + M^2)^{-1/2}$.

For the second term, using the inequality $xy \leq x^2/2 + y^2/2$ for all $x, y$, and the smoothness of $V$, we get:
\begin{align*}
\sum_{k=1}^{n} \mathbb{E}\left[T_{2,k}\right] &= - \sum_{k=1}^{n} \mathbb{E}\left[ \left\langle\nabla V\left(\tilde{\theta}_{k}\right) - \nabla V\left(\theta_{k}\right) , \gamma_{k+1} A_{k} g_{k} \right\rangle \right] \\
& \leq \frac{1}{2} \sum_{k=1}^{n} \mathbb{E}\left[ \left\| \nabla V\left(\tilde{\theta}_{k}\right) - \nabla V\left(\theta_{k}\right) \right\|^{2} \right] + \frac{1}{2} \sum_{k=1}^{n} \mathbb{E}\left[ \left\| \gamma_{k+1} A_{k} g_{k} \right\|^{2} \right] \\
& \leq \frac{L^2}{2} \sum_{k=1}^{n} \mathbb{E}\left[ \left\| \tilde{\theta}_{k} - \theta_{k} \right\|^{2} \right] + \sum_{k=1}^{n} \frac{\gamma_{k+1}^2}{2} \mathbb{E}\left[ \left\| A_{k} g_{k} \right\|^{2} \right] \\
& \leq \frac{\kappa^2 L^2}{2} \sum_{k=1}^{n} \mathbb{E}\left[ \left\| \theta_{k} - \theta_{k-1} \right\|^{2} \right] + \sum_{k=1}^{n} \frac{\gamma_{k+1}^2}{2} \mathbb{E}\left[ \left\| A_{k} g_{k} \right\|^{2} \right] \\
& \leq \frac{\kappa^2 L^2}{2} \sum_{k=1}^{n} \gamma_{k}^2 \mathbb{E}\left[ \left\| A_{k-1} m_{k-1} \right\|^{2} \right] + \sum_{k=1}^{n} \frac{\gamma_{k+1}^2}{2} \mathbb{E}\left[ \left\| A_{k} g_{k} \right\|^{2} \right]. 
\end{align*}
For the third term, using the boundedness of the gradient of $V$ and the fact that $\left\| m_k \right\| \leq M$ by Lemma \ref{lemma:lemma_adam}, we have:
\begin{align*}
\sum_{k=1}^{n} \mathbb{E}\left[T_{3,k}\right] &= \kappa \sum_{k=1}^{n} 
\mathbb{E}\left[ \left\langle\nabla V\left(\tilde{\theta}_{k}\right) , \left( \gamma_{k} A_{k-1} - \gamma_{k+1} A_{k} \right) m_{k-1} \right\rangle \right] \\
&\leq \kappa M^2 \sum_{i=1}^{d} \sum_{k=1}^{n} \mathbb{E}\left[ \gamma_{k} A_{k-1}[i] - \gamma_{k+1} A_{k}[i] \right] \\
&\leq \kappa M^2 \sum_{i=1}^{d} \mathbb{E}\left[ \gamma_{1} A_{0}[i] - \gamma_{n+1} A_{n}[i] \right] \leq \kappa M^2 d C_{\gamma}\eqsp, 
\end{align*}
where in the second inequality, we used the fact that $\gamma_k$ and $A_k$ are decreasing since we use $\hat{V}_{k} = \max(\hat{V}_{k-1}, V_k)$. 
For the last term, using the boundedness of the gradient of $V$ yields:
\begin{align*}
\sum_{k=1}^{n} \mathbb{E}\left[T_{4,k}\right] &= L \kappa^2 \sum_{k=1}^{n} \mathbb{E}\left[\left\| \left( \gamma_{k} A_{k-1} - \gamma_{k+1} A_{k} \right) m_{k-1}\right\|^{2} \right] \\
&\leq  L \kappa^2 M^2 \sum_{i=1}^{d} \sum_{k=1}^{n} \mathbb{E}\left[ \left( \gamma_{k} A_{k-1}[i] - \gamma_{k+1} A_{k}[i] \right)^2 \right] \\
&\leq  L \kappa^2 M^2 \sum_{i=1}^{d} \sum_{k=1}^{n} \mathbb{E}\left[ \left( \gamma_{k} A_{k-1}[i] \right)^2 - \left( \gamma_{k+1} A_{k}[i] \right)^2 \right] \\
&\leq  L \kappa^2 M^2 d C_{\gamma}^2\eqsp,
\end{align*}
where we used the inequality $(x - y)^2 \leq x^2 - y^2$ when $x \geq y$ in the second last inequality.

Combining all these terms, we finally obtain:
\begin{align*}
&C_{\lambda} \sum_{k=1}^{n} \gamma_{k+1} \mathbb{E}\left[ \left\|\nabla V\left(\theta_{k}\right) \right\|^{2} \right] \\
&\leq  V^{*} + C_{\lambda} \sum_{k=1}^{n} \gamma_{k+1} r_{k+1} + L \sum_{k=1}^{n} \gamma_{k+1}^2 \mathbb{E}\left[ \left\| A_{k} g_{k} \right\|^{2} \right] + \sum_{k=1}^{n} \frac{\gamma_{k+1}^2}{2} \mathbb{E}\left[ \left\| A_{k} g_{k} \right\|^{2} \right] \\
& \quad + \frac{\kappa^2 L^2}{2} \sum_{k=1}^{n} \gamma_{k}^2 \mathbb{E}\left[ \left\| A_{k-1} m_{k-1} \right\|^{2} \right] + \kappa M^2 d C_{\gamma} + L \kappa^2 M^2 d C_{\gamma}^2\eqsp, 
\end{align*}
where $V^{*} = \mathbb{E}[ V(\theta_{0}) - V(\theta^{*}) ] \geq \mathbb{E}[ V(\theta_{0}) -V(\tilde\theta_{n+1}) ]$. Choosing $\gamma_{n} = n^{-1/2}$ and using Lemma \ref{lemma:lemma_adam} and \citep[Lemma 24]{chen2022towards} yields
\begin{align*}
\sum_{k=1}^{n} \gamma_{k+1}^2 \mathbb{E}\left[ \left\| A_{k} m_{k} \right\|^{2} \right] \leq (1-\rho_{1}) \sum_{k=1}^{n} \gamma_{k+1}^2 \mathbb{E}\left[ \left\| A_{k} g_{k} \right\|^{2} \right] &\leq (1-\rho_{1}) d C_{\gamma}^2 \log \left( 1 + \frac{nM^2}{\delta}\right)\\ &= \mathcal{O}\left( d \log n \right).
\end{align*}
Therefore, by dividing both sides by $C_{\lambda}n^{-1/2}$, we obtain
\begin{align*}
\frac{1}{n} \sum_{k=1}^{n} \mathbb{E}\left[ \left\|\nabla V\left(\theta_{k}\right) \right\|^{2} \right] &= \mathcal{O}\left(\frac{1}{\sqrt{n}} + \frac{d \log n}{\sqrt{n}} + \frac{d}{\sqrt{n}} + b_n\right),
\end{align*}
which concludes the proof.

\begin{lemma} \label{lemma:lemma_adam}
Let $\gamma_{k+1} \leq \gamma_{k}$ for all $k \in \mathbb{N}$, and let $A_k$ be the adaptive matrix defined in Algorithm~\ref{alg:adam}. Assume that $\rho_{1} \in [0,1)$.
Then, for all $k \in \mathbb{N}$:
$$
\left\| m_{k} \right\| \leq M \quad \mathrm{and} \quad
\sum_{k=1}^{n} \gamma_{k+1}^2 \mathbb{E}\left[ \left\| A_{k} m_{k} \right\|^{2} \right] \leq (1-\rho_{1}) \sum_{k=1}^{n} \gamma_{k+1}^2 \left\| A_{k} g_{k} \right\|^{2}.
$$
\end{lemma} 

\begin{proof}
For the first inequality, we have:
\begin{align*}
\left\| m_{k} \right\| &= \left\| (1-\rho_{1}) \sum_{\ell=1}^{k} \rho_{1}^{k-\ell} g_{\ell} \right\| \leq (1-\rho_{1}) \sum_{\ell=1}^{k} \rho_{1}^{k-\ell} \left\| g_{\ell} \right\| \leq M (1-\rho_{1}) \sum_{\ell \geq 0} \rho_{1}^{\ell} \leq M\eqsp,
\end{align*}
where we used the fact that $\sum_{\ell \geq 0} \rho_{1}^{\ell} = 1/(1-\rho_{1})$. For the second inequality, using the fact that $\gamma_k$ and $A_k$ are decreasing (in the sense that all eigenvalues of $A_k$ are decreasing), since we use $\hat{V}_{k} = \max(\hat{V}_{k-1}, V_k)$, we can write:
\begin{align*}
\sum_{k=1}^{n} \gamma_{k+1}^2 \left\| A_{k} m_{k} \right\|^{2} &= \sum_{k=1}^{n} \gamma_{k+1}^2 \left\| A_{k} (1-\rho_{1}) \sum_{\ell=1}^{k} \rho_{1}^{k-\ell} g_{\ell} \right\|^{2} \\
& \leq (1-\rho_{1})^2 \sum_{k=1}^{n} \gamma_{k+1}^2 \sum_{\ell=1}^{k} \rho_{1}^{k-\ell} \left\| A_{\ell} g_{\ell} \right\|^{2} \\
& \leq (1-\rho_{1})^2 \sum_{k=1}^{n} \sum_{\ell=1}^{k} \rho_{1}^{k-\ell} \gamma_{\ell+1}^2 \left\| A_{\ell} g_{\ell} \right\|^{2} \\
& \leq (1-\rho_{1})^2 \sum_{\ell=1}^{n} \sum_{k=\ell}^{n} \rho_{1}^{k-\ell} \gamma_{\ell+1}^2 \left\| A_{\ell} g_{\ell} \right\|^{2}, 
\end{align*}
which concludes the proof.
\end{proof}

\subsection{The Impact of regularization parameter $\delta$ in Adam} \label{sub:regularization}

In our case, we have a dependence on $\delta$ in the logarithm, which is common for adaptive algorithms. The regularization parameter $\delta$, originally introduced to avoid the zero denominator issue when $V_k$ approaches $0$, is often overlooked. However, it has been empirically observed that the performance of adaptive methods can be sensitive to the choice of this parameter, especially when a very small $\delta$ is used, which has resulted in performance issues in some applications.

In practice, $\delta$ is typically chosen as $10^{-8}$. In our convergence rate analysis, even though the logarithm of $\delta^{-1}$ is small, it still impacts the convergence rate. A larger $\delta$ will lead to a better convergence rate, while a smaller $\delta$ will preserve stronger adaptivity. We need to find a better compromise between the convergence rate and the adaptivity to choose $\delta$. 
In \citep{zaheer2018adaptive, reddi2019convergence, tong2022calibrating}, it was shown that by choosing $\delta$ between $10^{-3}$ and $10^{-1}$, better results were obtained in some applications of deep learning.

Furthermore, several modified versions of Adam have been proposed, such as AMSGRAD \citep{zaheer2018adaptive} and YOGI \citep{reddi2019convergence} with the discussion of the regularization parameter $\delta$. The authors of 
\citep{tong2022calibrating} proposed a new modified version of Adam called SADAM to represent the calibrated ADAM using the softplus function. In this algorithm, they define $\hat{V}_k = \text{softplus}\left(\sqrt{V_{k}}\right)$ while other terms remain unchanged. Since we have:
$$
\hat{V}_k = \text{softplus}\left(\sqrt{V_{k}}\right) = \frac{1}{b} \log\left(1 + e^{b \sqrt{V_{k}}}\right) \approx \frac{1}{b} \log\left(e^{b \sqrt{V_{k}}}\right) = \sqrt{V_{k}}\eqsp, 
$$
where $b$ is the parameter to control for achieving a better convergence rate. In this case, we have $\lambda_{\text{max}}(A_k) \leq b/\log 2$, which is similar to $\delta^{-1/2}$ in Adagrad and Adam. Additionally, they demonstrate that $b \approx 50$ appears to be a good choice based on the empirical observations. 


\newpage

\section{IWAE / BR-IWAE}
\label{supp:sec:iwae}

\subsection{Importance Weighted Autoencoder (IWAE)}

In this section, we elaborate on the IWAE procedure within our framework to illustrate its convergence rate. The IWAE objective function is defined as:
\begin{equation*}
\mathcal{L}^{\text{IWAE}}_k(\theta, \phi; x) = \mathbb{E}_{q^{\otimes k}_{\phi}(\cdot|x)} \left[ \log \frac{1}{k} \sum_{\ell=1}^{k} \frac{p_{\theta}(x, Z^{(\ell)})}{q_{\phi}(Z^{(\ell)} \mid x)} \right],
\end{equation*}
where $k$ corresponds to the number of samples drawn from the encoder's approximate posterior distribution. 
Denoting $V$ as the objective function, i.e., $V(\theta) = \log p_{\theta}(x)$,  the gradient of $V$ and the estimator of the gradient of the ELBO of the IWAE objective are given by:
\begin{align} \label{eq:grad_IWAE}
\begin{split}
\nabla_{\theta} V(\theta) &= \nabla_{\theta} \log p_{\theta}(x) = \mathbb{E}_{p_{\theta}(\cdot|x)} \left[ \nabla_{\theta} \log p_{\theta}(x, z) \right]\eqsp,\\
\widehat{\nabla}_{\theta} \mathcal{L}^{\text{IWAE}}_k(\theta, \phi; x) &= \sum_{\ell=1}^{k} \frac{w^{(\ell)}}{\sum_{\ell=1}^{k} w^{(\ell)}} \nabla_{\theta} \log p_{\theta}(x, z^{(\ell)})\eqsp,
\end{split}
\end{align}
where $w^{(\ell)} = p_{\theta}(x, z^{(\ell)})/q_{\phi}(z^{(\ell)}|x)$ the unnormalized importance weights. Theorem~\ref{thm:bias_IWAE} provides an upper bound for the bias of this estimator.
\begin{theorem}\label{thm:bias_IWAE}
Let $\mathsf{X} \subseteq \mathbb{R}^{d_x}$ and $\mathsf{Z} \subseteq \mathbb{R}^{d_z}$ denote the data space and the latent space, respectively. Assume that there exists $M$ such that for all $\theta \in \Theta \subset \mathbb{R}^{d}$, $x \in \mathsf{X}$ and $z \in \mathsf{Z}$, $
\| \nabla_{\theta} \log p_{\theta}(x,z)\| \leq M(x)$. 
Then, there exists a constant $C > 0$ such that for all $\theta \in \Theta$, $\phi \in \Phi$ and $x \in \mathsf{X}$,
$$
\left\| \mathbb{E}_{q^{\otimes k}_{\phi}(\cdot|x)} \left[ \widehat{\nabla}_{\theta} \mathcal{L}^{\text{IWAE}}_k(\theta, \phi; x) - \nabla_{\theta} V(\theta) \right] \right\| \leq \frac{C}{k}\eqsp,
$$
where $\nabla_{\theta} V(\theta)$ and $\widehat{\nabla}_{\theta} \mathcal{L}^{\text{IWAE}}_k(\theta, \phi; x)$ are defined in \eqref{eq:grad_IWAE}.
\end{theorem}


\begin{proof}
The proof is adapted from \citep[Theorem 2.1]{agapiou2017importance}. By definition,
$$
\widehat{\nabla}_{\theta} \mathcal{L}^{\text{IWAE}}_k(\theta, \phi; x) - \nabla_{\theta} V(\theta) = \frac{\sum_{\ell=1}^{k} w^{(\ell)} \left( \nabla_{\theta} \log p_{\theta}(x, z^{(\ell)}) - \mathbb{E}_{p_{\theta}(\cdot|x)} \left[ \nabla_{\theta} \log p_{\theta}(x, z) \right] \right)}{\sum_{\ell=1}^{k} w^{(\ell)}}.
$$
Writing $\tilde H(x,z^{(\ell)}) = \nabla_{\theta} \log p_{\theta}(x, z^{(\ell)}) - \mathbb{E}_{p_{\theta}(\cdot|x)} \left[ \nabla_{\theta} \log p_{\theta}(x, z) \right]$, yields
$$
\hat{\nabla}_{\theta} \mathcal{L}^{\text{IWAE}}_k(\theta, \phi; x) - \nabla_{\theta} V(\theta) = \frac{\sum_{\ell=1}^{k} w^{(\ell)} \tilde H(x,z^{(\ell)})}{\sum_{\ell=1}^{k} w^{(\ell)}}.
$$
Since $\mathbb{E}_{q_{\phi}} [ w \tilde H(x,z)] = 0$, we have:
$$
\widehat{\nabla}_{\theta} \mathcal{L}^{\text{IWAE}}_k(\theta, \phi; x) - \nabla_{\theta} V(\theta) = \frac{\frac{1}{k} \sum_{\ell=1}^{k} w^{(\ell)} \tilde H(x,z^{(\ell)}) - \mathbb{E}_{q_{\phi}} \left[ w \tilde H(x,z)\right]}{\frac{1}{k} \sum_{\ell=1}^{k} w^{(\ell)}}.
$$
As $\sum_{\ell=1}^{k} w^{(\ell)} \tilde H(x,z^{(\ell)})/k$ is an unbiased estimator of $\mathbb{E}_{q_{\phi}} \left[ w \tilde H(x,z)\right]$, 
\begin{multline*}
\mathbb{E}_{q_{\phi}} \left[ \hat{\nabla}_{\theta} \mathcal{L}^{\text{IWAE}}_k(\theta, \phi; x) - \nabla_{\theta} V(\theta) \right] \\= \mathbb{E}_{q_{\phi}} \left[ \left( \frac{1}{\frac{1}{k}\sum_{\ell=1}^{k} w^{(\ell)}} - \frac{1}{\mathbb{E}_{q_{\phi}} \left[ w \right]} \right) \left(\frac{1}{k}\sum_{\ell=1}^{k} w^{(\ell)} \tilde H(x,z^{(\ell)}) - \mathbb{E}_{q_{\phi}} \left[ w \tilde H(x,z)\right] \right) \right],
\end{multline*}
so that
\begin{multline*}
\mathbb{E}_{q_{\phi}} \left[ \hat{\nabla}_{\theta} \mathcal{L}^{\text{IWAE}}_k(\theta, \phi; x) - \nabla_{\theta} V(\theta) \right] \\= \mathbb{E}_{q_{\phi}} \left[ \frac{\left(\frac{1}{k}\sum_{\ell=1}^{k} w^{(\ell)} \tilde H(x,z^{(\ell)}) - \mathbb{E}_{q_{\phi}} \left[ w \tilde H(x,z)\right] \right) \left(\mathbb{E}_{q_{\phi}} \left[ w \right] - \frac{1}{k} \sum_{\ell=1}^{k} w^{(\ell)} \right) }{\mathbb{E}_{q_{\phi}} \left[ w \right] \frac{1}{k} \sum_{\ell=1}^{k} w^{(\ell)}}  \right].
\end{multline*}
Therefore,
\begin{align*}
\left\| \mathbb{E}_{q_{\phi}} \left[ \hat{\nabla}_{\theta} \mathcal{L}^{\text{IWAE}}_k(\theta, \phi; x) - \nabla_{\theta} V(\theta) \right] \right\| & \leq A_1 + A_2,
\end{align*}
where
\begin{align*}
    A_1 &=\left\| \mathbb{E}_{q_{\phi}} \left[ \left( \hat{\nabla}_{\theta} \mathcal{L}^{\text{IWAE}}_k(\theta, \phi; x) - \nabla_{\theta} V(\theta) \right) \mathds{1}_{\left\{\frac{2}{k}\sum_{\ell=1}^{k} w^{(\ell)} > \mathbb{E}_{q_{\phi}} \left[ w \right]\right\}} \right] \right\|,\\
    A_2 &=\left\| \mathbb{E}_{q_{\phi}} \left[ \left( \widehat{\nabla}_{\theta} \mathcal{L}^{\text{IWAE}}_k(\theta, \phi; x) - \nabla_{\theta} V(\theta) \right) \mathds{1}_{\left\{\frac{2}{k}\sum_{\ell=1}^{k} w^{(\ell)} \leq \mathbb{E}_{q_{\phi}} \left[ w \right]\right\}} \right] \right\|.
\end{align*}
Note that
\begin{align*}
A_1&\leq \left\| \mathbb{E}_{q_{\phi}} \left[ \frac{2}{\mathbb{E}_{q_{\phi}} \left[ w \right]^2}  \left(\frac{1}{k} \sum_{\ell=1}^{k} w^{(\ell)} \tilde H(x,z^{(\ell)}) - \mathbb{E}_{q_{\phi}} \left[ w \tilde H(x,z)\right] \right) \left(\mathbb{E}_{q_{\phi}} \left[ w \right] - \frac{1}{k} \sum_{\ell=1}^{k} w^{(\ell)} \right) \right] \right\| \\
& \leq \frac{2}{\mathbb{E}_{q_{\phi}} \left[ w \right]^2} 
\mathbb{E}_{q_{\phi}} \left[ \left\| \frac{1}{k} \sum_{\ell=1}^{k} w^{(\ell)} \tilde H(x,z^{(\ell)}) - \mathbb{E}_{q_{\phi}} \left[ w \tilde H(x,z)\right] \right\| \left\| \frac{1}{k} \sum_{\ell=1}^{k} w^{(\ell)} - \mathbb{E}_{q_{\phi}} \left[ w \right] \right\| \right] \\
& \leq \frac{2}{\mathbb{E}_{q_{\phi}} \left[ w \right]^2} 
\mathbb{E}_{q_{\phi}} \left[ \left\| \frac{1}{k} \sum_{\ell=1}^{k} w^{(\ell)} \tilde H(x,z^{(\ell)}) - \mathbb{E}_{q_{\phi}} \left[ w \tilde H(x,z)\right] \right\|^2 \right]^{1/2}\\
&\hspace{5cm}\times\mathbb{E}_{q_{\phi}} \left[ \left( \frac{1}{k} \sum_{\ell=1}^{k} w^{(\ell)} - \mathbb{E}_{q_{\phi}} \left[ w \right] \right)^2 \right]^{1/2}, 
\end{align*}
where we used Cauchy-Schwarz inequality in the last inequality. On the other hand,
$$\mathbb{E}_{q_{\phi}} \left[ \left( \frac{1}{k} \sum_{\ell=1}^{k} w^{(\ell)} - \mathbb{E}_{q_{\phi}} \left[ w \right] \right)^2 \right] = \mathbb{V} \left( \frac{1}{k} \sum_{\ell=1}^{k} w^{(\ell)} \right) \leq \frac{\mathbb{E}_{q_{\phi}} \left[ w^2 \right]}{k},
$$
and
\begin{multline*}
\mathbb{E}_{q_{\phi}} \left[ \left\| \frac{1}{k} \sum_{\ell=1}^{k} w^{(\ell)} \tilde H(x,z^{(\ell)}) - \mathbb{E}_{q_{\phi}} \left[ w \tilde H(x,z)\right] \right\|^2 \right]\\
= \text{Tr} \left( \mathbb{V} \left( \frac{1}{k} \sum_{\ell=1}^{k} w^{(\ell)} \tilde H(x,z^{(\ell)}) \right) \right) \leq 4 d M^2 \frac{\mathbb{E}_{q_{\phi}} \left[ w^2 \right]}{k}.
\end{multline*}
Finally,  we deduce that
$$ A_1\leq \frac{2}{\mathbb{E}_{q_{\phi}} \left[ w \right]^2} 
\frac{1}{\sqrt{k}} \mathbb{E}_{q_{\phi}} \left[ w^2 \right]^{1/2} \frac{2\sqrt{d}M}{\sqrt{k}} \mathbb{E}_{q_{\phi}} \left[ w^2 \right]^{1/2} =  \frac{\mathbb{E}_{q_{\phi}} \left[ w^2 \right]}{\mathbb{E}_{q_{\phi}} \left[ w \right]^2} \frac{4\sqrt{d}M}{k}.
$$
Using the assumption on the boundedness of $\left\| \nabla_{\theta} \log p_{\theta}(x,z)\right\|$ and the Markov inequality, we obtain:
\begin{align*}
A_2 &\leq 2 M \mathbb{P}\left( 2\frac{1}{k}\sum_{\ell=1}^{k} w^{(\ell)} \leq \mathbb{E}_{q_{\phi}} \left[ w \right] \right) \\
&\leq 2 M \mathbb{P}\left( 2 \left( \frac{1}{k}\sum_{\ell=1}^{k} w^{(\ell)} - \mathbb{E}_{q_{\phi}} \left[ w \right] \right) \leq -\mathbb{E}_{q_{\phi}} \left[ w \right] \right) \\
&\leq 2 M \mathbb{P}\left( \left| \frac{1}{k}\sum_{\ell=1}^{k} w^{(\ell)} - \mathbb{E}_{q_{\phi}} \left[ w \right] \right| \geq \frac{\mathbb{E}_{q_{\phi}} \left[ w \right]}{2} \right) \leq \frac{\mathbb{E}_{q_{\phi}} \left[ w^2 \right]}{\mathbb{E}_{q_{\phi}} \left[ w \right]^2} \frac{8 M}{k},
\end{align*}
which concludes the proof.
\end{proof}

\begin{algorithm}[ht!]
   \caption{\textbf{Adaptive Stochastic Approximation for IWAE}}
   \label{alg:rsg_iwae}
\begin{algorithmic}
   \STATE {\bfseries Input:} Initial point $\theta_{0}$, maximum number of iterations $n$, step sizes $\{\gamma_{k}\}_{k \geq 1}$ and a hyperparameter $\alpha \geq 0$ to control the bias and MSE.
   \FOR{$k=0$ to $n-1$}
   \STATE Compute the stochastic update $\nabla_{\theta, \phi} \mathcal{L}^{\mathrm{IWAE}}_{k^{\alpha}}(\theta_{k}, \phi_{k}; X_{k+1})$ using $k^{\alpha}$ samples from the variational posterior distribution and adaptive steps $A_k$.
   \STATE Set $\theta_{k+1}=\theta_{k}-\gamma_{k+1} A_k \nabla_{\theta} \mathcal{L}^{\mathrm{IWAE}}_{k^{\alpha}}(\theta_{k}, \phi_{k}; X_{k+1}).$
   \STATE Set $\phi_{k+1} = \phi_{k} - \gamma_{k+1} A_k \nabla_{\phi} \mathcal{L}^{IWAE}_{k^{\alpha}}(\theta_{k}, \phi_{k}; X_{k+1})$.
   \ENDFOR
   \STATE {\bfseries Output:} $\left(\theta_{k}\right)_{0 \leq k \leq n}$
\end{algorithmic}
\end{algorithm}

\subsection{BR-IWAE}

In this section, we provide additional details on the Biased Reduced Importance Weighted Autoencoder (BR-IWAE). 
In IWAE, instead of estimating the gradient of the ELBO with respect to $\theta$ via the Monte Carlo method, we estimate the gradient of the true objective function $\mathbb{E}_{p_{\theta}(\cdot|x)} \left[ \nabla_{\theta} \log p_{\theta}(x, z) \right]$ using the BR-SNIS estimator \citep{cardoso2022br}. This estimator aims to reduce the bias of self-normalized importance sampling estimators without increasing the variance.

\begin{algorithm}[H]
   \caption{\textbf{BR-IWAE Gradient Estimator}}
   \label{alg:br-iwae}
\begin{algorithmic}
   \STATE {\bfseries Input:} Maximum number of iterations $t_{\text{max}}$ for MCMC and number of samples $k$ from the variational distribution $q_{\phi}(\cdot \mid x)$.
   \STATE {\bfseries Initialization:} Draw $\tilde z_{0}$ from the variational distribution $q_{\phi}(\cdot \mid x)$.
   \FOR{$t=0$ to $t_{\text{max}}-1$}
   \STATE Draw $I_{t+1} \in \{1, \ldots, k\}$ uniformly at random and set $z_{t+1}^{I_{t+1}}=\tilde z_{t}$.
   \STATE Draw $z_{t+1}^{1: k \backslash\left\{I_{t+1}\right\}}$ independently from the variational distribution $q_{\phi}(\cdot \mid x)$.
   \STATE Compute the unnormalized importance weights: $$w_{t+1}^{(\ell)} = \frac{p_{\theta}(x, z_{t+1}^{(\ell)})}{q_{\phi}(z_{t+1}^{(\ell)} \mid x)} \quad \forall \ell \in\{1, \ldots, k\}.$$ 
   \STATE Normalize importance weights:
$$
\omega_{t+1}^{(\ell)} = \frac{w_{t+1}^{(\ell)} }{\sum_{\ell=1}^{N} w_{t+1}^{(\ell)}} \quad \forall \ell \in\{1, \ldots, k\}.
$$
   \STATE Select $\tilde z_{t+1}$ from the set $z_{t+1}^{1: k}$ by choosing $z_{t+1}^{\ell}$ with probability $\omega_{t+1}^{(\ell)}$.
   \ENDFOR
   \STATE {\bfseries Output:} $\left(z_{t}^{1:k}\right)_{1 \leq t \leq t_{\text{max}}}$ and $\left(\omega_{t}^{1:k}\right)_{1 \leq t \leq t_{\text{max}}}$.
\end{algorithmic}
\end{algorithm}

The BR-SNIS estimator of $\mathbb{E}_{p_{\theta}(\cdot|x)} \left[ \nabla_{\theta} \log p_{\theta}(x, z) \right]$ is given by:
$$
\widehat\nabla_{\theta} \log p_{\theta}(x, z_{t_{0}:t_{\text{max}}}^{1:k}) = \frac{1}{t_{\text{max}}-t_{0}} \sum_{t=t_{0}+1}^{t_{\text{max}}} \sum_{\ell=1}^{k} \omega_{t}^{(\ell)} \nabla_{\theta} \log p_{\theta}(x, z_{t}^{\ell})\eqsp,
$$
where $t_{0}$ corresponds to a  burn-in period. By \citep[Theorem 4]{cardoso2022br} the bias of this estimator decreases exponentially with $t_0$. 
The BR-IWAE algorithm proceeds in two steps, which are repeated during optimization:
\begin{itemize}
  \item Update the parameter $\phi$ as in the IWAE algorithm, that is, for all $n\geq 1$:
  $$
  \phi_{n+1} = \phi_{n} - \gamma_{n+1} A_n \nabla_{\phi} \mathcal{L}^{\text{IWAE}}_k(\theta_{n}, \phi_{n}; X_{n+1})\eqsp.
  $$
  \item Update the parameter $\theta$ by estimating \eqref{grad_log} using BR-SNIS as detailed in Algorithm~\ref{alg:br-iwae}:
  $$
  \phi_{n+1} = \phi_{n} - \gamma_{n+1} A_n \widehat\nabla_{\theta} \log p_{\theta}(X_{n+1}, z_{t_{0}:t_{\text{max}}}^{1:k})\eqsp.
  $$
\end{itemize}

\subsection{Some Other Techniques for Reducing Bias}

In the previous section, we discussed one technique for reducing bias, BR-IWAE. Here, we provide an overview of some other bias reduction techniques within our context.
First, the jackknife bias-corrected estimator \citep{nowozin2018debiasing} is defined as:
$$\mathcal{L}^{\text{Jackknife}}(\theta, \phi; x) = k \mathcal{L}^{\text{IWAE}}_k(\theta, \phi; x) - (k - 1)\mathcal{L}^{\text{IWAE}}_{k-1}(\theta, \phi; x)\eqsp,$$ 
which achieves a reduced bias of $\mathcal{O}(k^{-2})$.
This can also be generalized to have a bias of order $\mathcal{O}(k^{-m})$ for some $m \geq 1$ by considering:
$$
\mathcal{L}_{k,m}^{\text{Jackknife}} = \sum_{j=0}^{m} c(k, m, j) \mathcal{L}^{\text{IWAE}}_{k-j}\eqsp, 
$$
where the coefficients $c(k, m, j)$ are given as 
$$
c(k, m, j) = (-1)^j \frac{(k - j)^m}{(m - j)! j!}\eqsp.
$$

The Delta method Variational Inference (DVI) \citep{teh2006collapsed} is defined by: 
$$
\mathcal{L}_{k}^{DVI} = \mathbb{E}_{q^{\otimes k}_{\phi}(\cdot|x)} \left[ \log \frac{1}{k} \sum_{\ell=1}^{k} w^{(\ell)} + \frac{\bar{s}_k^2}{2k\bar{w}_k} \right], 
$$
where 
$$
w^{(\ell)} = \frac{p_{\theta}(x, z^{(\ell)})}{q_{\phi}(z^{(\ell)}\mid x)},\quad \bar{w}_k = \frac{1}{k} \sum_{\ell=1}^{k} w^{(\ell)}\quad \mathrm{and}\quad\bar{s}_k^2 = \frac{1}{k-1} \sum_{\ell=1}^{k} (w^{(\ell)} - \bar{w}_k)^2\eqsp.
$$
The Monte Carlo estimator of the Delta method Variational Inference objective achieves a reduced bias of $\mathcal{O}(k^{-2})$.
Some other techniques for reducing bias include the iterated bootstrap for bias correction, the debiasing lemma \citep{mcleish2011general}, and Multi-Level Monte Carlo and its variants \citep{hu2021bias}.

\newpage

\section{Application of Our Theorem to Bilevel and Conditional Stochastic Optimization}
\label{sec:sub:bilvel}
\subsection{Stochastic Bilevel Optimization}

We consider the Stochastic Bilevel Optimization problem given by:
\begin{equation}  \label{eq:bilevel}
\min_{\theta \in \mathbb{R}^d} V(\theta) = \mathbb{E}_\xi \left[f(\theta, \phi^*(\theta); \xi)\right] \quad \text{(upper-level)} 
\end{equation}
subject to
$$\phi^*(\theta) \in \underset{\phi \in \mathbb{R}^q}{\mathrm{argmin }} \mathbb{E}_\zeta \left[g(\theta, \phi; \zeta)\right] \quad \text{(lower-level)} $$
where the upper and inner level functions $f$ and $g$ are both jointly continuously differentiable and $\xi$ and $\zeta$ are random variables. The goal of equation \eqref{eq:bilevel} is to minimize the objective function V with respect to $\theta$, where $\phi^*(\theta)$ is obtained by solving the lower-level minimization problem.
This bilevel problem involves many machine learning problems with a hierarchical structure, which include hyper-parameter optimization \citep{franceschi2018bilevel}, metalearning \citep{finn2017model}, policy optimization \citep{hong2023two} and neural network architecture search \citep{liu2018darts}.
The gradient of the objective function $V$ is given by:
$$
\nabla V(\theta) = \nabla_\theta f(\theta, \phi^*(\theta)) - \nabla_{\theta \phi} g(\theta, \phi^*(\theta))v^*,
$$
where $v^*$ is the solution of the following linear system:
$$
\nabla^2_\phi g(\theta, \phi^*(\theta)) v = \nabla_\phi f(\theta, \phi^*(\theta))\eqsp.
$$
Instead of computing $v^*$, the solution of the linear system above, \citep{ji2021bilevel, chen2021closing} proposes a method to estimate $v^*$. This estimation introduces bias in the gradient of the objective function.

Consider the following assumptions.

\begin{assumption}\label{ass:bilevel_A1}
For all $\theta \in \mathbb{R}^d$, $g(\theta, \phi)$ is strongly convex with respect to $\phi$ with parameter $\mu_g > 0$.
\end{assumption}

\begin{assumption}\label{ass:bilevel_A2}
(Regularity Lipschitz condition) Assume that $f$, $\nabla f$, $\nabla g$, $\nabla^2 g$ are respectively Lipschitz continuous with Lipschitz constants $\ell_{f,0}$, $\ell_{f,1}$, $\ell_{g,1}$ and $\ell_{g,2}$.
\end{assumption}

Assumptions \hyperref[ass:bilevel_A1]{H6} and \hyperref[ass:bilevel_A2]{H7} are the same assumptions used in \citep{chen2021closing} to obtain the convergence results with SGD. Furthermore, these two assumptions ensure that the first- and second-order derivatives of $f$ and $g$, as well as the solution mapping $\phi^*(\theta)$, are well-behaved.

\begin{proposition}\label{prop:bilevel} (\citep[Lemma 2.2]{chen2021closing})
Under Assumption \ref{ass:bilevel_A1}, we have:
\begin{equation} \label{eq:grad_bilevel}
\nabla V(\theta)=\nabla_{\theta} f\left(\theta, \phi^{*}(\theta)\right)-\nabla_{\theta \phi}^{2} g\left(\theta, \phi^{*}(\theta)\right)\left[\nabla_{\phi}^{2} g\left(\theta, \phi^{*}(\theta)\right)\right]^{-1} \nabla_{\phi} f\left(\theta, \phi^{*}(\theta)\right) \eqsp.
\end{equation}
\end{proposition}

Due to the dependence of the minimizer of the lower-level problem $\phi^{*}(\theta)$, obtaining an unbiased estimate of $\nabla V(\theta)$ is challenging. To address this, we replace $\phi^{*}(\theta)$ in the gradient with $\phi$ and define
$$
\bar{\nabla}_{\theta} f(\theta, \phi):=\nabla_{\theta} f(\theta, \phi)-\nabla_{\theta \phi}^{2} g(\theta, \phi)\left[\nabla_{\phi}^{2} g(\theta, \phi)\right]^{-1} \nabla_{\phi} f(\theta, \phi)\eqsp.
$$

Furthermore, by estimating $\left[\nabla_{\phi}^{2} g(\theta, \phi)\right]^{-1}$, we define the stochastic update $H_{k}$ \citep{chen2021closing} as follows:
\begin{equation} \label{eq:grad_est_bilevel}
H_{k} = \nabla_{\theta} f\left(\theta_{k}, \phi_{k+1} ; \xi_{k}\right) -\nabla_{\theta \phi}^{2} g\left(\theta_{k}, \phi ; \zeta^{(0)}_{k}\right) \widehat{G} \nabla_{\phi} f\left(\theta_{k}, \phi_{k+1} ; \xi_{k}\right),
\end{equation}
where $\widehat{G} = \frac{N}{\ell_{g, 1}} \prod_{i=1}^{N^{\prime}}\left(I-\frac{1}{\ell_{g, 1}} \nabla_{\phi}^{2} g\left(\theta_{k}, \phi_{k+1} ; \zeta^{(i)}_{k}\right)\right)$ with $N^{\prime}$ is drawn from $\{1, \ldots, N\}$ uniformly at random and $\left\{\zeta^{(1)}, \ldots, \zeta^{\left(N^{\prime}\right)}\right\}$ are i.i.d. samples. 

\begin{algorithm}[ht!]
   \caption{\textbf{Stochastic Bilevel Optimization}}
   \label{alg:bilevel}
\begin{algorithmic}
   \STATE {\bfseries Input:} Initial points $\theta_{0}, \phi_{0}$, maximum number of iterations for the upper-level $n$ and for the lower-level $T$, step sizes $\{\gamma_{k}, \tilde\gamma_{k}\}_{k \geq 1}$, momentum parameters $\rho_{1}, \rho_{2} \in [0,1)$ and regularization parameter $\delta \geq 0$.
   \STATE Set $m_{0} = 0, V_{0} = 0$ and $\hat{V}_{0} = 0$
   \FOR{$k=0$ to $n-1$}
   \STATE Set $\phi_{k, 0} = \phi_{k}$.
   \FOR{$t=0$ to $T-1$}
   \STATE $\phi_{k, t+1} = \phi_{k, t} - \tilde\gamma_{k+1} \nabla_{\phi} g\left(\theta_{k}, \phi_{k, t} ; \zeta_{k, t}\right)$
   \ENDFOR
   \STATE Set $\phi_{k+1} = \phi_{k, T}$.
   \STATE Compute the stochastic update $H_{k}$ using $\phi_{k+1}$.
   \STATE $m_{k} = \rho_{1}m_{k-1} + (1-\rho_{1})H_{k}$
    \STATE $V_{k} = \rho_{2}V_{k-1} + (1-\rho_{2}) H_{k} H_{k}^\top $
    \STATE $\hat{V}_k = \max \big(\hat{V}_{k-1}, \text{Diag}(V_{k}) \big)$
    \STATE $A_{k} = \big[\delta \mathit{I}_d + \hat{V}_k \big]^{-1/2} $
   \STATE $\theta_{k+1} = \theta_{k} - \gamma_{k+1} A_{k} m_{k}$
   \ENDFOR
   \STATE {\bfseries Output:} $\left(\theta_{k}, \phi_{k}\right)_{0 \leq k \leq n}$
\end{algorithmic}
\end{algorithm}

In Algorithm \ref{alg:bilevel}, we perform $T$ steps of SGD on the lower-level variable $\phi_{k}$ before updating the upper-level variable $\theta_{k}$ using adaptive methods such as Adagrad, RMSProp, or AMSGRAD.

\begin{lemma}\label{lemma:bilevel_lipchitz}  (\citep[Lemma 2.2]{ghadimi2018approximation})
Under Assumptions \hyperref[ass:bilevel_A1]{H6} and \hyperref[ass:bilevel_A2]{H7}, for all $\left(\theta, \theta^{\prime}\right) \in \mathbb{R}^{d} \times \mathbb{R}^{d}$,
$$
\begin{aligned}
\left\|\nabla V\left(\theta\right)-\nabla V\left(\theta^{\prime}\right)\right\| \leq L_{V}\left\|\theta-\theta^{\prime}\right\|,
\end{aligned}
$$
with the constant $L_{V}$ is given by
$$
\begin{aligned}
L_{V} = \ell_{f, 1}+\frac{\ell_{g, 1}\left(\ell_{f, 1}+L_{f}\right)}{\mu_{g}}+\frac{\ell_{f, 0}}{\mu_{g}}\left(\ell_{g, 2}+\frac{\ell_{g, 1} \ell_{g, 2}}{\mu_{g}}\right),
\end{aligned}
$$
and $L_{f}$ is defined as
$L_{f} = \ell_{f, 1}+\frac{\ell_{g, 1} \ell_{f, 1}}{\mu_{g}}+\frac{\ell_{f, 0}}{\mu_{g}}\left(\ell_{g, 2}+\frac{\ell_{g, 1} \ell_{g, 2}}{\mu_{g}}\right)$.
\end{lemma}

\begin{lemma}\label{lemma:bilevel_bias}  
Under Assumptions \hyperref[ass:bilevel_A1]{H6} and \hyperref[ass:bilevel_A2]{H7}, the following inequalities hold:
$$
\begin{aligned}
\left\| \nabla V\left(\theta_{k}\right)- \mathbb{E}\left[H_{k} \mid \mathcal{F}_{k}\right] \right\|^{2} &\leq 2 L_{f}^{2}\left\|\phi_{k+1}-\phi^{*}\left(\theta_{k}\right)\right\|^{2}+2 \tilde b_{k}^{2}\eqsp,
\end{aligned}
$$
$$
\begin{aligned}
\left\|\bar{\nabla}_{\theta} f(\theta, \phi)\right\| &\leq \ell_{f, 0} + \frac{\ell_{g, 1} \ell_{f, 0}}{\mu_{g}},
\end{aligned}
$$
where $L_{f} = \ell_{f, 1}+\frac{\ell_{g, 1} \ell_{f, 1}}{\mu_{g}}+\frac{\ell_{f, 0}}{\mu_{g}}\left(\ell_{g, 2}+\frac{\ell_{g, 1} \ell_{g, 2}}{\mu_{g}}\right)$ and $\tilde b_{k} = \ell_{g, 1} \ell_{f, 1} \frac{1}{\mu_{g}}\left(1-\frac{\mu_{g}}{\ell_{g, 1}}\right)^{N}$.
\end{lemma}

\begin{proof}
For the bias term, since $\nabla V\left(\theta_{k}\right) = \bar{\nabla} f\left(\theta_{k}, \phi^{*}\left(\theta_{k}\right)\right)$, we have:
$$
\begin{aligned}
&\left\| \nabla V\left(\theta_{k}\right)- \mathbb{E}\left[H_{k} \mid \mathcal{F}_{k}\right] \right\|^{2} \\ &= \left\|\bar{\nabla} f\left(\theta_{k}, \phi^{*}\left(\theta_{k}\right)\right)-\bar{\nabla} f\left(\theta_{k}, \phi_{k+1}\right)+\bar{\nabla} f\left(\theta_{k}, \phi_{k+1}\right) - \mathbb{E}\left[H_{k} \mid \mathcal{F}_{k}\right] \right\|^{2} \\
& \leq 2\left\|\bar{\nabla} f\left(\theta_{k}, \phi^{*}\left(\theta_{k}\right)\right)-\bar{\nabla} f\left(\theta_{k}, \phi_{k+1}\right)\right\|^{2} + 2\left\|\bar{\nabla} f\left(\theta_{k}, \phi_{k+1}\right) - \mathbb{E}\left[H_{k} \mid \mathcal{F}_{k}\right]\right\|^{2} \\
& \leq 2 L_{f}^{2}\left\|\phi_{k+1}-\phi^{*}\left(\theta_{k}\right)\right\|^{2} + 2 \tilde b_{k}^{2}\eqsp,
\end{aligned}
$$
where we used \citep[Lemma 2.2]{ghadimi2018approximation} for the first term and \citep[Lemma 11]{hong2023two} for the second term.

For the second inequality, we have:
$$
\begin{aligned}
\left\|\bar{\nabla}_{\theta} f(\theta, \phi)\right\| & =\left\|\nabla_{\theta} f(\theta, \phi)-\nabla_{\theta \phi}^{2} g(\theta, \phi)\left[\nabla_{\phi}^{2} g(\theta, \phi)\right]^{-1} \nabla_{\phi} f(\theta, \phi)\right\| \\
& \leq\left\|\nabla_{\theta} f(\theta, \phi)\right\|+\left\|\nabla_{\theta \phi}^{2} g(\theta, \phi)\right\|\left\|\left[\nabla_{\phi}^{2} g(\theta, \phi)\right]^{-1}\right\|\left\|\nabla_{\phi} f(\theta, \phi)\right\| \\
& \leq \ell_{f, 0} + \frac{\ell_{g, 1} \ell_{f, 0}}{\mu_{g}}\eqsp.
\end{aligned}
$$
\end{proof}

\begin{theorem}\label{thm:bilevel}
Assume that \hyperref[ass:bilevel_A1]{H6} and \hyperref[ass:bilevel_A2]{H7} hold.
Let $\theta_{n} \in \mathbb{R}^{d}$ be the $n$-th iterate of Algorithm \ref{alg:bilevel}, $\gamma_{n} = c_{\gamma}n^{-1/2}$ and $\tilde \gamma_{n} = c_{\tilde \gamma}n^{-1/2}/T$. For any $n \geq 1$, let $R \in \{0, \ldots, n\}$ be a uniformly distributed random variable. Assume the boundedness of the variance of the estimators of $\nabla f$, $\nabla g$, and $\nabla^2 g$.
Then,
$$
\mathbb{E}\left[\left\| \nabla V\left(\theta_{R}\right)\right\|^{2}\right] = \mathcal{O}\left(\frac{\log n}{\sqrt{n}} + b_n\right).
$$
\end{theorem}

\begin{proof}
By using Lemma \ref{lemma:bilevel_bias}, $V$ is smooth and Lemma \ref{lemma:bilevel_bias}, the bias and the gradient of $V$ are bounded. Using our Corollary \ref{cor:adagrad}, we obtain:
$$
\mathbb{E}\left[\left\| \nabla V\left(\theta_{R}\right)\right\|^{2}\right] = \mathcal{O}\left(\frac{\log n}{\sqrt{n}} + b_n\right),
$$
where $$ b_n = \mathcal{O}\left(\frac{ \sum_{k=0}^{n} \gamma_{k+1} \tilde b_{k}^{2} + \sum_{k=0}^{n} \gamma_{k+1} \left\|\phi_{k+1}-\phi^{*}\left(\theta_{k}\right)\right\|^{2} }{\sqrt{n}} \right).$$
Then, with \citep[Lemma 2.3]{ghadimi2018approximation} and \citep[Lemma 3]{chen2021closing}, we derive:
$$ b_n = \mathcal{O}\left(\frac{ \sum_{k=0}^{n} \gamma_{k+1} \tilde b_{k}^{2}}{\sqrt{n}} + \frac{1}{\sqrt{n}} \right).$$
\end{proof}

We achieve a classical convergence rate of $\mathcal{O}(\log n/\sqrt{n})$ for Stochastic Bilevel Optimization problems. Two types of bias emerge in this context: firstly, the challenge of directly computing $\phi^{*}(\theta)$, and secondly, the necessity of estimating $[\nabla_{\phi}^{2} g(\theta, \phi)]^{-1}$.

Our results extend those of \citep{chen2021closing} to the adaptive case, particularly Adagrad, RMSProp, and AMSGRAD. This provides convergence guarantees for the Alternating Stochastic Gradient Descent (ALSET) method.
We can apply our convergence analysis to Stochastic Min-Max and Compositional Problems, as well as to the Actor-Critic method with linear value function approximation \citep{konda1999actor}, which can be viewed as a special case of the Stochastic Bilevel algorithm.

\subsection{Conditional Stochastic Optimization}

We now consider a class of Conditional Stochastic Optimization:
\begin{equation} \label{eq:CSO}
    \min_{\theta \in \mathbb{R}^d} V(\theta) := \mathbb{E}_{\xi} \left[ f_{\xi} \left( \mathbb{E}_{\eta | \xi} \left[ g_{\eta} (\theta, \xi) \right] \right) \right],
\end{equation}
where $f_{\xi}(\cdot) : \mathbb{R}^{q} \rightarrow \mathbb{R}$ depends on the random vector $\xi$ and $g_{\eta}(\cdot, \xi) : \mathbb{R}^d \rightarrow \mathbb{R}^{q}$ is a vector-valued function dependent on both random vectors $\xi$ and $\eta$. The inner expectation is taken with respect to the conditional distribution of $\eta$ given $\xi$.
Given certain conditions on the regularity of these functions, the gradient of $V$ as defined in \eqref{eq:CSO} can be expressed as:
\begin{equation} \label{eq:grad_CSO}
\nabla V(\theta) = \mathbb{E}_{\xi} \left[ \left( \mathbb{E}_{\eta | \xi} \left[ \nabla g_{\eta}(\theta, \xi) \right] \right)^\top \nabla f_{\xi} \left( \mathbb{E}_{\eta | \xi} \left[ g_{\eta}(\theta, \xi) \right] \right) \right].
\end{equation}
Constructing an unbiased stochastic estimator of this gradient can be both costly and, in some cases, impractical. Instead, we opt for a biased estimator of $\nabla V(\theta)$, using just one sample $\xi$ and $m$ i.i.d. samples $\{\eta_j\}_{j=1}^{m}$ from the conditional distribution of $\eta$ given $\xi$:
\begin{equation} \label{eq:grad_est_CSO}
\widehat{\nabla} V(\theta; \xi, \{\eta_j\}_{j=1}^{m}) := \left( \frac{1}{m} \sum_{j=1}^{m} \nabla g_{\eta_j}(\theta, \xi) \right)^\top \nabla f_{\xi} \left( \frac{1}{m} \sum_{j=1}^{m} g_{\eta_j}(\theta, \xi) \right).
\end{equation}

\begin{assumption}\label{ass:CSO}
For all $\xi$ and $\eta$, assume that $f_\xi(\cdot)$, $\nabla f_\xi(\cdot)$, $g_\eta(\cdot, \xi)$, and $\nabla g_\eta(\cdot, \xi)$ are respectively Lipschitz continuous with Lipschitz constants $\ell_{f,0}$, $\ell_{f,1}$, $\ell_{g,0}$ and $\ell_{g,1}$.
\end{assumption}


\begin{assumption}\label{ass:CSO_2}
For all $\theta$ and $\xi$, we assume that $\mathbb{E}_{\eta|\xi} \left[\left\|g_\eta(\theta, \xi) - \mathbb{E}_{\eta|\xi} \left[ g_\eta(\theta, \xi) \right] \right\|^2 \right] \leq \sigma^2_g$.
\end{assumption}

\begin{lemma}\label{lemma:CSO_bias}
(\citep[Lemma 2.2]{hu2020biased}) Under Assumptions \hyperref[ass:CSO]{H8} and \hyperref[ass:CSO_2]{H9}, the following holds:
$$
\left\| \mathbb{E}\left[\widehat{\nabla} V(\theta; \xi, \{\eta_j\}_{j=1}^m)\right] - \nabla V(\theta) \right\|^2 \leq \frac{\ell_{g,0}^2 \ell_{f,1}^2 \sigma^2_g}{m}.
$$
\end{lemma}

\begin{lemma}\label{lemma:CSO_grad}
Under Assumption \hyperref[ass:CSO]{H8}, we have:
$$
\begin{aligned}
\left\| \nabla V(\theta) - \nabla V(\theta') \right\| \leq \left( \ell_{g,1}\ell_{f,0} + \ell_{g,0}^2 \ell_{f,1} \right) \left\| \theta - \theta' \right\|\eqsp,
\end{aligned}
$$
$$
\begin{aligned}
\left\| \nabla V(\theta) \right\| \leq \ell_{g,0} \ell_{f,0}\eqsp.
\end{aligned}
$$
\end{lemma}

\begin{proof}

Denoting $G_{\theta} = \mathbb{E}_{\eta | \xi} \left[ g_{\eta}(\theta, \xi) \right]$ and $\nabla G_{\theta} = \mathbb{E}_{\eta | \xi} \left[ \nabla g_{\eta}(\theta, \xi) \right]$, we establish the smoothness of $V$ and Boundness of $\nabla V$.

Smoothness of $V$: 
\begin{align*}
\left\| \nabla V(\theta) - \nabla V(\theta') \right\| &= \left\| 
\mathbb{E}_{\xi} \left[ \nabla G_{\theta}^T \nabla f_{\xi} \left( G_{\theta} \right) \right] - \mathbb{E}_{\xi} \left[ \nabla G_{\theta'}^T \nabla f_{\xi} \left( G_{\theta'} \right) \right] \right\| \\
&\leq \left\| 
\mathbb{E}_{\xi} \left[ \nabla G_{\theta}^T \nabla f_{\xi} \left( G_{\theta} \right) \right] - \mathbb{E}_{\xi} \left[ \nabla G_{\theta'}^T \nabla f_{\xi} \left( G_{\theta} \right) \right] \right\| \\
& \quad + \left\| 
\mathbb{E}_{\xi} \left[ \nabla G_{\theta'}^T \nabla f_{\xi} \left( G_{\theta} \right) \right] - \mathbb{E}_{\xi} \left[ \nabla G_{\theta'} \nabla f_{\xi} \left( G_{\theta'} \right) \right] \right\| \\
&\leq \left\| 
\mathbb{E}_{\xi} \left[ \left( \nabla G_{\theta} - \nabla G_{\theta'} \right)^T \nabla f_{\xi} \left( G_{\theta} \right) \right] \right\| \\
& \quad + \left\| 
\mathbb{E}_{\xi} \left[ \nabla G_{\theta'}^T \left( \nabla f_{\xi} \left( G_{\theta} \right) - \nabla f_{\xi} \left( G_{\theta'} \right) \right) \right] \right\| \\
&\leq 
\mathbb{E}_{\xi} \left[ \left\| \nabla G_{\theta} - \nabla G_{\theta'} \right\| \left\| \nabla f_{\xi} \left( G_{\theta} \right) \right\| \right] \\
& \quad + \mathbb{E}_{\xi} \left[ \left\| \nabla G_{\theta'} \right\| \left\| \nabla f_{\xi} \left( G_{\theta} \right) - \nabla f_{\xi} \left( G_{\theta'} \right) \right\| \right] \\
& \leq \ell_{g,1}\ell_{f,0} \left\| \theta - \theta' \right\| + \ell_{g,0} \ell_{f,1} \mathbb{E}_{\xi} \left[ \left\|  G_{\theta} - G_{\theta'} \right\| \right] \\
& \leq \ell_{g,1}\ell_{f,0} \left\| \theta - \theta' \right\| + \ell_{g,0}^2 \ell_{f,1} \left\| \theta - \theta' \right\|. \\ 
\end{align*}

Boundness of $\nabla V$: 
\begin{align*}
\left\| \nabla V(\theta) \right\| &= \left\| \mathbb{E}_{\xi} \left[ \nabla G_{\theta}^T \nabla f_{\xi} \left( G_{\theta} \right) \right]\right\| \\
& \leq \mathbb{E}_{\xi} \left[ \left\| \nabla G_{\theta} \right\| \left\| \nabla f_{\xi} \left( G_{\theta} \right) \right\| \right] \leq \ell_{g,0} \ell_{f,0}\eqsp. \\
\end{align*}
\end{proof}

\begin{theorem}\label{thm:CSO} 
Assume that \hyperref[ass:CSO]{H8} and \hyperref[ass:CSO_2]{H9} hold.
Let $\gamma_{n} = c_{\gamma}n^{-1/2}$, $A_{n}$ denote the adaptive matrix in AMSGRAD and $\rho_{1}, \rho_{2} \in [0,1)$. For any $n \geq 1$, let $R \in \{0, \ldots, n\}$ be a uniformly distributed random variable.
Then, 
$$
\mathbb{E}\left[\left\| \nabla V\left(\theta_{R}\right)\right\|^{2}\right] = \mathcal{O}\left(\frac{\log n}{\sqrt{n}} + b_n\right),
$$
where $b_n$ is defined by writing $m_k$ as the number of conditional samples at iteration $k$:
$$ b_n = \mathcal{O}\left(\frac{ \sum_{k=0}^{n} \frac{m_k}{\sqrt{k}}}{\sqrt{n}}  \right).$$
\end{theorem}

\begin{proof}
This is an immediate implication of Theorem \ref{th:adam} using Lemmas \ref{lemma:CSO_bias} and \ref{lemma:CSO_grad}.
\end{proof}

These results can also be extended to the Federated Conditional Stochastic Optimization problem \citep{wu2024federated}, which is defined by:
$$
\min_{\theta \in \mathbb{R}^d} V(\theta) = \frac{1}{L} \sum_{\ell=1}^{L} \mathbb{E}_{\xi_\ell} \left[ f_{\xi_\ell}^\ell \left( \mathbb{E}_{\eta_l|\xi_\ell} \left[ g_{\eta_\ell}^\ell (\theta, \xi_\ell) \right] \right) \right], 
$$
where $\mathbb{E}_{\xi_\ell} f_{\xi_\ell}^\ell(\cdot) : \mathbb{R}^{q} \rightarrow \mathbb{R}$ is the outer-layer function on the $\ell$-th device with the randomness $\xi_\ell$, and $\mathbb{E}_{\eta_\ell|\xi_\ell} g_{\eta_\ell}^\ell(\cdot, \xi_\ell) : \mathbb{R}^d \rightarrow \mathbb{R}^{q}$ is the inner-layer function on the $\ell$-th device with respect to the conditional distribution of $\eta_\ell$ given $\xi_\ell$. 
If the functions $f_{\xi_\ell}^\ell(\cdot)$ and $g_{\eta_\ell}^\ell(\cdot, \xi_\ell)$ for all $L$ devices verify Assumptions \hyperref[ass:CSO]{H8} and \hyperref[ass:CSO_2]{H9}, we obtain the same convergence rate.

The following Table \ref{bias_table} provides a comprehensive summary of the key points, including the verification of our assumptions and the convergence results obtained in both Stochastic Bilevel Optimization and Conditional Stochastic Optimization.

\begin{table}[htbp]
\centering
\caption{Bilevel and Conditional Stochastic Optimization with our biased adaptive SA framework.}
\label{bias_table}
\tiny
\begin{tabular}{@{}ccc@{}}
\toprule
Applications & Stochastic Bilevel Optimization  & Conditional Stochastic Optimization \\ \midrule
\multirow{2}{*}{Problem} & $  \underset{\theta \in \mathbb{R}^d}{\min} \, \mathbb{E}_\xi \left[f(\theta, \phi^*(\theta); \xi)\right]$ & \multirow{2}{*}{$  \underset{\theta \in \mathbb{R}^d}{\min} \, \mathbb{E}_{\xi} \left[ f_{\xi} \left( \mathbb{E}_{\eta | \xi} \left[ g_{\eta} (\theta, \xi) \right] \right) \right] $}\\
& s.t. $ \phi^*(\theta) \in \underset{\phi \in \mathbb{R}^q}{\mathrm{argmin }} \, \mathbb{E}_\zeta \left[g(\theta, \phi; \zeta)\right] $ & \\
Gradient & $\nabla_\theta f(\theta, \phi^*(\theta)) - \nabla_{\theta \phi} g(\theta, \phi^*(\theta))v^*$ & $\mathbb{E}_{\xi} \left[ \left( \mathbb{E}_{\eta | \xi} \left[ \nabla g_{\eta}(\theta, \xi) \right] \right)^T \nabla f_{\xi} \left( \mathbb{E}_{\eta | \xi} \left[ g_{\eta}(\theta, \xi) \right] \right) \right]$ \\
Lipchitz Constant \hyperref[ass:A2]{H2} & $\ell_{f, 1}+\frac{\ell_{g, 1}\left(\ell_{f, 1}+L_{f}\right)}{\mu_{g}}+\frac{\ell_{f, 0}}{\mu_{g}}\left(\ell_{g, 2}+\frac{\ell_{g, 1} \ell_{g, 2}}{\mu_{g}}\right)$ & $\ell_{g,1}\ell_{f,0} + \ell_{g,0}^2 \ell_{f,1}$ \\
Bias Control \hyperref[ass:A3]{H3} & $\ell_{g, 1} \ell_{f, 1} \frac{1}{\mu_{g}}\left(1-\frac{\mu_{g}}{\ell_{g, 1}}\right)^{N}$ & $\frac{\ell_{g,0}^2 \ell_{f,1}^2 \sigma^2_g}{m}$ \\
Gradient Bound \hyperref[ass:A5]{H5} & $\ell_{f, 0} + \frac{\ell_{g, 1} \ell_{f, 0}}{\mu_{g}}$ & $\ell_{g,0} \ell_{f,0}$ \\
Convergence & $\mathcal{O}\left(\frac{\log n}{\sqrt{n}} + b_n\right)$ & $\mathcal{O}\left(\frac{\log n}{\sqrt{n}} + b_n\right)$ \\ \bottomrule
\end{tabular}
\end{table}

\newpage

\section{Some Other Examples of Biased Gradients with Control on Bias}
\label{supp:sec:bias_example}

In this section, we explore examples of applications using biased gradient estimators while having control over the bias.

\subsection{Self-Normalized Importance Sampling}
\label{sec:snis}

Let $\pi$ be a probability measure on a measurable space $(\mathsf{X}, \mathcal{X})$.
The objective is to estimate $\pi(f) = \mathbb{E}_{\pi}[f(X)]$ for a measurable function $f: \mathcal{X} \rightarrow \mathbb{R}^{d}$ such that $\pi(|f|)<\infty$.
Assume that $\pi(\mathrm{d} x) \propto w(x) \lambda(\mathrm{d} x)$, where $w$ is a positive weight function and $\lambda$ is a proposal probability distribution, and that $\lambda(w)=\int w(x) \lambda(\mathrm{d} x)<\infty$. 
For a function $f: \mathcal{X} \rightarrow \mathbb{R}^{d}$ such that $\pi(|f|)<\infty$, the identity

\begin{equation}   \label{is}
\pi(f)=\frac{\lambda(\omega f)}{\lambda(\omega)}\eqsp,
\end{equation}

leads to the Self-Normalized Importance Sampling (SNIS) estimator:
$$
\Pi_{N} f\left(X^{1: N}\right)=\sum_{i=1}^{N} \omega_{N}^{i} f\left(X^{i}\right), \quad \omega_{N}^{i} = \frac{w\left(X^{i}\right)}{\sum_{\ell=1}^{N} w\left(X^{\ell}\right)}\eqsp,
$$

where $X^{1: N}=\left(X^{1}, \ldots, X^{N}\right)$ are independent draws from $\lambda$ and the $\omega_{N}^{i}$ are called the normalized weights.
\citep{agapiou2017importance} shows that the bias of the SNIS estimator can be expressed as:
$$
\left\|\mathbb{E}\left[\Pi_{N} f\left(X^{1: N}\right)-\pi(f)\right]\right\| \leq \frac{12}{N} \frac{\lambda\left(\omega^{2}\right)}{\lambda(\omega)^{2}}\eqsp.
$$
This particular type of estimator can be found in the domain of Monte Carlo methods, particularly in the context of Bayesian inference and Sequential Monte Carlo methods.

\subsection{Sequential Monte Carlo Methods}
\label{sec:smc}

We focus here in the task of estimating the parameters, denoted as $\theta$, in Hidden Markov Models. In this context, the hidden Markov chain is denoted by $(X_t)_{t\geq 0}$. 
The distribution of $X_0$ has density $\chi$ with respect to the Lebesgue measure $\mu$ and for all $t \geq 0$, the conditional distribution of $X_{t+1} $ given $X_{0:t}$ has density $m_{\theta}(X_{t},\cdot)$. It is assumed that this state is partially observed through an observation process $(Y_t)_{0 \leq t \leq T}$. 
The observations $Y_{0:t}$ are assumed to be independent conditionally on $X_{0:t}$ and, for all $0\leq t \leq T$, the distribution of $Y_t$ given $X_{0:t}$ depends on $X_t$ only and has density $g_{\theta}(X_t,\cdot)$ with respect to the Lebesgue measure. 
The joint distribution of hidden states and observations is given by
\begin{equation*}
p_{\theta}(x_{0:T},y_{0: T}) = \chi(x_0)g_{\theta}(x_0,y_0)\prod_{t=0}^{T-1}m_\theta(x_{t},x_{t+1})g_{\theta}(x_{t+1},y_{t+1})\eqsp.
\end{equation*}
Our objective is to maximize the likelihood of the model: 
$$p_{\theta}(y_{0: T}) = \int p_{\theta}(x_{0: T}, y_{0: T}) \, \rmd x_{0: T}\eqsp.
$$ 
To use a gradient-based method for this maximization problem, we need to compute the gradient of the objective function.
Under simple technical assumptions, by Fisher's identity,
$$
\begin{aligned}
\nabla_{\theta} \log p_{\theta}(y_{0: T}) &= \int \nabla_{\theta} \log p_{\theta}(x_{0: T}, y_{0: T}) p_{\theta}(x_{0: T} | y_{0: T}) \rmd x_{0: T} \\
&= \mathbb{E}_{x_{0: T} \sim p_{\theta}(. | y_{0: T})}\left[\nabla_{\theta} \log p_{\theta}(x_{0: T}, y_{0: T}) \right]\\
&= \mathbb{E}_{x_{0: T} \sim p_{\theta}(. | y_{0: T})}\left[ \sum_{t=0}^{T-1} s_{t, \theta}\left(x_{t}, x_{t+1}\right) \right],
\end{aligned}
$$
where $s_{t, \theta}\left(x, x'\right) = \nabla_{\theta} \log \{m_{\theta}\left(x, x'\right) g_{\theta}\left(x,y_{t+1}\right) \}$ for $t>0$ and by convention  $s_{0, \theta}\left(x, x'\right) = \nabla_{\theta} \log  g_{\theta}\left(x,y_{0}\right)$. Given that the gradient of the log-likelihood represents the smoothed expectation of an additive functional, one may opt for Online Smoothing algorithms to mitigate computational costs. 
The estimation of the gradient $\nabla_{\theta} \log p_{\theta}(y_{0: T})$ is given by:
$$
H_{\theta}\left(y_{0:T}\right) = \sum_{i=1}^{N} \frac{\omega_{T}^{i}}{\Omega_{T}} \tau_{T, \theta}^{i}\eqsp,
$$
where $\{\tau_{T, \theta}^{i}\}_{i=1}^{N}$ are particle approximations obtained using particles $\{\left(\xi_{T}^{i}, \omega_{T}^{i}\right)\}_{i=1}^{N}$ targeting the filtering distribution $\phi_{T}$, i.e. the conditional distribution of $x_T$ given $y_{0:T}$. 
In the Forward-only implementation of FFBSm \citep{del2010backward}, the particle approximations $\{\tau_{T, \theta}^{i}\}_{i=1}^{N}$ are computed using the following formula, with an initialization of $\tau_{0}^{i}=0$ for all $i \in \llbracket 1, N \rrbracket$:
$$
\tau_{t+1, \theta}^{i}=\sum_{j=1}^{N} \frac{\omega_{t}^{j} m_{\theta} (\xi_{t}^{j}, \xi_{t+1}^{i})}{\sum_{\ell=1} \omega_{t}^{\ell} m_{\theta} (\xi_{t}^{\ell}, \xi_{t+1}^{i})}\left\{\tau_{t, \theta}^{j}+s_{t, \theta}(\xi_{t}^{j}, \xi_{t+1}^{i})\right\}, \quad t \in \mathbb{N}\eqsp.
$$

The estimator of the gradient $H_{\theta}\left(y_{0:T}\right)$ computed by the Forward-only implementation of FFBSm is biased. The bias and MSE of this estimator are of order $\mathcal{O}\left(1/N\right)$ \citep{del2010backward}, where $N$ corresponds to the number of particles used to estimate it. Using alternative recursion methods to compute $\{\tau_{T, \theta}^{i}\}_{i=1}^{N}$ results in different algorithms, such as the particle-based rapid incremental smoother (PARIS) \citep{olsson2017efficient} and its pseudo-marginal extension \citep{gloaguen2022pseudo} and Parisian particle Gibbs (PPG) \citep{cardoso2023state}. In such cases, one can also control the bias and MSE of the estimator.

\subsection{Policy Gradient for Average Reward over Infinite Horizon}
\label{sec:policy:gradient}

Consider a finite Markov Decision Process (MDP) denoted as $(\mathcal{S}, \mathcal{A}, R, P)$, where $\mathcal{S}$ represents the state space, $\mathcal{A}$ denotes the action space, $R: \mathcal{S} \times \mathcal{A} \rightarrow \left[0, R_{\text{max}}\right]$ is a reward function, and $P$ is the transition model. The agent's decision-making process is characterized by a parametric family of policies $\{\pi_\theta\}_{\theta \in \mathbb{R}^{d}}$, employing the soft-max parameterization.
The reward function is given by:
    $$ V(\theta) := \mathbb{E}_{(S, A) \sim v_{\theta}}\left[\mathrm{R}(S, A)\right]=\sum_{(s, a) \in \mathcal{S} \times \mathcal{A}} v_{\theta}(s, a) \mathrm{R}(s, a)\eqsp, $$
where $v_{\theta}$ represents the unique stationary distribution of the state-action Markov Chain sequence $\left\{\left(S_{t}, A_{t}\right)\right\}_{t \geq 1}$ generated by the policy $\pi_{\theta}$.
Let $\lambda \in (0,1)$ be a discount factor and $T$ be sufficiently large, the estimator of the gradient of the objective function $V$ is given by:

$$
H_{\theta}\left(S_{1:T}, A_{1:T}\right) = \mathrm{R}\left(S_{T}, A_{T}\right) \sum_{i=0}^{T-1} \lambda^{i} \nabla \log \pi_{\theta}\left(A_{T-i} ; S_{T-i}\right)\eqsp,
$$

where $\left(S_{1:T}, A_{1:T}\right) := \left(S_{1}, A_{1}, \ldots, S_{T}, A_{T}\right)$ is a realization of state-action sequence generated by the policy $\pi_{\theta}$.
It's important to note that this gradient estimator is biased, and the bias is of order $\mathcal{O}(1 - \lambda)$ \citep{karimi2019non}.

\subsection{Zeroth-Order Gradient}
\label{sec:zeroth:order}

Consider the problem of minimizing the objective function $V$. The zeroth-order gradient method is particularly valuable in scenarios where direct access to the gradient of the objective function is challenging or computationally expensive. 
The zeroth-order gradient oracle obtained by Gaussian smoothing \citep{nesterov2017random} is given by:

\begin{equation}
    H_{\theta}\left(X\right) = \frac{V(\theta + \tau X) - V(\theta)}{\tau}X\eqsp,
\end{equation}

where $\tau > 0$ is a smoothing parameter and $X \sim \mathcal{N}(0, \mathit{I}_d)$ a random Gaussian vector. \citep[][Lemma 3]{nesterov2017random} provide the bias of this estimator:
\begin{equation}
    \| \mathbb{E}\left[H_{\theta}\left(X\right)\right] - \nabla V\left(\theta\right) \| \leq \frac{\tau}{2}L(d + 3)^{3/2}\eqsp.
\end{equation}
The application of these zeroth-order gradient methods can be found in generative adversarial networks \citep{moosavi2017universal,chen2017zoo}.

\subsection{Compressed Stochastic Approximation: Coordinate Sampling}
\label{sec:coordinate:samp}

The coordinate descent method is based on the iteration:
$$\theta_{n+1}=\theta_{n}-\gamma_{n+1} H_{\theta_{n}}\left(X_{n+1}\right)_{j_n} e_{j_n}\eqsp,
$$
where $\{e_1, \ldots, e_d\}$ is the canonical basis of $\mathbb{R}^d$ and $H_{\theta_{n}}\left(X_{n+1}\right)_{j}$ is the $j$-th coordinate of the gradient. The randomized coordinate selection rule chooses $j_n$ uniformly from the set $\{1, 2, \ldots, d\}$.
Alternatively, the Gauss-Southwell selection rule \citep{nutini2015coordinate} uses:
$$
j_{n+1} := \underset{j \in \{1,\ldots,d\}}{\mathrm{argmax }}|H_{\theta_{n}}\left(X_{n+1}\right)_{j}|\eqsp. 
$$
This corresponds to a greedy selection procedure since at each iteration we choose the coordinate with the largest directional derivative.
Another approach to choosing $j_{n}$ is Coordinate Sampling \citep{leluc2022sgd}, a variant of the stochastic gradient descent algorithm that incorporates a selection step by sampling to perform random coordinate descent. 
The distribution of $\zeta_{n+1}$, which selects the coordinate, is characterized by the probability weights vector $(w^{(1)}_n, \ldots, w^{(d)}_n)$ defined as: 
$$w^{(j)}_n = \mathbb{P}(\zeta_{n+1} = j | \mathcal{F}_{n}), \quad j \in \{1, \ldots, d\}\eqsp.$$
This distribution of $\zeta_{n+1}$ is referred to as the coordinate sampling policy.
The Stochastic Coordinate Gradient Descent algorithm is defined by:
$$\theta_{n+1} = \theta_n - \gamma_{n+1} D(\zeta_{n+1}) H_{\theta_{n}}\left(X_{n+1}\right)\eqsp,$$
where $D(k) = e_k e_k^\top \in \mathbb{R}^{d \times d}$ has its entries equal to 0 except for the $(k, k)$ entry, which is 1. Observe that the distribution of the random matrix $D(\zeta_{n+1})$ is fully characterized by the matrix $D_n = \mathbb{E}[D(\zeta_{n+1}) | \mathcal{F}_{n}] = \text{Diag}(w^{(1)}_n, \ldots, w^{(d)}_n)$.
In this context, $A_n$ represents a diagonal matrix $D_{n}$ where the diagonal terms characterize the probability weights for sampling each coordinate. These weights typically depend on preceding iterations and even on current gradients. In this case, we always have $\beta_{n+1} \leq 1$ and to control the minimum eigenvalue, we only require a lower bound on the probability weights. 
This method can be easily extended to incorporate biased gradients and adaptive steps by introducing $\bar{A}_n = D_n A_n$, where $A_n$ represents the adaptive matrix as before, and $D_n$ is the matrix of probability weights.

\newpage

\section{Experiment details and supplementary results }
\label{supp:sec:exp}

\subsection{Experiment with a Synthetic Time-Dependent Bias}

To illustrate Theorem~\ref{thm:strongly_convex_case} and the impact of bias, we consider in Figure \ref{risk_synthetic} a simple least squares objective function $V(\theta) = \|A \theta\|^2/2$ in dimension $d = 10$.  We artificially add to every gradient a zero-mean Gaussian noise with variance $\sigma^2 = 0.01$ and a bias term $r_n = C_{r}n^{-r}$ at each iteration $n$. We use Adagrad with a learning rate $\gamma = 1/2$, $\beta=0$ and $\lambda=0$. Then, the bound of Theorem \ref{thm:strongly_convex_case} is of the form $\mathcal{O}(n^{-1/2} + n^{-r})$.

\begin{figure}[h]
\begin{center}
\centerline{\includegraphics[width=0.5\textwidth]{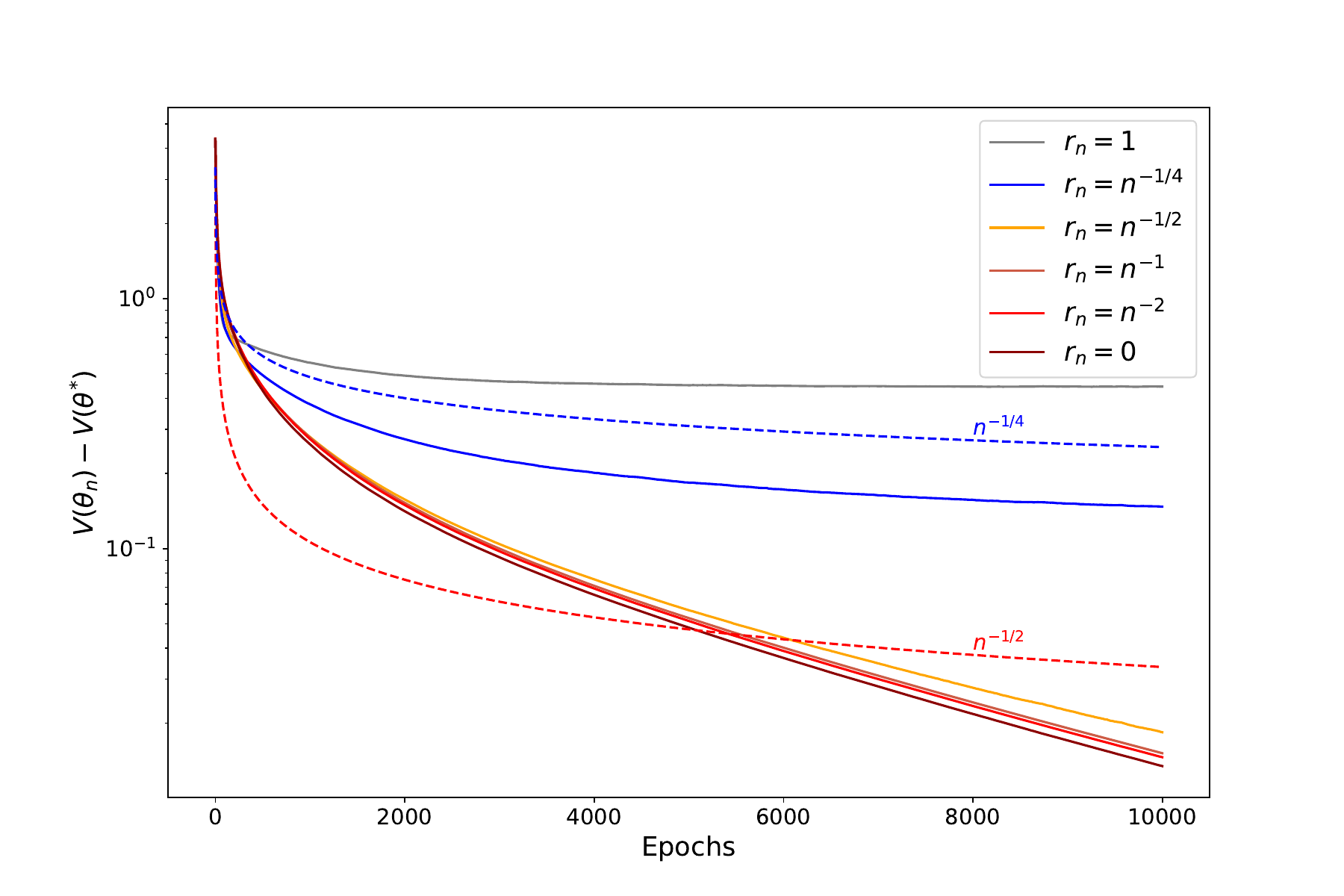}
\includegraphics[width=0.5\textwidth]{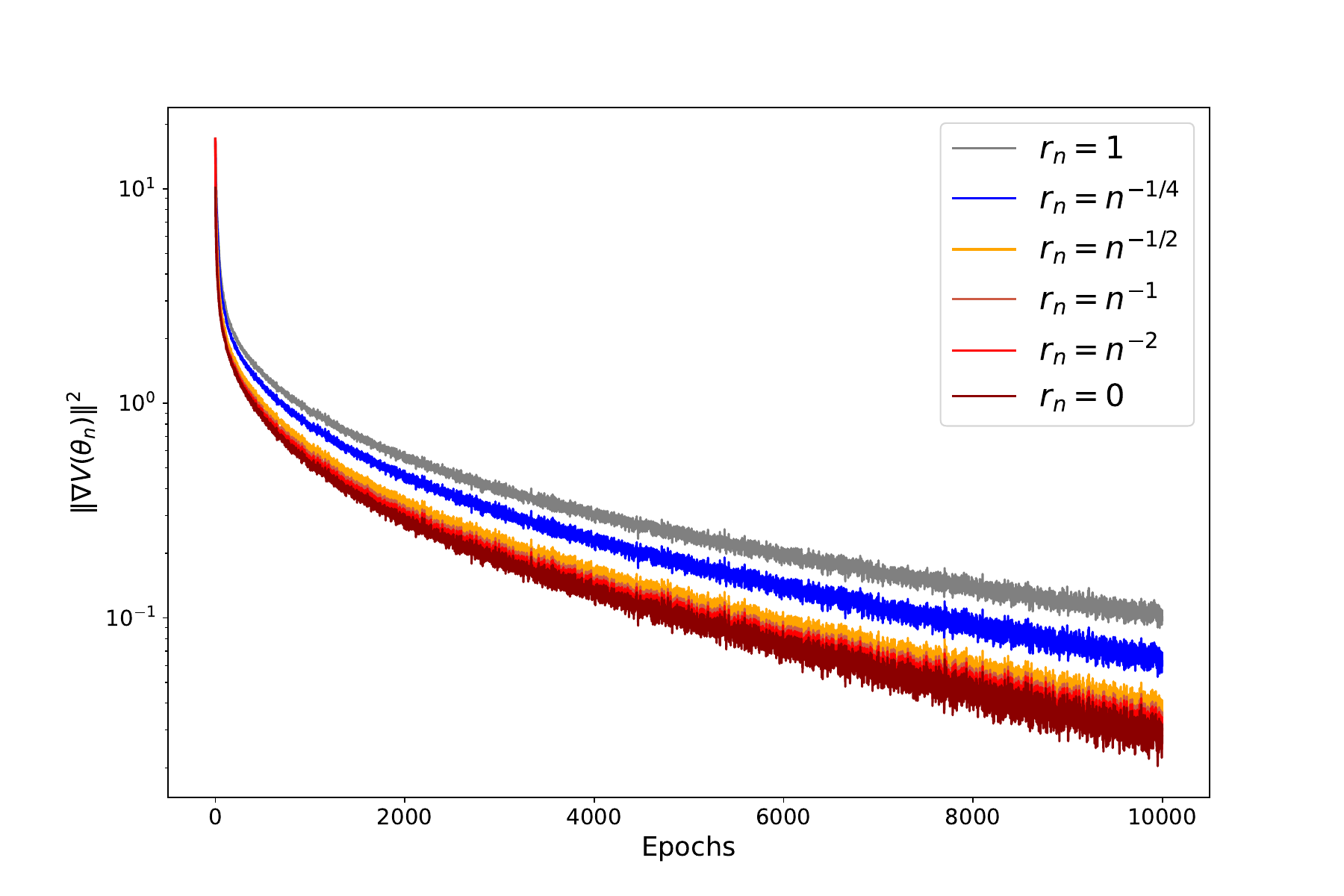}}
\caption{Value of $V(\theta_n) - V(\theta^*)$ (on the left) and $\| \nabla V(\theta_n) \|^{2}$ (on the right) with Adagrad for different values of $r_n=n^{-r}$ and a learning rate $\gamma_n=n^{-1/2}$. The dashed curve corresponds to the expected convergence rate $\mathcal{O}(n^{-1/4})$ for $r = 1/4$ and $\mathcal{O}(n^{-1/2})$ for $r \geq 1/2$.}
\label{risk_synthetic}
\end{center}
\vskip -0.2in
\end{figure}

We explore different values of $r_n \in \{1, n^{-1/4}, n^{-1/2}, n^{-1}, n^{-2}, 0\}$, where $r_n = 1$ corresponds to constant bias, $r_n = 0$ for an unbiased gradient, and the others exhibit decreasing bias.
First, note that the impact of a constant bias term ($r_n=1$) on the risk and the norm of gradients never vanishes. From $r_n=1$ to $r_n = n^{-1/2}$, the effect of the bias decreases until a threshold is reached where there is no significant improvement. The convergence rate in the case $r_n = n^{-1/2}$ is then the same as in the case without bias, illustrating the fact that in this case the dominating term comes from the learning rate.

\subsection{Additional Experiments of IWAE}
\label{supp:sec:add_exp_iwae}

In this section, we provide detailed information about the experiments on CIFAR-10. We also conduct additional experiments on the FashionMNIST dataset. For all experiments, we use Adagrad, RMSProp, and Adam with a learning rate decay given by $\gamma_{n} = C_{\gamma}/\sqrt{n}$, where $C_{\gamma}=0.01$ for Adagrad and $C_{\gamma}=0.001$ for RMSProp and Adam. The momentum parameters are set to $\rho_1 = 0.9$ and $\rho_2 = 0.999$, and the regularization parameter $\delta$ is fixed at $5 \times 10^{-2}$. The impact of this regularization parameter will be illustrated later.

\textbf{Datasets. } We conduct our experiments on two datasets: FashionMNIST \citep{xiao2017fashion} and CIFAR-10. 
The FashionMNIST dataset is a variant of MNIST and consists of 28x28 pixel images of various fashion items, with 60,000 images in the training set and 10,000 images in the test set. 
CIFAR-10 consists of 32x32 pixel images categorized into 10 different classes. The dataset is divided into 60,000 images in the training set and 10,000 images in the test set.

\textbf{Models. } For FashionMNIST, we use a fully connected neural network with a single hidden layer consisting of 400 hidden units and ReLU activation functions for both the encoder and the decoder. The latent space dimension is set to 20. We use 256 images per iteration (235 iterations per epoch). For CIFAR-10 and CIFAR-100, we use a Convolutional Neural Network (CNN) architecture with 3 Convolutional layers and 2 fully connected layers with ReLU activation functions. The latent space dimension is set to 100. For both datasets, we use 256 images per iteration (196 iterations per epoch). 

We estimate the log-likelihood using the VAE, IWAE, and BR-IWAE models, all of which are trained for 100 epochs. Training is conducted using the SGD, SGD with momentum, Adagrad, RMSProp, and Adam algorithms with a decaying learning rate, as mentioned before. For SGD, we employ the clipping method to clip the gradients to prevent excessively large steps.


For this experiment, we set $k = 5$ samples in both IWAE and BR-IWAE, while restricting the maximum iteration of the MCMC algorithm to 5 and the burn-in period to 2 for BR-IWAE. 
For comparison, we estimate the Negative Log-Likelihood using these three models with SGD, SGD with momentum, Adagrad, RMSProp, and Adam, and the results are presented in Table \ref{all_algo_fmnist}. Similar to the case of CIFAR-10, we observe that IWAE outperforms VAE, while BR-IWAE outperforms IWAE by reducing bias in all cases. 
The adaptive methods surpass SGD, and momentum further improves their performances. Consequently, Adam excels among all algorithms due to its adaptive steps and momentum.

\begin{table}[h]
\caption{Comparison of Negative Log-Likelihood on the FashionMNIST Test Set (Lower is Better).}
\label{all_algo_fmnist}
\vskip 0.15in
\centering
\begin{small}
\begin{tabular}{cccccc}
\toprule
Algorithm & VAE & IWAE & BR-IWAE \\
\midrule
SGD & 247.2 & 244.9 & 244.0 \\
SGD with momentum & 244.6 & 240.2 & 238.4 \\
Adagrad & 245.8 & 241.4 & 240.5 \\
RMSProp & 242.6 & 239.3 & 237.8 \\
Adam & 240.3 & 237.8 & 236.1 \\
\bottomrule
\end{tabular}
\end{small}
\vskip -0.1in
\end{table}

Similarly, as we did in the case of CIFAR-10, we incorporate a time-dependent bias that decreases by choosing a bias of order $\mathcal{O}(n^{-\alpha})$ at iteration $n$. We vary the value of $\alpha$ for both FashionMNIST and CIFAR-100.

\begin{figure}[H]
\begin{center}
\centerline{
    \includegraphics[width=0.5\textwidth]{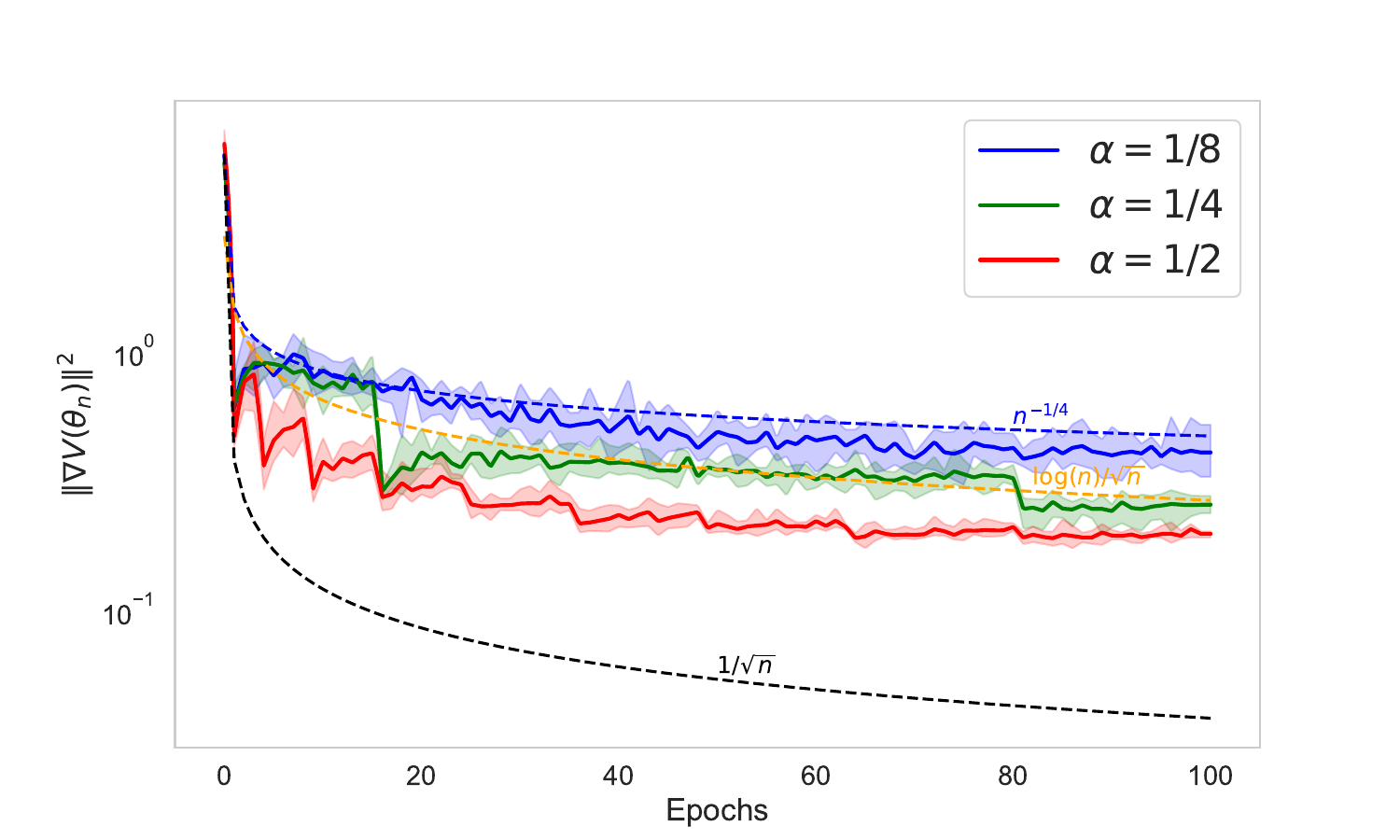}
    \includegraphics[width=0.5\textwidth]{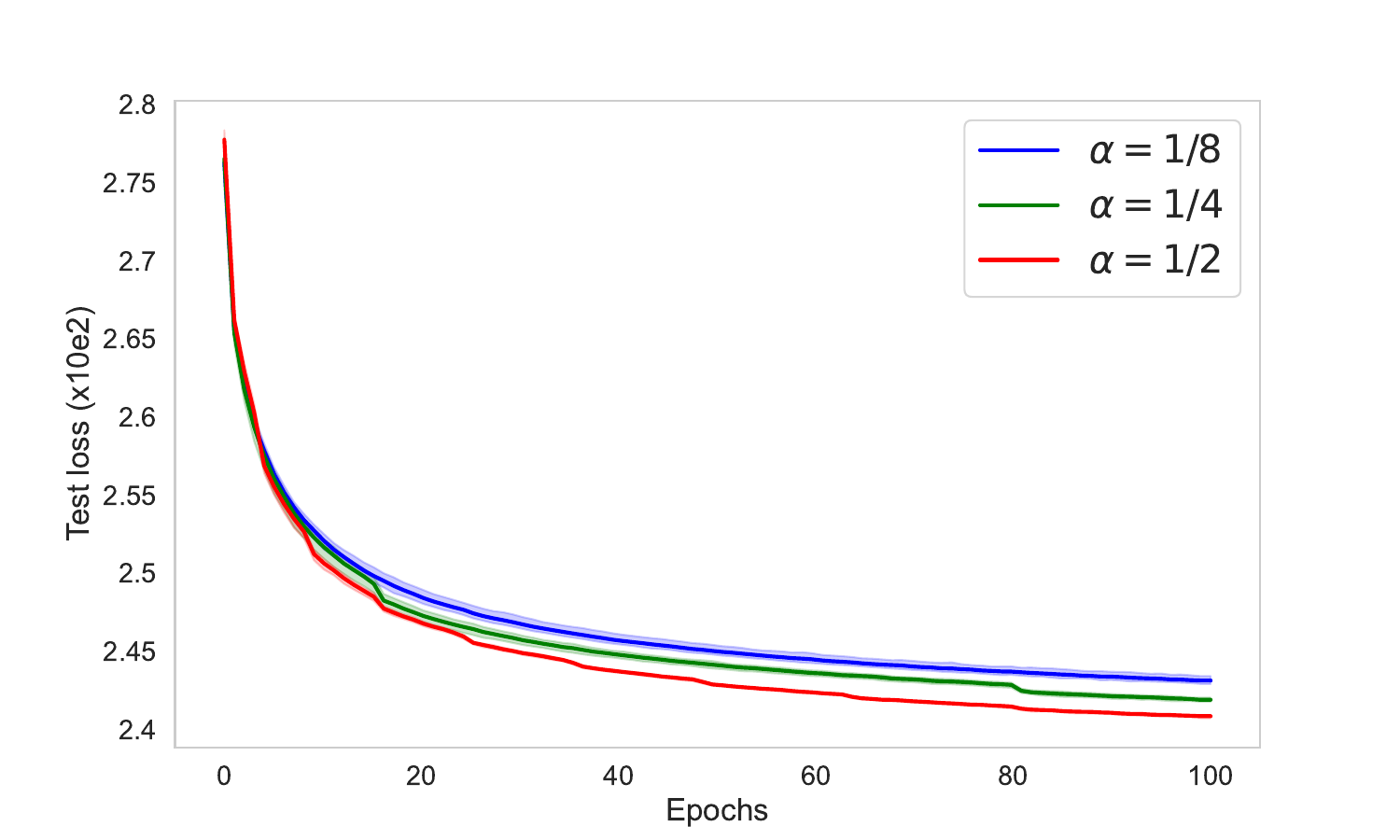}
    }
\caption{IWAE on the FashionMNIST Dataset with Adagrad for different values of $\alpha$. Bold lines represent the mean over 5 independent runs.}
\label{FMNIST2}
\end{center}
\vskip -0.2in
\end{figure}

\begin{figure}[H]
\begin{center}
\centerline{
    \includegraphics[width=0.5\textwidth]{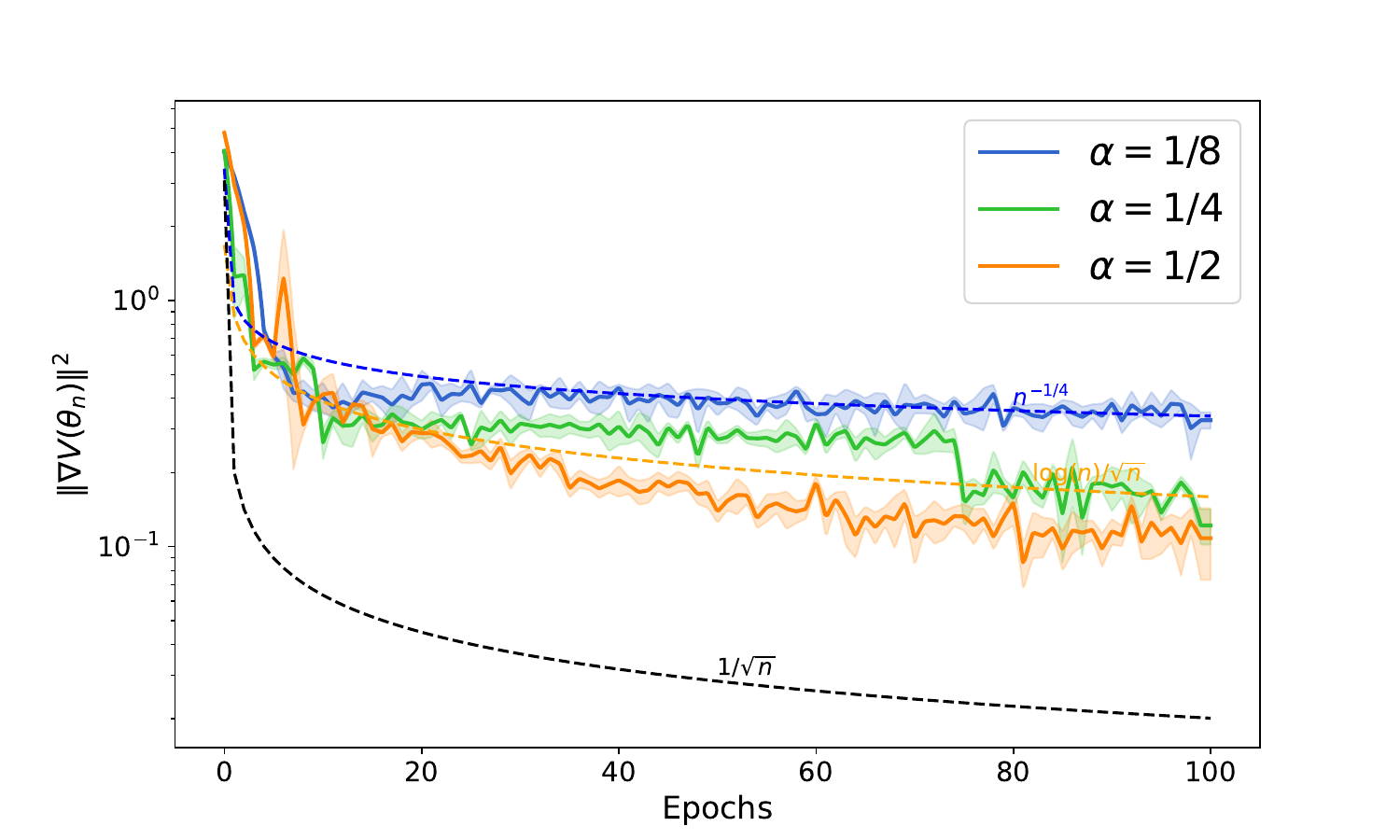}
    \includegraphics[width=0.5\textwidth]{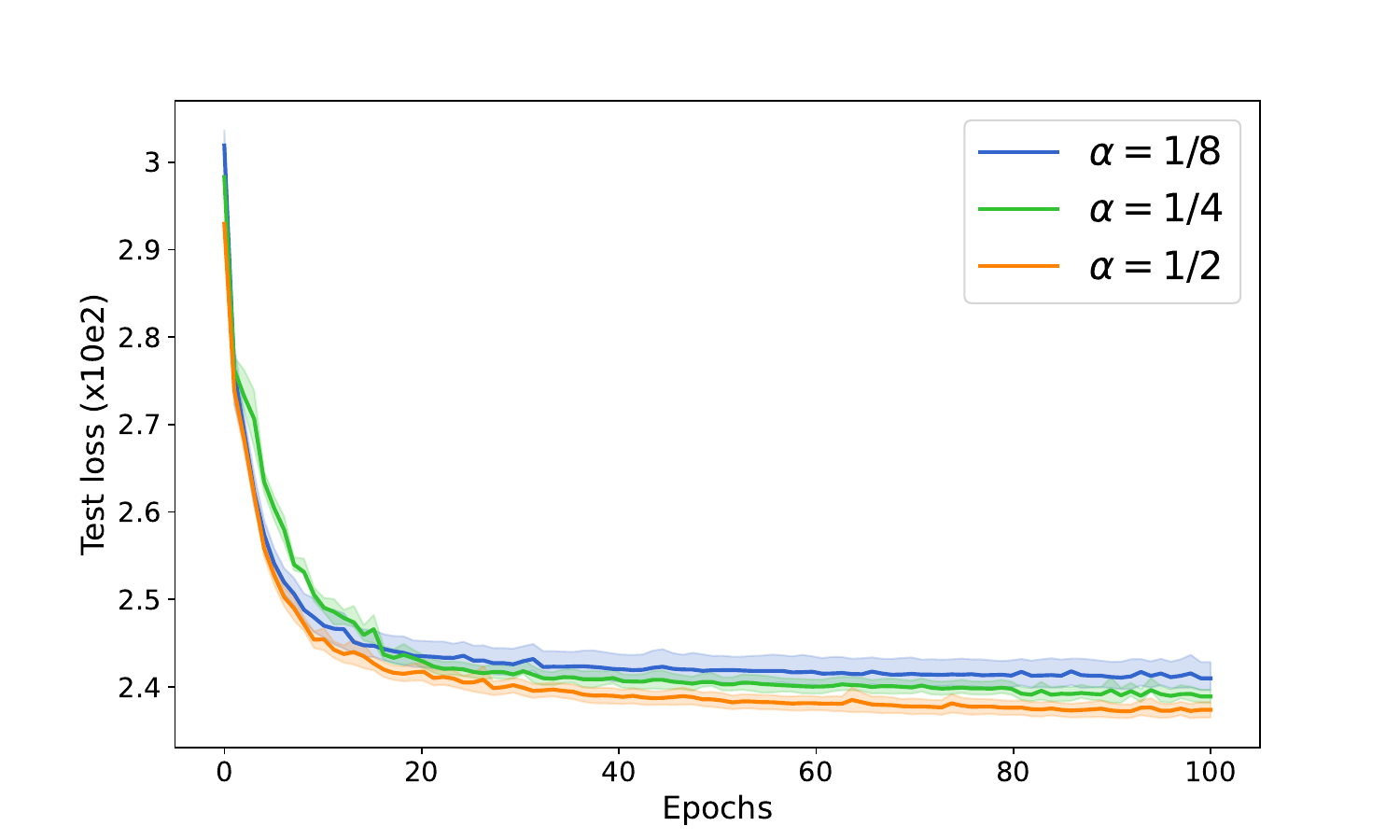}
    }
\caption{IWAE on the FashionMNIST Dataset with RMSProp for different values of $\alpha$. Bold lines represent the mean over 5 independent runs.}
\label{FMNIST_RMSProp}
\end{center}
\vskip -0.2in
\end{figure}

\begin{figure}[H]
\begin{center}
\centerline{
    \includegraphics[width=0.5\textwidth]{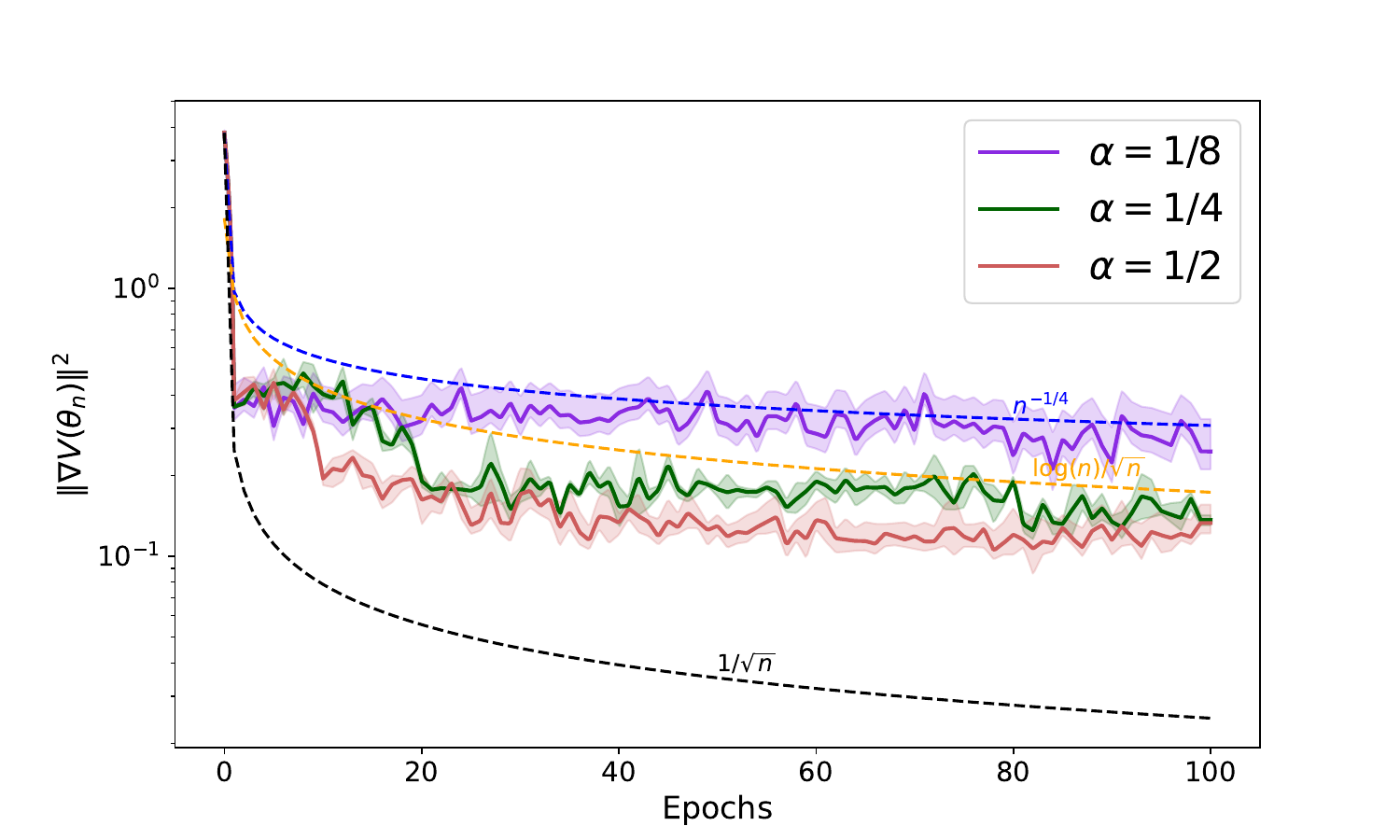}
    \includegraphics[width=0.5\textwidth]{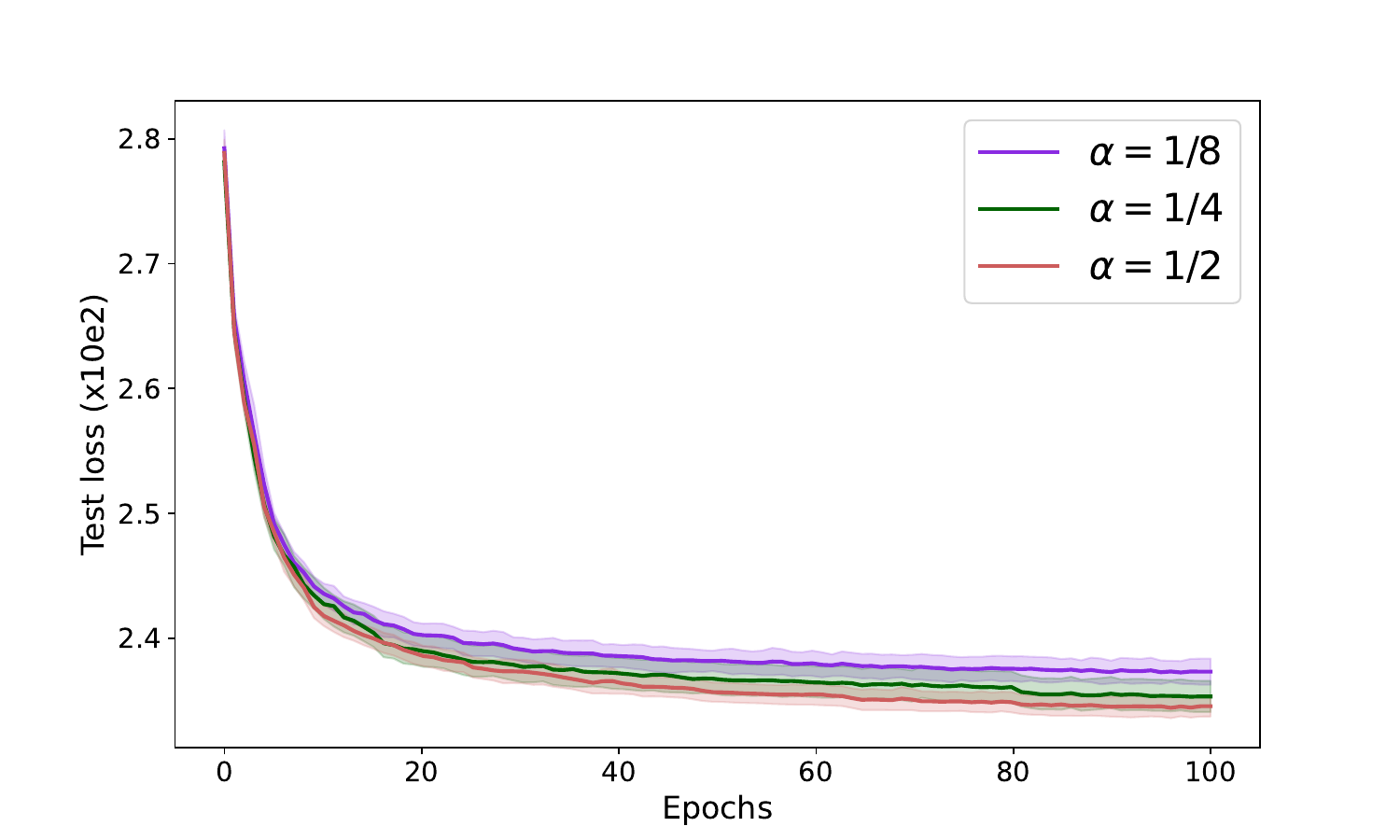}
    }
\caption{IWAE on the FashionMNIST Dataset with Adam for different values of $\alpha$. Bold lines represent the mean over 5 independent runs.}
\label{FMNIST_Adam}
\end{center}
\vskip -0.2in
\end{figure}

All figures are plotted on a logarithmic scale for better visualization and with respect to the number of epochs. The dashed curve corresponds to the expected convergence rate  $\mathcal{O}(n^{-1/4})$ for $\alpha = 1/8$, and $\mathcal{O}(\log n/\sqrt{n})$ for $\alpha = 1/4$, as well as for $\alpha = 1/2$, just as in the case of CIFAR-10. We can clearly observe that for all cases, convergence is achieved when $n$ is sufficiently large. In the case of the FashionMNIST dataset, the bound seems tight, and the convergence rate of $\mathcal{O}(n^{-1/2})$ does not seem to be possible to reach, in contrast to the case of CIFAR-10 where the curves corresponding to $\alpha = 1/4$ and $\alpha = 1/2$ approach the $\mathcal{O}(n^{-1/2})$ convergence rate.
For all figures, with a larger $\alpha$, the convergence in both the squared gradient norm and negative log-likelihood occurs more rapidly.

\textbf{Additional Experiments on CIFAR-10 Dataset. }

\begin{figure}[ht!]
\begin{center}
\centerline{\includegraphics[width=0.9\textwidth]{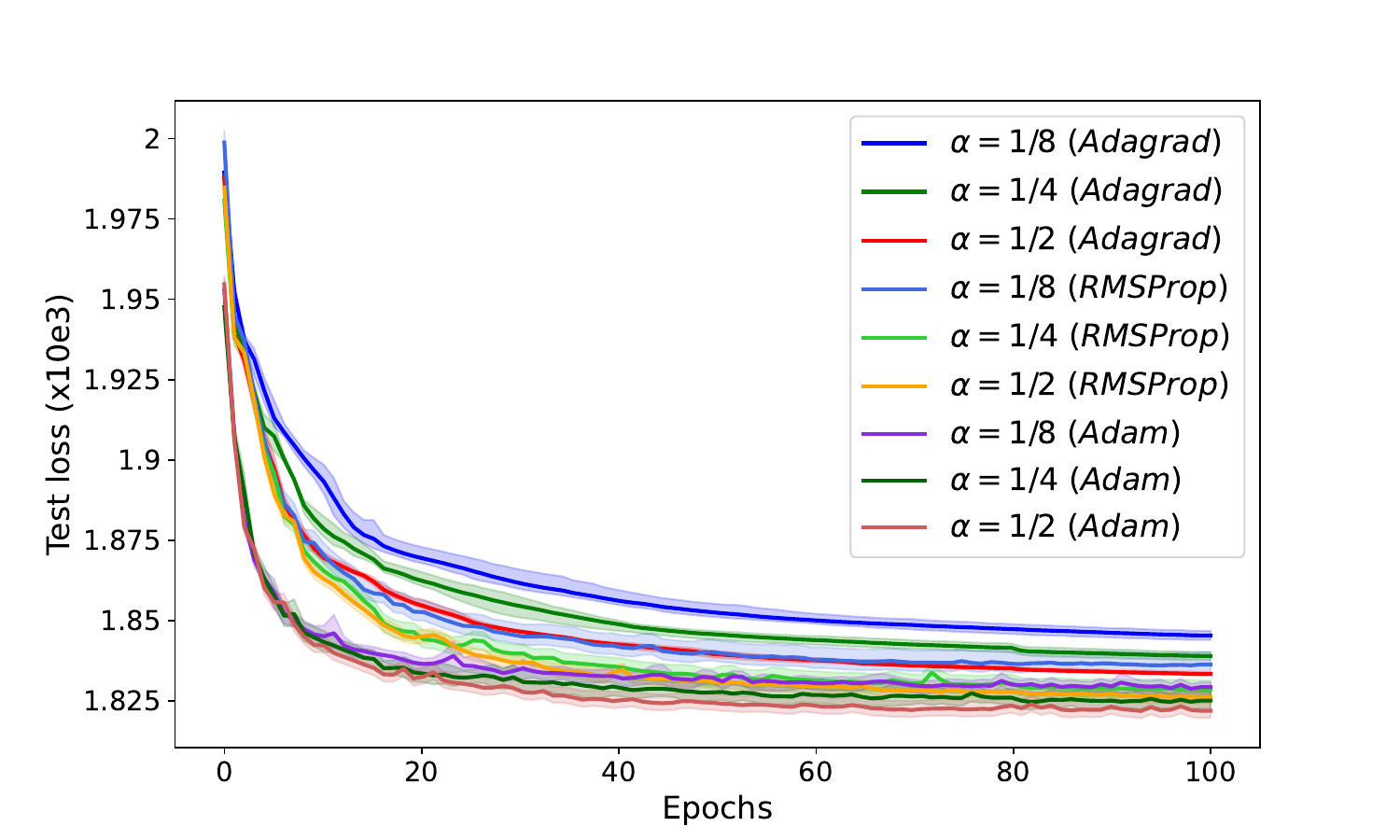}}
\caption{Negative Log-Likelihood on the test set on the CIFAR-10 Dataset for IWAE with Adagrad, RMSProp, and Adam. Bold lines represent the mean over 5 independent runs.}
\label{bias_CIFAR}
\end{center}
\vskip -0.2in
\end{figure}

\textbf{The effect of $C_{\gamma}$. }

Figure \ref{com_gamma} illustrates the convergence in both the squared gradient norm and the negative log-likelihood for $C_{\gamma} = 0.001$ and $C_{\gamma} = 0.01$ in Adagrad. 
In the case of the squared gradient norm, we have only plotted the results for $C_{\gamma} = 0.001$ for better visualization, and the plot for $C_{\gamma} = 0.01$ was already presented in Figure \ref{grad_CIFAR}. It is clear that when $C_{\gamma}$ is set to 0.001, the convergence of the negative log-likelihood is slower.
Similarly, the convergence in the squared gradient norm for $C_{\gamma} = 0.001$ achieves convergence, but it is slower compared to the case of $C_{\gamma} = 0.01$. 

\begin{figure}[H]
\begin{center}
\centerline{
    \includegraphics[width=0.5\textwidth]{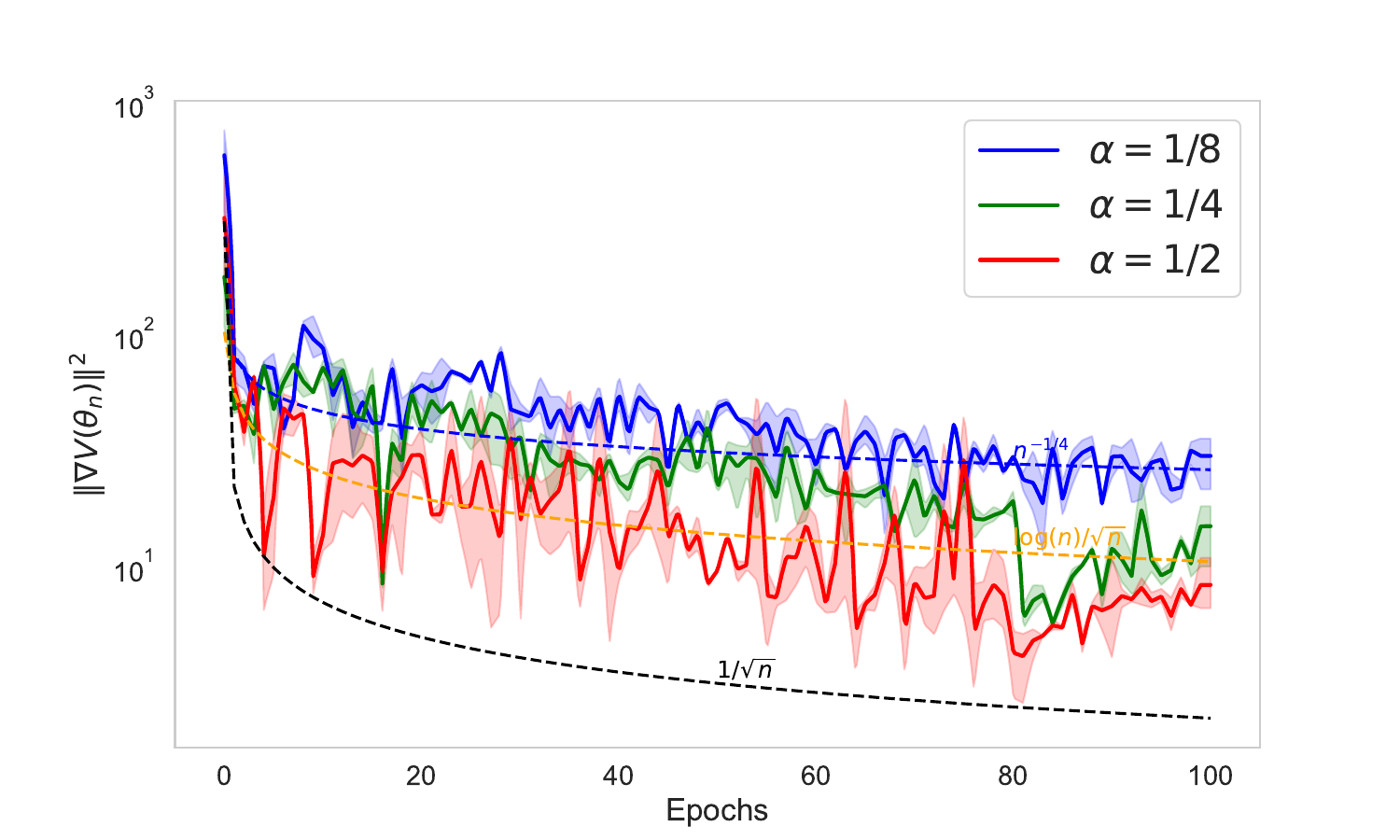}
    \includegraphics[width=0.5\textwidth]{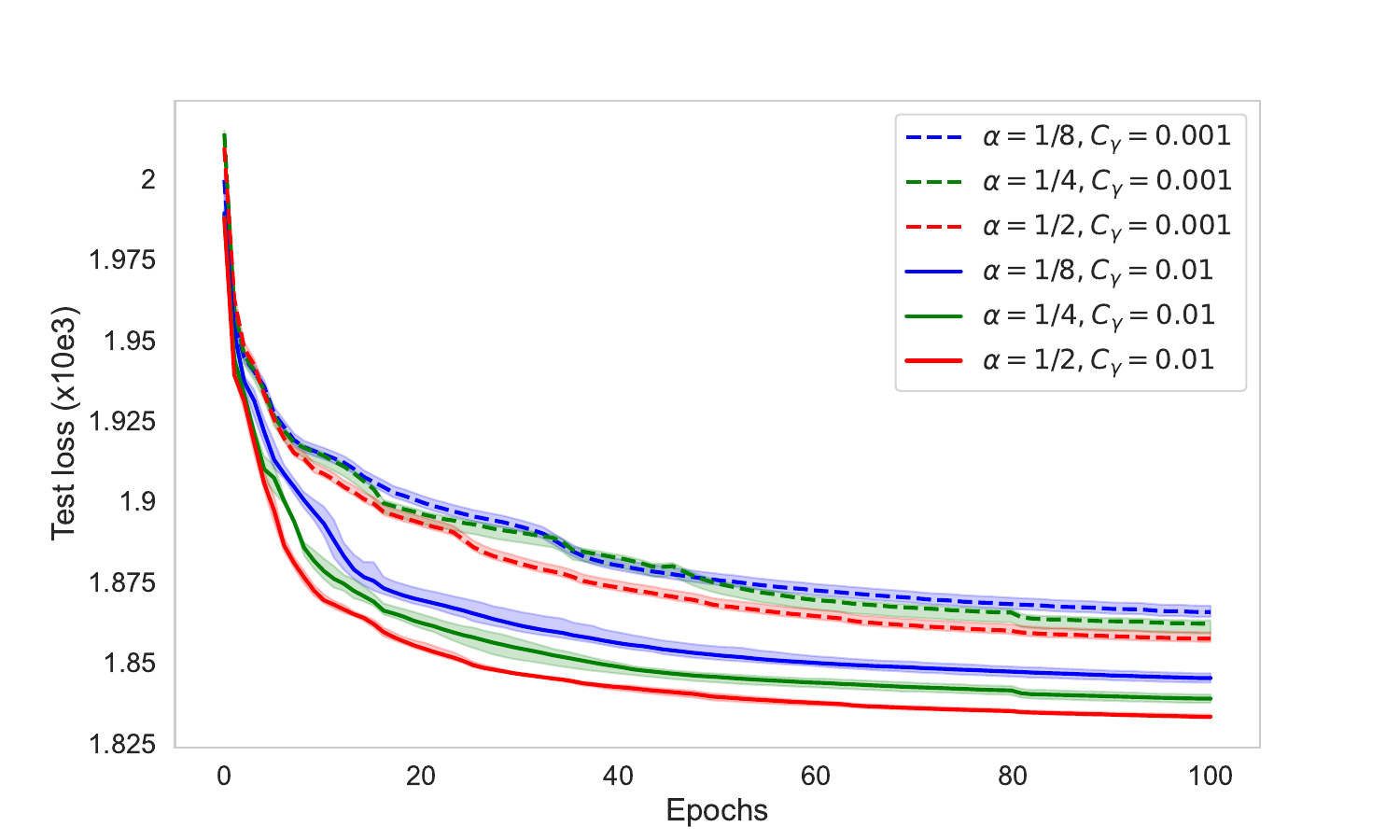}
    }
\caption{IWAE on the CIFAR-10 Dataset with Adagrad for different values of $\alpha$ and $C_{\gamma}$. Bold lines represent the mean over 5 independent runs.}
\label{com_gamma}
\end{center}
\vskip -0.2in
\end{figure}

\textbf{The Impact of regularization parameter $\delta$. }

In Section \ref{sub:regularization}, we discussed the impact of the regularization parameter $\delta$ in Adam. It has been empirically observed that the performance of adaptive methods can be sensitive to the choice of this parameter.
Here, we illustrate the impact of this regularization parameter in IWAE. To achieve this, we plot the test loss for different sets of values for $\delta \in \{10^{-8}, 10^{-5}, 10^{-3}, 10^{-2}, 5 \times 10^{-2}, 10^{-1}\}$ in Figure \ref{regularization}.

\begin{figure}[H]
\begin{center}
\centerline{
    \includegraphics[width=0.5\textwidth]{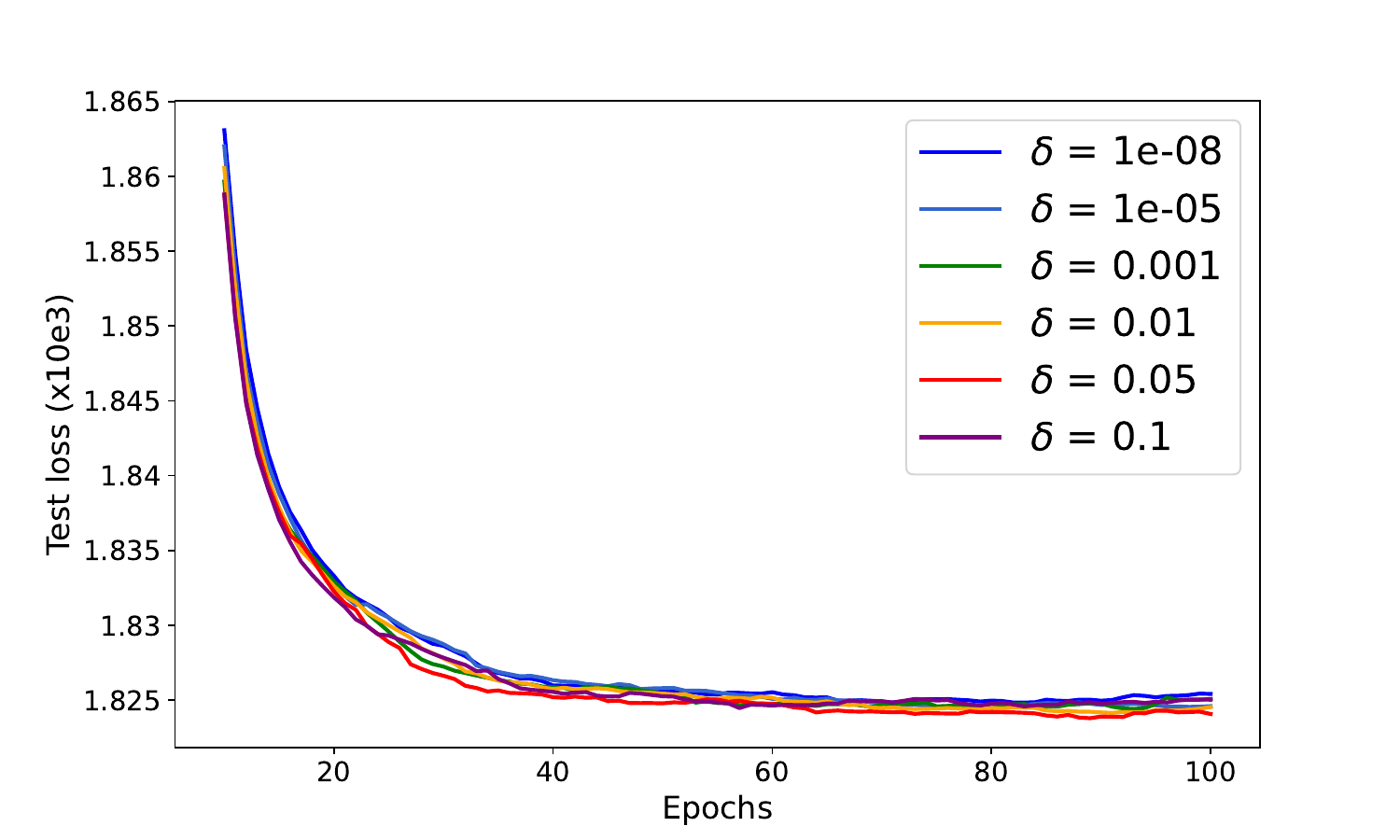}}
\caption{IWAE on the CIFAR-10 Dataset with Adam for different values of $\delta$. Lines represent the mean over 5 independent runs.}
\label{regularization}
\end{center}
\vskip -0.2in
\end{figure}

Our experimental results align with prior work \citep{zaheer2018adaptive, reddi2019convergence, tong2022calibrating}, affirming the consistent impact of $\delta$. Notably, we find that employing $\delta = 5 \times 10^{-2}$ yields improved performance in IWAE.

\textbf{The Impact of Bias over Time. }

Our experiments illustrate the negative log-likelihood with respect to epochs, and we observed that a higher value of $\alpha$ leads to faster convergence. 
The key point to consider when tuning $\alpha$ is that while convergence may be faster in terms of iterations, it may lead to higher computational costs. To illustrate this, we set a fixed time limit of 1000 seconds and tested different values of $\alpha$, plotting the test loss as a function of time in Figure \ref{time_CIFAR}. It is clear that with $\alpha = 1/8$, the convergence is always slower, whereas choosing $\alpha = 1/4$ achieves faster convergence than $\alpha = 1/2$. While the difference may seem small here, with more complex models, the disparity becomes significant. Therefore, it is essential to tune the value of $\alpha$ to attain fast convergence and reduce computational time.

\begin{figure}[ht]
\begin{center}
\centerline{
    \includegraphics[width=0.5\textwidth]
    {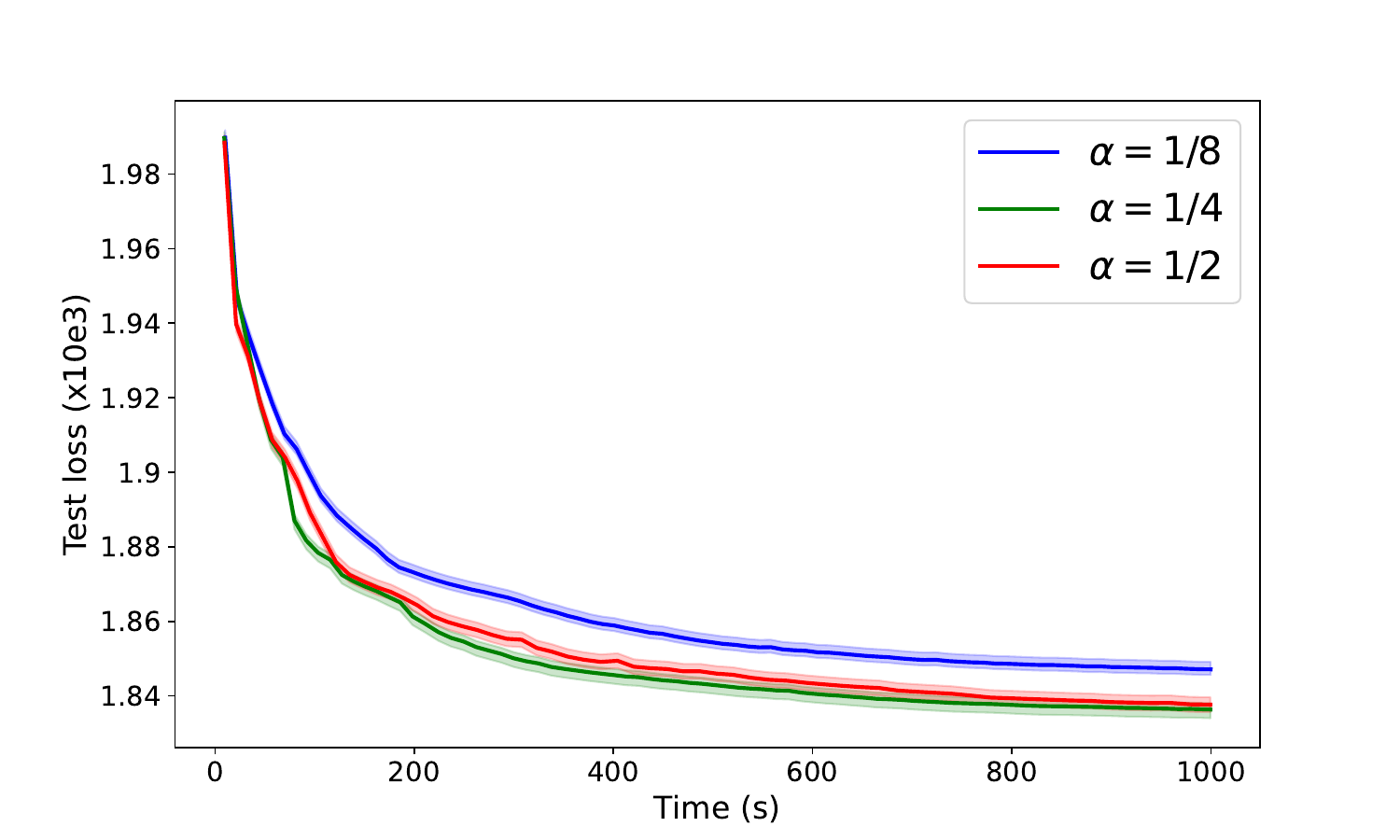}
    \includegraphics[width=0.5\textwidth]{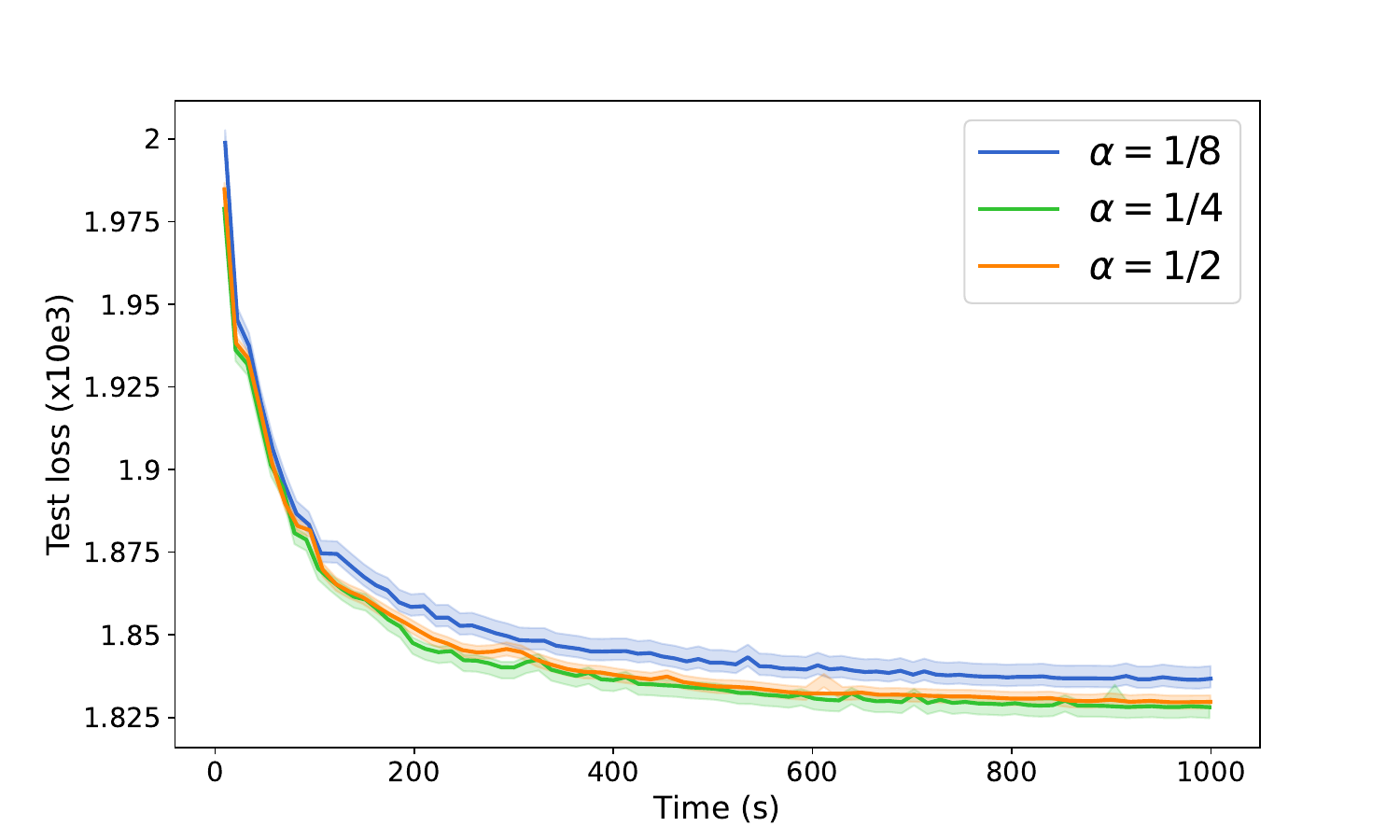}
    }
\caption{Negative Log-Likelihood on the test set of the CIFAR-10 Dataset for IWAE with Adagrad (on the left) RMSProp (on the right) for Different Values of $\alpha$ over time (in seconds). Bold lines represent the mean over 5 independent runs.}
\label{time_CIFAR}
\end{center}
\vskip -0.2in
\end{figure}

In this paper, all simulations were conducted using the Nvidia Tesla T4 GPU. The total computing hours required for the results presented in this paper are estimated to be around 100 to 200 hours of GPU usage.


\newpage
\section*{NeurIPS Paper Checklist}

\begin{enumerate}

\item {\bf Claims}
    \item[] Question: Do the main claims made in the abstract and introduction accurately reflect the paper's contributions and scope?
    \item[] Answer: \answerYes{} 
    \item[] Justification: The paper's contributions are mentioned in the abstract and clearly detailed in the introduction.
    \item[] Guidelines:
    \begin{itemize}
        \item The answer NA means that the abstract and introduction do not include the claims made in the paper.
        \item The abstract and/or introduction should clearly state the claims made, including the contributions made in the paper and important assumptions and limitations. A No or NA answer to this question will not be perceived well by the reviewers. 
        \item The claims made should match theoretical and experimental results, and reflect how much the results can be expected to generalize to other settings. 
        \item It is fine to include aspirational goals as motivation as long as it is clear that these goals are not attained by the paper. 
    \end{itemize}


\item {\bf Limitations}
    \item[] Question: Does the paper discuss the limitations of the work performed by the authors?
    \item[] Answer: \answerYes{} 
    \item[] Justification: In our paper, we discuss the limitations of our results. Furthermore, for each assumption, we also discuss where it can be verified and provide the limitations of these assumptions.
    \item[] Guidelines:
    \begin{itemize}
        \item The answer NA means that the paper has no limitation while the answer No means that the paper has limitations, but those are not discussed in the paper. 
        \item The authors are encouraged to create a separate "Limitations" section in their paper.
        \item The paper should point out any strong assumptions and how robust the results are to violations of these assumptions (e.g., independence assumptions, noiseless settings, model well-specification, asymptotic approximations only holding locally). The authors should reflect on how these assumptions might be violated in practice and what the implications would be.
        \item The authors should reflect on the scope of the claims made, e.g., if the approach was only tested on a few datasets or with a few runs. In general, empirical results often depend on implicit assumptions, which should be articulated.
        \item The authors should reflect on the factors that influence the performance of the approach. For example, a facial recognition algorithm may perform poorly when image resolution is low or images are taken in low lighting. Or a speech-to-text system might not be used reliably to provide closed captions for online lectures because it fails to handle technical jargon.
        \item The authors should discuss the computational efficiency of the proposed algorithms and how they scale with dataset size.
        \item If applicable, the authors should discuss possible limitations of their approach to address problems of privacy and fairness.
        \item While the authors might fear that complete honesty about limitations might be used by reviewers as grounds for rejection, a worse outcome might be that reviewers discover limitations that aren't acknowledged in the paper. The authors should use their best judgment and recognize that individual actions in favor of transparency play an important role in developing norms that preserve the integrity of the community. Reviewers will be specifically instructed to not penalize honesty concerning limitations.
    \end{itemize}

\item {\bf Theory Assumptions and Proofs}
    \item[] Question: For each theoretical result, does the paper provide the full set of assumptions and a complete (and correct) proof?
    \item[] Answer: \answerYes{} 
    \item[] Justification: The paper provides a comprehensive set of assumptions with discussions, along with complete proofs for each theoretical result in the Appendix.
    \item[] Guidelines:
    \begin{itemize}
        \item The answer NA means that the paper does not include theoretical results. 
        \item All the theorems, formulas, and proofs in the paper should be numbered and cross-referenced.
        \item All assumptions should be clearly stated or referenced in the statement of any theorems.
        \item The proofs can either appear in the main paper or the supplemental material, but if they appear in the supplemental material, the authors are encouraged to provide a short proof sketch to provide intuition. 
        \item Inversely, any informal proof provided in the core of the paper should be complemented by formal proofs provided in appendix or supplemental material.
        \item Theorems and Lemmas that the proof relies upon should be properly referenced. 
    \end{itemize}

    \item {\bf Experimental Result Reproducibility}
    \item[] Question: Does the paper fully disclose all the information needed to reproduce the main experimental results of the paper to the extent that it affects the main claims and/or conclusions of the paper (regardless of whether the code and data are provided or not)?
    \item[] Answer: \answerYes{} 
    \item[] Justification: Our work is primarily theoretical, and the experiments are conducted to illustrate our results. The paper provides detailed descriptions of the experimental setup, parameters, and methodologies, ensuring that the main results can be reproduced and verified.
    \item[] Guidelines:
    \begin{itemize}
        \item The answer NA means that the paper does not include experiments.
        \item If the paper includes experiments, a No answer to this question will not be perceived well by the reviewers: Making the paper reproducible is important, regardless of whether the code and data are provided or not.
        \item If the contribution is a dataset and/or model, the authors should describe the steps taken to make their results reproducible or verifiable. 
        \item Depending on the contribution, reproducibility can be accomplished in various ways. For example, if the contribution is a novel architecture, describing the architecture fully might suffice, or if the contribution is a specific model and empirical evaluation, it may be necessary to either make it possible for others to replicate the model with the same dataset, or provide access to the model. In general. releasing code and data is often one good way to accomplish this, but reproducibility can also be provided via detailed instructions for how to replicate the results, access to a hosted model (e.g., in the case of a large language model), releasing of a model checkpoint, or other means that are appropriate to the research performed.
        \item While NeurIPS does not require releasing code, the conference does require all submissions to provide some reasonable avenue for reproducibility, which may depend on the nature of the contribution. For example
        \begin{enumerate}
            \item If the contribution is primarily a new algorithm, the paper should make it clear how to reproduce that algorithm.
            \item If the contribution is primarily a new model architecture, the paper should describe the architecture clearly and fully.
            \item If the contribution is a new model (e.g., a large language model), then there should either be a way to access this model for reproducing the results or a way to reproduce the model (e.g., with an open-source dataset or instructions for how to construct the dataset).
            \item We recognize that reproducibility may be tricky in some cases, in which case authors are welcome to describe the particular way they provide for reproducibility. In the case of closed-source models, it may be that access to the model is limited in some way (e.g., to registered users), but it should be possible for other researchers to have some path to reproducing or verifying the results.
        \end{enumerate}
    \end{itemize}

\item {\bf Open access to data and code}
    \item[] Question: Does the paper provide open access to the data and code, with sufficient instructions to faithfully reproduce the main experimental results, as described in supplemental material?
    \item[] Answer: \answerYes{} 
    \item[] Justification: The code supporting our experiments is available on GitHub, as referenced in the Experiments section. All data used in our study is publicly accessible (CIFAR dataset).
    \item[] Guidelines:
    \begin{itemize}
        \item The answer NA means that paper does not include experiments requiring code.
        \item Please see the NeurIPS code and data submission guidelines (\url{https://nips.cc/public/guides/CodeSubmissionPolicy}) for more details.
        \item While we encourage the release of code and data, we understand that this might not be possible, so “No” is an acceptable answer. Papers cannot be rejected simply for not including code, unless this is central to the contribution (e.g., for a new open-source benchmark).
        \item The instructions should contain the exact command and environment needed to run to reproduce the results. See the NeurIPS code and data submission guidelines (\url{https://nips.cc/public/guides/CodeSubmissionPolicy}) for more details.
        \item The authors should provide instructions on data access and preparation, including how to access the raw data, preprocessed data, intermediate data, and generated data, etc.
        \item The authors should provide scripts to reproduce all experimental results for the new proposed method and baselines. If only a subset of experiments are reproducible, they should state which ones are omitted from the script and why.
        \item At submission time, to preserve anonymity, the authors should release anonymized versions (if applicable).
        \item Providing as much information as possible in supplemental material (appended to the paper) is recommended, but including URLs to data and code is permitted.
    \end{itemize}

\item {\bf Experimental Setting/Details}
    \item[] Question: Does the paper specify all the training and test details (e.g., data splits, hyperparameters, how they were chosen, type of optimizer, etc.) necessary to understand the results?
    \item[] Answer: \answerYes{} 
    \item[] Justification: The paper provides comprehensive details on the experimental setup, including hyperparameters, all optimizer algorithms, and other relevant algorithms with pseudo code, ensuring the reproducibility of the results.
    \item[] Guidelines:
    \begin{itemize}
        \item The answer NA means that the paper does not include experiments.
        \item The experimental setting should be presented in the core of the paper to a level of detail that is necessary to appreciate the results and make sense of them.
        \item The full details can be provided either with the code, in appendix, or as supplemental material.
    \end{itemize}

\item {\bf Experiment Statistical Significance}
    \item[] Question: Does the paper report error bars suitably and correctly defined or other appropriate information about the statistical significance of the experiments?
    \item[] Answer: \answerYes{} 
    \item[] Justification: All algorithms used to illustrate the results are run 5 times, and the mean and significance of all results are plotted, ensuring an appropriate representation of statistical significance.
    \item[] Guidelines:
    \begin{itemize}
        \item The answer NA means that the paper does not include experiments.
        \item The authors should answer "Yes" if the results are accompanied by error bars, confidence intervals, or statistical significance tests, at least for the experiments that support the main claims of the paper.
        \item The factors of variability that the error bars are capturing should be clearly stated (for example, train/test split, initialization, random drawing of some parameter, or overall run with given experimental conditions).
        \item The method for calculating the error bars should be explained (closed form formula, call to a library function, bootstrap, etc.)
        \item The assumptions made should be given (e.g., Normally distributed errors).
        \item It should be clear whether the error bar is the standard deviation or the standard error of the mean.
        \item It is OK to report 1-sigma error bars, but one should state it. The authors should preferably report a 2-sigma error bar than state that they have a 96\% CI, if the hypothesis of Normality of errors is not verified.
        \item For asymmetric distributions, the authors should be careful not to show in tables or figures symmetric error bars that would yield results that are out of range (e.g. negative error rates).
        \item If error bars are reported in tables or plots, The authors should explain in the text how they were calculated and reference the corresponding figures or tables in the text.
    \end{itemize}

\item {\bf Experiments Compute Resources}
    \item[] Question: For each experiment, does the paper provide sufficient information on the computer resources (type of compute workers, memory, time of execution) needed to reproduce the experiments?
    \item[] Answer: \answerYes{} 
    \item[] Justification: The paper provides detailed information on the computer resources required for all experiments, including the type of compute workers (CPU or GPU), and approximate time of execution.
    \item[] Guidelines:
    \begin{itemize}
        \item The answer NA means that the paper does not include experiments.
        \item The paper should indicate the type of compute workers CPU or GPU, internal cluster, or cloud provider, including relevant memory and storage.
        \item The paper should provide the amount of compute required for each of the individual experimental runs as well as estimate the total compute. 
        \item The paper should disclose whether the full research project required more compute than the experiments reported in the paper (e.g., preliminary or failed experiments that didn't make it into the paper). 
    \end{itemize}
    
\item {\bf Code Of Ethics}
    \item[] Question: Does the research conducted in the paper conform, in every respect, with the NeurIPS Code of Ethics \url{https://neurips.cc/public/EthicsGuidelines}?
    \item[] Answer: \answerYes{} 
    \item[] Justification: The research conducted in the paper aligns with the NeurIPS Code of Ethics, ensuring adherence to ethical guidelines throughout the study.
    \item[] Guidelines:
    \begin{itemize}
        \item The answer NA means that the authors have not reviewed the NeurIPS Code of Ethics.
        \item If the authors answer No, they should explain the special circumstances that require a deviation from the Code of Ethics.
        \item The authors should make sure to preserve anonymity (e.g., if there is a special consideration due to laws or regulations in their jurisdiction).
    \end{itemize}

\item {\bf Broader Impacts}
    \item[] Question: Does the paper discuss both potential positive societal impacts and negative societal impacts of the work performed?
    \item[] Answer: \answerNA{} 
    \item[] Justification: Given that the paper is primarily theoretical and the experiments are conducted solely for illustrative purposes, the work does not involve significant broader societal impacts to discuss. The focus of the paper is on theoretical contributions rather than practical applications with societal implications.
    \item[] Guidelines:
    \begin{itemize}
        \item The answer NA means that there is no societal impact of the work performed.
        \item If the authors answer NA or No, they should explain why their work has no societal impact or why the paper does not address societal impact.
        \item Examples of negative societal impacts include potential malicious or unintended uses (e.g., disinformation, generating fake profiles, surveillance), fairness considerations (e.g., deployment of technologies that could make decisions that unfairly impact specific groups), privacy considerations, and security considerations.
        \item The conference expects that many papers will be foundational research and not tied to particular applications, let alone deployments. However, if there is a direct path to any negative applications, the authors should point it out. For example, it is legitimate to point out that an improvement in the quality of generative models could be used to generate deepfakes for disinformation. On the other hand, it is not needed to point out that a generic algorithm for optimizing neural networks could enable people to train models that generate Deepfakes faster.
        \item The authors should consider possible harms that could arise when the technology is being used as intended and functioning correctly, harms that could arise when the technology is being used as intended but gives incorrect results, and harms following from (intentional or unintentional) misuse of the technology.
        \item If there are negative societal impacts, the authors could also discuss possible mitigation strategies (e.g., gated release of models, providing defenses in addition to attacks, mechanisms for monitoring misuse, mechanisms to monitor how a system learns from feedback over time, improving the efficiency and accessibility of ML).
    \end{itemize}
    
\item {\bf Safeguards}
    \item[] Question: Does the paper describe safeguards that have been put in place for responsible release of data or models that have a high risk for misuse (e.g., pretrained language models, image generators, or scraped datasets)?
    \item[] Answer: \answerNA{} 
    \item[] Justification: The paper is theoretical and does not involve the release of data or models that have a high risk for misuse. Therefore, no safeguards were necessary.
    \item[] Guidelines:
    \begin{itemize}
        \item The answer NA means that the paper poses no such risks.
        \item Released models that have a high risk for misuse or dual-use should be released with necessary safeguards to allow for controlled use of the model, for example by requiring that users adhere to usage guidelines or restrictions to access the model or implementing safety filters. 
        \item Datasets that have been scraped from the Internet could pose safety risks. The authors should describe how they avoided releasing unsafe images.
        \item We recognize that providing effective safeguards is challenging, and many papers do not require this, but we encourage authors to take this into account and make a best faith effort.
    \end{itemize}

\item {\bf Licenses for existing assets}
    \item[] Question: Are the creators or original owners of assets (e.g., code, data, models), used in the paper, properly credited and are the license and terms of use explicitly mentioned and properly respected?
    \item[] Answer: \answerNA{} 
    \item[] Justification: The paper does not use existing assets; therefore, no creators or original owners need to be credited, and no license or terms of use need to be mentioned or respected.
    \item[] Guidelines:
    \begin{itemize}
        \item The answer NA means that the paper does not use existing assets.
        \item The authors should cite the original paper that produced the code package or dataset.
        \item The authors should state which version of the asset is used and, if possible, include a URL.
        \item The name of the license (e.g., CC-BY 4.0) should be included for each asset.
        \item For scraped data from a particular source (e.g., website), the copyright and terms of service of that source should be provided.
        \item If assets are released, the license, copyright information, and terms of use in the package should be provided. For popular datasets, \url{paperswithcode.com/datasets} has curated licenses for some datasets. Their licensing guide can help determine the license of a dataset.
        \item For existing datasets that are re-packaged, both the original license and the license of the derived asset (if it has changed) should be provided.
        \item If this information is not available online, the authors are encouraged to reach out to the asset's creators.
    \end{itemize}

\item {\bf New Assets}
    \item[] Question: Are new assets introduced in the paper well documented and is the documentation provided alongside the assets?
    \item[] Answer: \answerNA{} 
    \item[] Justification: The paper does not introduce new assets; therefore, there is no documentation provided alongside any assets.
    \item[] Guidelines:
    \begin{itemize}
        \item The answer NA means that the paper does not release new assets.
        \item Researchers should communicate the details of the dataset/code/model as part of their submissions via structured templates. This includes details about training, license, limitations, etc. 
        \item The paper should discuss whether and how consent was obtained from people whose asset is used.
        \item At submission time, remember to anonymize your assets (if applicable). You can either create an anonymized URL or include an anonymized zip file.
    \end{itemize}

\item {\bf Crowdsourcing and Research with Human Subjects}
    \item[] Question: For crowdsourcing experiments and research with human subjects, does the paper include the full text of instructions given to participants and screenshots, if applicable, as well as details about compensation (if any)? 
    \item[] Answer: \answerNA{} 
    \item[] Justification: The paper does not involve crowdsourcing experiments or research with human subjects, so there are no instructions provided to participants or details about compensation.
    \item[] Guidelines:
    \begin{itemize}
        \item The answer NA means that the paper does not involve crowdsourcing nor research with human subjects.
        \item Including this information in the supplemental material is fine, but if the main contribution of the paper involves human subjects, then as much detail as possible should be included in the main paper. 
        \item According to the NeurIPS Code of Ethics, workers involved in data collection, curation, or other labor should be paid at least the minimum wage in the country of the data collector. 
    \end{itemize}

\item {\bf Institutional Review Board (IRB) Approvals or Equivalent for Research with Human Subjects}
    \item[] Question: Does the paper describe potential risks incurred by study participants, whether such risks were disclosed to the subjects, and whether Institutional Review Board (IRB) approvals (or an equivalent approval/review based on the requirements of your country or institution) were obtained?
    \item[] Answer: \answerNA{} 
    \item[] Justification: The paper does not involve research with human subjects; thus, there are no potential risks, disclosure of risks, or IRB approvals to describe.
    \item[] Guidelines:
    \begin{itemize}
        \item The answer NA means that the paper does not involve crowdsourcing nor research with human subjects.
        \item Depending on the country in which research is conducted, IRB approval (or equivalent) may be required for any human subjects research. If you obtained IRB approval, you should clearly state this in the paper. 
        \item We recognize that the procedures for this may vary significantly between institutions and locations, and we expect authors to adhere to the NeurIPS Code of Ethics and the guidelines for their institution. 
        \item For initial submissions, do not include any information that would break anonymity (if applicable), such as the institution conducting the review.
    \end{itemize}

\end{enumerate}

\end{document}